\definecolor{light-gray}{HTML}{F7F7F7}
\definecolor{frame-color}{HTML}{CFCFCF}
\newtheorem{theorem}{Theorem} 
\newtheorem{prop}{Proposition}
\newtheorem{definition}{Definition}
\def\Ical{\mathcal{I}}
\def\R{\mathbb{R}}
\def\E{\mathbb{E}}
\newcommand{\Y}{\mathcal{Y}}
\newcommand{\X}{\mathcal{X}}
\newcommand{\T}{\mathcal{T}}
\newcommand{\C}{\mathcal{C}}
\newcommand{\lhat}{\hat{\lambda}}
\newcommand{\Lhat}{\widehat{\Lambda}}
\newcommand{\Rhat}{\widehat{R}}
\newcommand{\ind}[1]{\mathbbm{1}\left\{#1\right\}}
\newcommand{\Hlam}{\mathcal{H}_ {\lambda}}
\renewcommand{\P}{\mathbb{P}}
\newcommand{\Clam}{\mathcal{C}_{\lambda}}
\newcommand{\Clamhat}{\mathcal{C}_{\hat{\lambda}}}
\newcommand{\Tlam}{\mathcal{T}_{\lambda}}
\newcommand{\quantile}{\mathrm{Quantile}}
\newcommand{\jupyter}[1]{\href{#1}{\begingroup
\setbox0=\hbox{\includegraphics[height=1.5em]{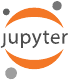}}%
\parbox{\wd0}{\box0}\endgroup}}
\def\blfootnote{\xdef\@thefnmark{}\@footnotetext}
\title{A Gentle Introduction to Conformal Prediction and Distribution-Free Uncertainty Quantification}
\author{Anastasios N. Angelopoulos and Stephen Bates}
\date{\vspace{-0.3cm} {\small \today} \vspace{-0.2cm}}
\begin{document}

\maketitle

\begin{abstract}
Black-box machine learning models are now routinely used in high-risk settings, like medical diagnostics, which demand uncertainty quantification to avoid consequential model failures.
Conformal prediction (a.k.a. conformal inference) is a user-friendly paradigm for creating statistically rigorous uncertainty sets/intervals for the predictions of such models.
Critically, the sets are valid in a \emph{distribution-free} sense: they possess explicit, non-asymptotic guarantees even without distributional assumptions or model assumptions.
One can use conformal prediction with any pre-trained model, such as a neural network, to produce sets that are guaranteed to contain the ground truth with a user-specified probability, such as $90\%$.
It is easy-to-understand, easy-to-use, and general, applying naturally to problems arising in the fields of computer vision, natural language processing, deep reinforcement learning, and so on.

This hands-on introduction is aimed to provide the reader a working understanding of conformal prediction and related distribution-free uncertainty quantification techniques with one self-contained document.
We lead the reader through practical theory for and examples of conformal prediction and describe its extensions to complex machine learning tasks involving structured outputs, distribution shift, time-series, outliers, models that abstain, and more.
Throughout, there are many explanatory illustrations, examples, and code samples in Python.
With each code sample comes a Jupyter notebook implementing the method on a real-data example; the notebooks can be accessed and easily run by clicking on the following icons: \jupyter{https://github.com/aangelopoulos/conformal-prediction}.
\end{abstract}

\newpage
\tableofcontents

\newpage
\section{Conformal Prediction}
\label{sec:conformal}
\vspace{-0.4cm}

\begin{figure}[t]
\centering
    \includegraphics[width=5in]{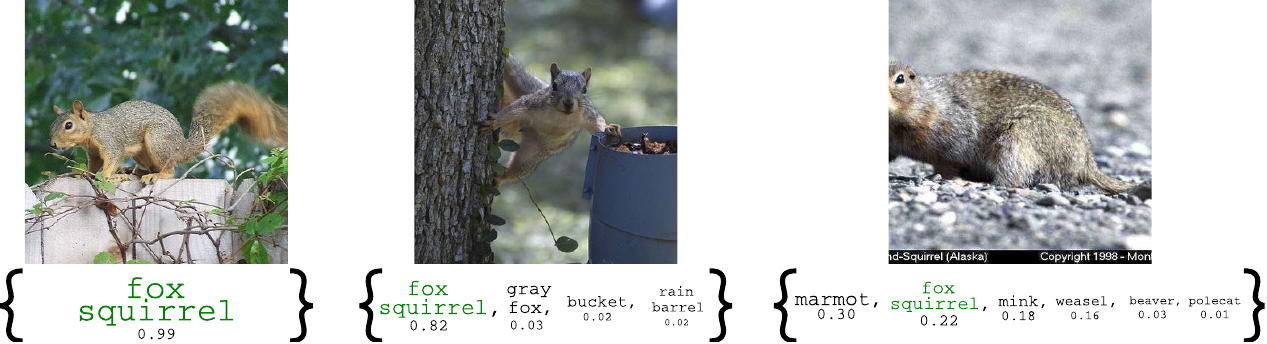}
    \caption{\textbf{Prediction set examples on Imagenet.} We show three progressively more difficult examples of the class \texttt{fox squirrel} and the prediction sets (i.e., $\C(X_{\rm test})$) generated by conformal prediction.}
    \label{fig:setfigure}
\end{figure}

Conformal prediction~\cite{vovk2005algorithmic, papadopoulos2002inductive, lei2014distribution} (a.k.a. conformal inference) is a straightforward way to generate prediction sets for any model.
We will introduce it with a short, pragmatic image classification example, and follow up in later paragraphs with a general explanation. The high-level outline of conformal prediction is as follows. First, we begin with a fitted predicted model (such as a neural network classifier) which we will call $\hat{f}$. Then, we will create prediction sets (a set of possible labels) for this classifier using a small amount of additional \emph{calibration data}---we will sometimes call this the \emph{calibration step}. 

Formally, suppose we have images as input and they each contain one of $K$ classes. 
We begin with a classifier that outputs estimated probabilities (softmax scores) for each class: $\hat{f}(x) \in  [0,1]^K$. 
Then, we reserve a moderate number (e.g., 500) of fresh i.i.d. pairs of images and classes unseen during training, $(X_1,Y_1),\dots,(X_n,Y_n)$, for use as calibration data. 
Using $\hat{f}$ and the calibration data, we seek to construct a \emph{prediction set} of possible labels $\C(X_{\rm test}) \subset \{1,\dots,K\}$ that is valid in the following sense:
\begin{equation}
    \label{eq:coverage}
    1-\alpha \leq \P(Y_{\rm test} \in \C(X_{\rm test})) \leq 1-\alpha+\frac{1}{n+1},
\end{equation}
where $(X_{\rm test}, Y_{\rm test})$ is a fresh test point from the same distribution, and $\alpha \in [0,1]$ is a user-chosen error rate. 
In words, the probability that the prediction set contains the correct label is almost exactly $1-\alpha$; we call this property \emph{marginal coverage}, since the probability is marginal (averaged) over the randomness in the calibration and test points.
See Figure~\ref{fig:setfigure} for examples of prediction sets on the Imagenet dataset.

\begin{figure}[b]
    %\hspace{-1.7cm}
    \begin{minipage}{0.4\textwidth}
        \includegraphics[align=c,width=\textwidth]{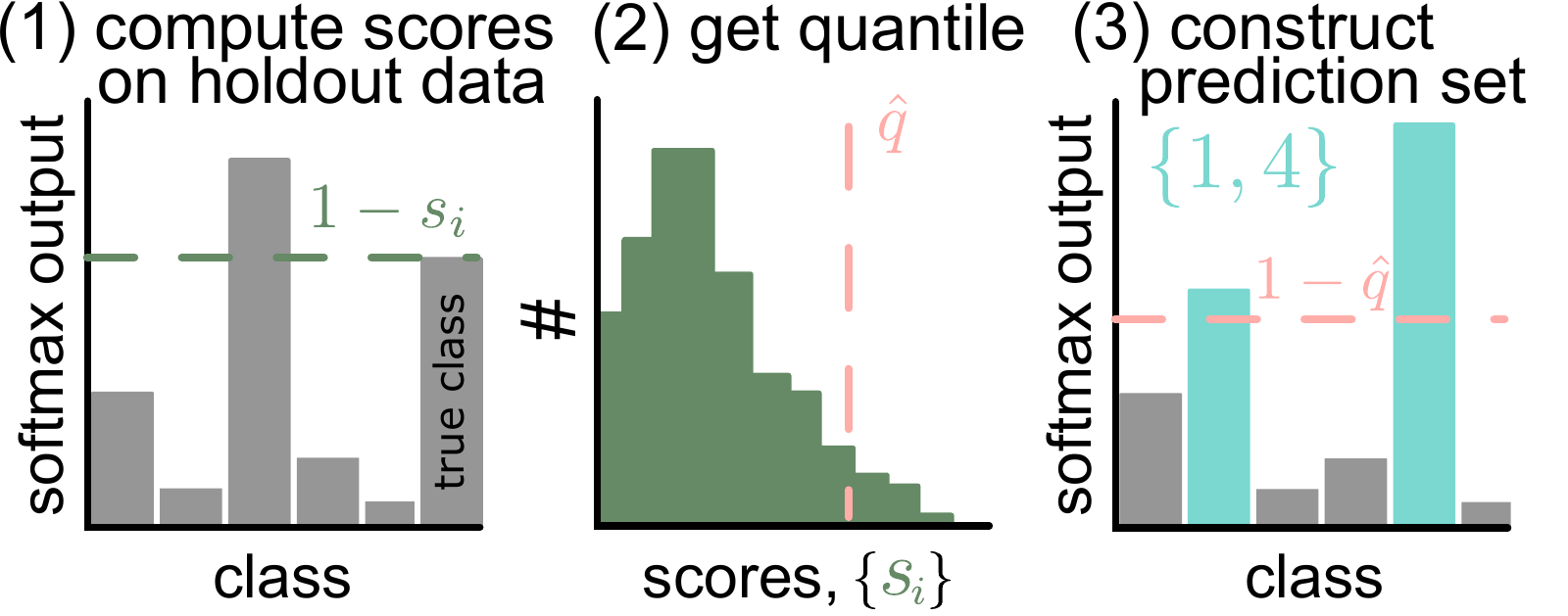}
    \end{minipage}
    \begin{minipage}{0.6\textwidth}
        \begin{minted}[fontsize=\footnotesize]{python}
# 1: get conformal scores. n = calib_Y.shape[0]
cal_smx = model(calib_X).softmax(dim=1).numpy()
cal_scores = 1-cal_smx[np.arange(n),cal_labels]
# 2: get adjusted quantile
q_level = np.ceil((n+1)*(1-alpha))/n
qhat = np.quantile(cal_scores, q_level, method='higher')
val_smx = model(val_X).softmax(dim=1).numpy()
prediction_sets = val_smx >= (1-qhat) # 3: form prediction sets
        \end{minted}
    \end{minipage}
    \caption{\textbf{Illustration of conformal prediction} with matching Python code. \jupyter{https://github.com/aangelopoulos/conformal-prediction/blob/main/notebooks/imagenet-smallest-sets.ipynb}}
    \label{fig:teaser}
\end{figure}

To construct $\C$ from $\hat{f}$ and the calibration data, we will perform a simple calibration step that requires only a few lines of code; see the right panel of Figure~\ref{fig:teaser}. 
We now describe the calibration step in more detail, introducing some terms that will be helpful later on. 
First, we set the \emph{conformal score} $s_i=1-\hat{f}(X_i)_{Y_i}$ to be one minus the softmax output of the true class. 
The score is high when the softmax output of the true class is low, i.e., when the model is badly wrong.
Next comes the critical step: define $\hat{q}$ to be the $\lceil(n+1)(1-\alpha)\rceil/n$ empirical quantile of $s_1,...,s_n$, where $\lceil \cdot \rceil$ is the ceiling function
($\hat{q}$ is essentially the $1-\alpha$ quantile, but with a small correction). 
Finally, for a new test data point (where $X_{\rm test}$ is known but $Y_{\rm test}$ is not), create a prediction set $\C(X_{\rm test})=\{y : \hat{f}(X_{\rm test})_y \geq 1-\hat{q}\}$ that includes all classes with a high enough softmax output (see Figure~\ref{fig:teaser}).
Remarkably, this algorithm gives prediction sets that are guaranteed to satisfy~\eqref{eq:coverage}, no matter what (possibly incorrect) model is used or what the (unknown) distribution of the data is.

\subsubsection*{Remarks}

Let us think about the interpretation of $\C$.
The function $\C$ is \emph{set-valued}---it takes in an image, and it outputs a set of classes as in Figure~\ref{fig:setfigure}.
The model's softmax outputs help to generate the set.
This method constructs a different output set \emph{adaptively to each particular input}.
The sets become larger when the model is uncertain or the image is intrinsically hard.
This is a property we want, because the size of the set gives you an indicator of the model's certainty.
Furthermore, $\C(X_{\rm test})$ can be interpreted as a set of plausible classes that the image $X_{\rm test}$ could be assigned to.
Finally, $\C$ is \emph{valid}, meaning it satisfies~\eqref{eq:coverage}.\footnote{Due to the discreteness of $Y$, a small modification involving tie-breaking is needed to additionally satisfy the upper bound (see~\cite{angelopoulos2020sets} for details; this randomization is usually ignored in practice). We will henceforth ignore such tie-breaking.}
These properties of $\C$ translate naturally to other machine learning problems, like regression, as we will see.

With an eye towards generalization, let us review in detail what happened in our classification problem. To begin, we were handed a model that had an inbuilt, but heuristic, notion of uncertainty: softmax outputs.
The softmax outputs attempted to measure the conditional probability of each class; in other words, the $j$th entry of the softmax vector estimated $\P(Y=j \mid X=x)$, the probability of class $j$ conditionally on an input image $x$.
However, we had no guarantee that the softmax outputs were any good; they may have been arbitrarily overfit or otherwise untrustworthy. Therefore, instead of taking the softmax outputs at face value, we used the holdout set to adjust for their deficiencies. 

The holdout set contained $n\approx 500$ fresh data points that the model never saw during training, which allowed us to get an honest appraisal of its performance.
The adjustment involved computing conformal scores, which grow when the model is uncertain, but are not valid prediction intervals on their own.
In our case, the conformal score was one minus the softmax output of the true class, but in general, the score can be any function of $x$ and $y$.
We then took $\hat{q}$ to be roughly the $1-\alpha$ quantile of the scores.
In this case, the quantile had a simple interpretation---when setting $\alpha=0.1$, at least $90\%$ of ground truth softmax outputs are guaranteed to be above the level $1-\hat{q}$ (we prove this rigorously in Appendix~\ref{app:coverage-proof}).
Taking advantage of this fact, at test-time, we got the softmax outputs of a new image $X_{\rm test}$ and collected all classes with outputs above $1-\hat{q}$ into a prediction set $\C(X_{\rm test})$.
Since the softmax output of the new true class $Y_{\rm test}$ is guaranteed to be above $1-\hat{q}$ with probability at least $90\%$, we finally got the guarantee in Eq.~\eqref{eq:coverage}.

\subsection{Instructions for Conformal Prediction}
\label{subsec:split_conformal_alg}

As we said during the summary, conformal prediction is not specific to softmax outputs or classification problems.
In fact, conformal prediction can be seen as a method for taking \textbf{any heuristic notion of uncertainty} from \textbf{any model} and converting it to a rigorous one (see the diagram below).
Conformal prediction does not care if the underlying prediction problem is discrete/continuous or classification/regression.
\begin{figure}[h!]
    \centering
    \includegraphics[width=0.3\textwidth]{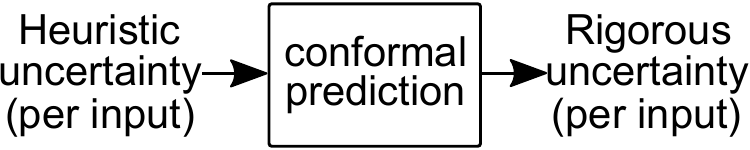}
\end{figure}

We next outline conformal prediction for a general input $x$ and output $y$ (not necessarily discrete).
\begin{enumerate}
    \item Identify a heuristic notion of uncertainty using the pre-trained model.
    \item Define the score function $s(x,y) \in \R$. (Larger scores encode worse agreement between $x$ and $y$.)
    \item Compute $\hat{q}$ as the $\frac{\lceil (n+1)(1-\alpha) \rceil}{n}$ quantile of the calibration scores $s_1=s(X_1,Y_1),...,s_n=s(X_n,Y_n)$.
    \item Use this quantile to form the prediction sets for new examples:
    \begin{equation} \label{eq:conformal_set} \C(X_{\rm test}) = \left\{y : s(X_{\rm test},y) \le \hat{q}\right\}.\end{equation}
\end{enumerate}
\noindent As before, these sets satisfy the validity property in~\eqref{eq:coverage}, for any (possibly uninformative) score function and (possibly unknown) distribution of the data. We formally state the coverage guarantee next. 
\begin{theorem}[Conformal coverage guarantee; Vovk, Gammerman, and Saunders~\citep{vovk1999machine}]
\label{thm:conformal_calibration}
Suppose $(X_i, Y_i)_{i = 1,\dots,n}$ and $(X_{\rm test}, Y_{\rm test})$ are i.i.d. and define $\hat{q}$ as in step 3 above and $\C(X_{\rm test})$ as in step 4 above. Then the following holds:
\begin{equation*}
P\Big(Y_{\rm test} \in \C(X_{\rm test})\Big) \ge 1 - \alpha.
\end{equation*}
\end{theorem}
See Appendix~\ref{app:coverage-proof} for a proof and a statement that includes the upper bound in~\eqref{eq:coverage}. 
We note that the above is only a special case of conformal prediction, called \emph{split conformal prediction}. This is the most widely-used version of conformal prediction, and it will be our primary focus. 
To complete the picture, we describe conformal prediction in full generality later in Section~\ref{sec:full_conformal} and give an overview of the literature in Section~\ref{sec:history}.

\subsubsection*{Choice of score function}
Upon first glance, this seems too good to be true, and a skeptical reader might ask the following question:
\begin{center}
    \emph{How is it possible to construct a statistically valid prediction set even if the heuristic notion of uncertainty of the underlying model is arbitrarily bad?}
\end{center}

Let's give some intuition to supplement the mathematical understanding from the proof in Appendix~\ref{app:coverage-proof}.
Roughly, if the scores $s_i$ correctly rank the inputs from lowest to highest magnitude of model error, then the resulting sets will be smaller for easy inputs and bigger for hard ones.
If the scores are bad, in the sense that they do not approximate this ranking, then the sets will be useless.
For example, if the scores are random noise, then the sets will contain a random sample of the label space, where that random sample is large enough to provide valid marginal coverage.
This illustrates an important underlying fact about conformal prediction: although the guarantee always holds, \textbf{the usefulness of the prediction sets is primarily determined by the score function}.
This should be no surprise---the score function incorporates almost all the information we know about our problem and data, including the underlying model itself.
For example, the main difference between applying conformal prediction on classification problems versus regression problems is the choice of score.
There are also many possible score functions for a single underlying model, which have different properties. Therefore, constructing the right score function is an important engineering choice.
We will next show a few examples of good score functions.

\section{Examples of Conformal Procedures}
\label{sec:examples-conformal}
In this section we give examples of conformal prediction applied in many settings, with the goal of providing the reader a bank of techniques to practically deploy.
Note that we will focus only on one-dimensional $Y$ in this section, and smaller conformal scores will correspond to more model confidence (such scores are called \textit{nonconformity} scores). 
Richer settings, such as high-dimensional $Y$, complicated (or multiple) notions of error, or where different mistakes cost different amounts, often require the language of \emph{risk control}, outlined in Section~\ref{app:ltt}.

\subsection{Classification with Adaptive Prediction Sets}
\label{subsec:aps}

\begin{figure}[t]
    \centering
    \begin{minted}[fontsize=\footnotesize]{python}
# Get scores. calib_X.shape[0] == calib_Y.shape[0] == n
cal_pi = cal_smx.argsort(1)[:,::-1]; cal_srt = np.take_along_axis(cal_smx,cal_pi,axis=1).cumsum(axis=1)
cal_scores = np.take_along_axis(cal_srt,cal_pi.argsort(axis=1),axis=1)[range(n),cal_labels]
# Get the score quantile
qhat = np.quantile(cal_scores, np.ceil((n+1)*(1-alpha))/n, interpolation='higher')
# Deploy (output=list of length n, each element is tensor of classes)
val_pi = val_smx.argsort(1)[:,::-1]; val_srt = np.take_along_axis(val_smx,val_pi,axis=1).cumsum(axis=1)
prediction_sets = np.take_along_axis(val_srt <= qhat,val_pi.argsort(axis=1),axis=1)
    \end{minted}
    \caption{\textbf{Python code for adaptive prediction sets.} \jupyter{https://github.com/aangelopoulos/conformal-prediction/blob/main/notebooks/imagenet-aps.ipynb}}
    \label{fig:aps-code}
\end{figure}

Let's begin our sequence of examples with an improvement to the classification example in Section~\ref{sec:conformal}.
The previous method produces prediction sets with the smallest average size~\cite{Sadinle2016LeastAS}, but it tends to undercover hard subgroups and overcover easy ones.
Here we develop a different method called \emph{adaptive prediction sets} (APS) that avoids this problem.
We will follow~\cite{romano2020classification} and~\cite{angelopoulos2020sets}.

As motivation for this new procedure, note that if the softmax outputs $\hat{f}(X_{\rm test})$ were a perfect model of $Y_{\rm test}|X_{\rm test}$, we would greedily include the top-scoring classes until their total mass just exceeded $1-\alpha$.
Formally, we can describe this oracle algorithm as
\begin{equation}
    \left\{ \pi_1(x), ..., \pi_k(x) \right\}\text{, where }k = \sup \left\{ k' : \sum\limits_{j=1}^{k'} \hat{f}(X_{\rm test})_{\pi_j(x)} < 1-\alpha \right\} + 1, 
    \label{eq:oracle-romano}
\end{equation}
and $\pi(x)$ is the permutation of $\{1, ..., K\}$ that sorts $\hat{f}(X_{\rm test})$ from most likely to least likely.
In practice, however, this procedure fails to provide coverage, since $\hat{f}(X_{\rm test})$ is not perfect; it only provides us a heuristic notion of uncertainty. Therefore, we will use conformal prediction to turn this into a rigorous notion of uncertainty.

To proceed, we define a score function inspired by the oracle algorithm:
\begin{equation}
    s(x,y) = \sum\limits_{j=1}^k \hat{f}(x)_{\pi_j(x)}\text{, where }y=\pi_k(x).
\end{equation}
In other words, we greedily include classes in our set until we reach the true label, then we stop.
Unlike the score from Section~\ref{sec:conformal}, this one utilizes the softmax outputs of all classes, not just the true class.

The next step, as in all conformal procedures, is to set $\hat{q}=\quantile(s_1,...,s_n\; ; \; \frac{\lceil (n+1)(1-\alpha) \rceil}{n})$.
Having done so, we will form the prediction set $\{y : s(x,y) \leq \hat{q} \}$, modified slightly to avoid zero-size sets:
\begin{equation}
    \C(x) = \left\{\pi_1(x), ..., \pi_k(x)\right\}\text{, where }k = \sup \left\{ k' : \sum\limits_{j=1}^{k'} \hat{f}(x)_{\pi_j(x)} < \hat{q} \right\} + 1
    .
    \label{eq:aps-sets}
\end{equation}
% Lastly, let us comment on our conformal score in this example.
% It is easy to see that we simply replaced the `$1-\alpha$' in Eq~\eqref{eq:oracle-romano} with a data-driven parameter $\hat{q}$ in Eq~\eqref{eq:aps-sets}.
% This helped compensate for the inaccuracy of the scores, and scaled the prediction sets accordingly. 
% When the model is confident, it will exceed $\hat{q}$ when $k$ is small; otherwise, if the softmax outputs are closer to uniform, $k$ may be very large before $\hat{q}$ gets exceeded, leading to a large set.
% Because we picked $\hat{q}$ using conformal prediction, we are guaranteed the validity of $\C$ (although a slight randomization is needed to achieve the upper bound in Eq.~\eqref{eq:oracle-romano}---see~\cite{romano2019conformalized}). 
Figure~\ref{fig:aps-code} shows Python code to implement this method. 
As usual, these uncertainty sets (with tie-breaking) satisfy~\eqref{eq:coverage}.
See~\cite{angelopoulos2020sets} for details and significant practical improvements, which we implemented here: \jupyter{https://github.com/aangelopoulos/conformal-prediction/blob/main/notebooks/imagenet-raps.ipynb}.
\begin{figure}[h]
    \centering
    \includegraphics[width=0.6\textwidth]{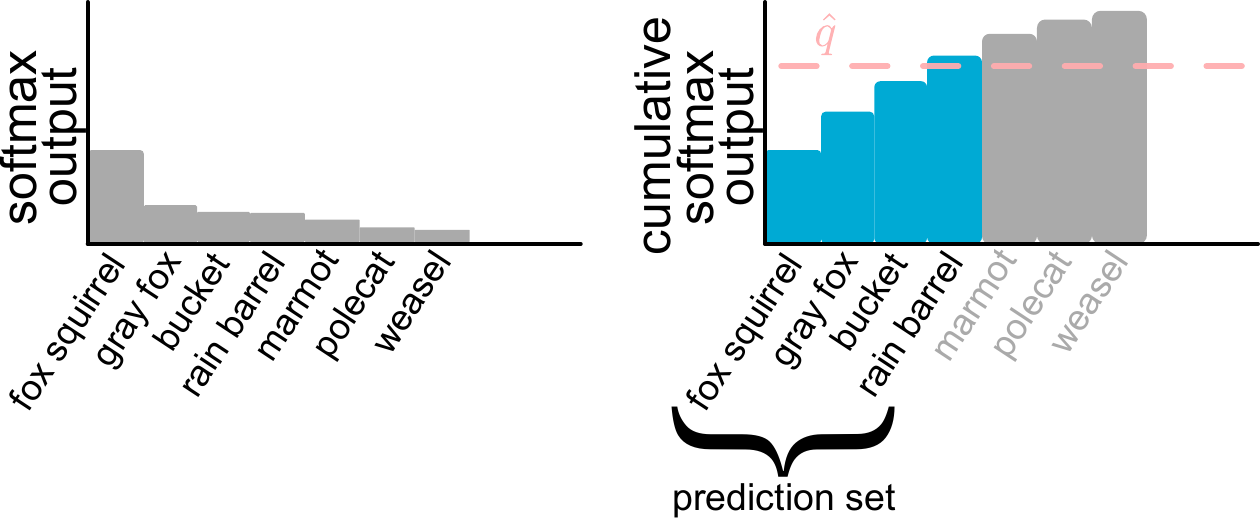}
    \caption{\textbf{A visualization of the adaptive prediction sets algorithm} in Eq.~\eqref{eq:aps-sets}.
    Classes are included from most to least likely until their cumulative softmax output exceeds the quantile.}
    \label{fig:aps-diagram}
\end{figure}
\vspace{-0.5cm}
\subsection{Conformalized Quantile Regression}
\label{subsec:cqr}
We will next show how to incorporate uncertainty into regression problems with a continuous output, following the algorithm in~\cite{romano2019conformalized}. 
We use quantile regression~\cite{koenker1978regression} as our base model.
As a reminder, the quantile regression algorithm attempts to learn the $\gamma$ quantile of $Y_{\rm test}|X_{\rm test}=x$ for each possible value of $x$. 
We will call the true quantile $t_\gamma(x)$ and the fitted model $\hat{t}_\gamma(x)$.
Since by definition $Y_{\rm test}|X_{\rm test}=x$ lands below $t_{0.05}(x)$ with $5\%$ probability and above $t_{0.95}(x)$ with $5\%$ probability, we would expect the interval $\left[\hat{t}_{0.05}(x),\hat{t}_{0.95}(x)\right]$ to have approximately 90\% coverage.
However, because the fitted quantiles may be inaccurate, we will conformalize them.
Python pseudocode for conformalized quantile regression is in Figure~\ref{fig:cqr-code}.

After training an algorithm to output two such quantiles (this can be done with a standard loss function, see below), $t_{\alpha/2}$ and $t_{1-\alpha/2}$, we can define the score to be the difference between $y$ and its nearest quantile,
\begin{equation}
    s(x,y) = \max\left\{ \hat{t}_{\alpha/2}(x)-y, y-\hat{t}_{1-\alpha/2}(x) \right\}.
\end{equation}
After computing the scores on our calibration set and setting $\hat{q}=\quantile(s_1,...,s_n\; ; \; \frac{\lceil (n+1)(1-\alpha) \rceil}{n})$, we can form valid prediction intervals by taking
\begin{equation}
    \C(x) = \left[\hat{t}_{\alpha/2}(x)-\hat{q},\hat{t}_{1-\alpha/2}(x)+\hat{q}\right].
    \label{eq:cqr-sets}
\end{equation}
Intuitively, the set $\C(x)$ just grows or shrinks the distance between the quantiles by $\hat{q}$ to achieve coverage.

\begin{figure}
    \centering
    \begin{minted}[fontsize=\footnotesize]{python}
# Get scores
cal_scores = np.maximum(cal_labels-model_upper(cal_X), model_lower(cal_X)-cal_labels)
# Get the score quantile
qhat = np.quantile(cal_scores, np.ceil((n+1)*(1-alpha))/n, interpolation='higher')
# Deploy (output=lower and upper adjusted quantiles)
prediction_sets = [val_lower - qhat, val_upper + qhat]
    \end{minted}
    \vspace{-0.5cm}
    \caption{\textbf{Python code for conformalized quantile regression.} \jupyter{https://github.com/aangelopoulos/conformal-prediction/blob/main/notebooks/meps-cqr.ipynb}}
    \label{fig:cqr-code}
    \vspace{-0.6cm}
\end{figure}

\begin{figure}[h]
    \centering
    \includegraphics[width=0.28\textwidth]{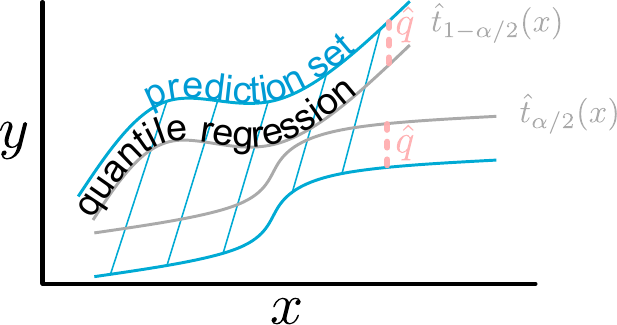}
    \caption{\textbf{A visualization of the conformalized quantile regrssion algorithm} in Eq.~\eqref{eq:cqr-sets}.
    We adjust the quantiles by the constant $\hat{q}$, picked during the calibration step.}
    \label{fig:cqr-diagram}
\end{figure}

As before, $\C$ satisfies the coverage property in Eq.~\eqref{eq:coverage}.
However, unlike our previous example in Section~\ref{sec:conformal}, $\C$ is no longer a set of classes, but instead a \emph{continuous interval} in $\R$.
Quantile regression is not the only way to get such continuous-valued intervals.
However, it is often the best way, especially if $\alpha$ is known in advance.
The reason is that the intervals generated via quantile regression even without conformal prediction, i.e. $[\hat{t}_{\alpha/2}(x),\hat{t}_{1-\alpha/2}(x)]$, have good coverage to begin with.
Furthermore, they have asymptotically valid conditional coverage (a concept we will explain in Section~\ref{sec:evaluating}).
These properties propagate through the conformal procedure and lead to prediction sets with good performance.
 
One attractive feature of quantile regression is that it can easily be added on top of any base model simply by changing the loss function to a \emph{quantile loss} (informally referred to as a \emph{pinball loss}),
\begin{figure}[ht]
    \begin{minipage}{0.7\textwidth}
    \begin{equation}
        \label{eq:pinball-loss}
        L_{\gamma}(\hat{t}_{\gamma},y) =  (y-\hat{t}_{\gamma})\gamma \ind{y > \hat{t}_{\gamma}} + (\hat{t}_{\gamma}-y)(1-\gamma) \ind{y \leq \hat{t}_{\gamma}}. 
    \end{equation}
    \end{minipage}
    \begin{minipage}{0.3\textwidth}
        \centering
        \includegraphics[width=\textwidth]{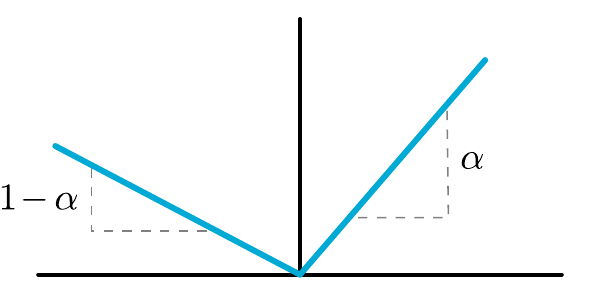}
    \end{minipage}
\end{figure}

\noindent The reader can think of quantile regression as a generalization of L1-norm regression: when $\gamma=0.5$, the loss function reduces to $L_{0.5}=|\hat{t}_{\gamma}(x)-y|/2$, which encourages $\hat{t}_{0.5}(x)$ to converge to the conditional median.
Changing $\gamma$ just modifies the L1 norm as in the illustration above to target other quantiles.
In practice, one can just use a quantile loss instead of MSE at the end of any algorithm, like a neural network, in order to regress to a quantile.

\subsection{Conformalizing Scalar Uncertainty Estimates}
\label{subsec:uncertainty-scalar}
\subsubsection{The Estimated Standard Deviation}
As an alternative to quantile regression, our next example is a different way of constructing prediction sets for continuous $y$ with a less rich but more common notion of heuristic uncertainty: an estimate of the standard deviation $\hat{\sigma}(x)$.  For example, one can produce uncertainty scalars by assuming $Y_{\rm test}\mid X_{\rm test} = x$ follows some parametric distribution---like a Gaussian distribution---and training a model to output the mean and variance of that distribution.
To be precise, in this setting we choose to model $Y_{\rm test}\mid X_{\rm test} = x \sim \mathcal{N}(\mu(x),\sigma(x))$, and we have models $\hat{f}(x)$ and $\hat{\sigma}(x)$ trained to maximize the likelihood of the data with respect to $\E\left[Y_{\rm test}\mid X_{\rm test = x}\right]$ and $\sqrt{\textrm{Var} \left[ Y_{\rm test}\mid X_{\rm test} = x \right]}$ respectively.
Then, $\hat{f}(x)$ gets used as the point prediction and $\hat{\sigma}(x)$ gets used as the uncertainty.
This strategy is so common that it is commoditized: there are inbuilt PyTorch losses, such as \texttt{GaussianNLLLoss}, that enable training a neural network this way.
However, we usually know $Y_{\rm test}\mid X_{\rm test}$ isn't Gaussian, so even if we had infinite data, $\hat{\sigma}(x)$ would not necessarily be reliable. 
We can use conformal prediction to turn this heuristic uncertainty notion into rigorous prediction intervals of the form $\hat{f}(x) \pm \hat{q}\hat{\sigma}(x)$.

\subsubsection{Other 1-D Uncertainty Estimates}
More generally, we assume there is a function $u(x)$ such that larger values encode more uncertainty.
This single number can have many interpretations beyond the standard deviation.
For example, one instance of an uncertainty scalar simply involves the user creating a model for the magnitude of the residual.
In that setting, the user would first fit a model $\hat{f}$ that predicts $y$ from $x$.
Then, they would fit a second model $\hat{r}$ (possibly the same neural network), that predicts $\left|y-\hat{f}(x)\right|$.
If $\hat{r}$ were perfect, we would expect the set $\left[\hat{f}(x)-\hat{r}(x), \hat{f}(x)+\hat{r}(x)\right]$ to have perfect coverage.
However, our learned model of the error $\hat{r}$ is often poor in practice.

There are many more such uncertainty scalars than we can discuss in this document in detail, including 
\begin{enumerate}
    \item measuring the variance of $\hat{f}(x)$ across an ensemble of models,
    \item measuring the variance of $\hat{f}(x)$ when randomly dropping out a fraction of nodes in a neural net,
    \item measuring the variance of $\hat{f}(x)$ to small, random input perturbations,
    \item measuring the variance of $\hat{f}(x)$ over different noise samples input to a generative model,
    \item measuring the magnitude of change in $\hat{f}(x)$ when applying an adversarial perturbation, etc.
\end{enumerate}
These cases will all be treated the same way.
There will be some point prediction $\hat{f}(x)$, and some uncertainty scalar $u(x)$ that is large when the model is uncertain and small otherwise (in the residual setting, $u(x):=\hat{r}(x)$, and in the Gaussian setting, $u(x):=\hat{\sigma}(x)$).
We will proceed with this notation for the sake of generality, but the reader should understand that $u$ can be replaced with any function.

Now that we have our heuristic notion of uncertainty in hand, we can define a score function,
\begin{equation}
\label{eq:uncertainty_scalar_score}
    s(x,y) = \frac{\left|y - \hat{f}(x)\right|}{u(x)}.
\end{equation}
This score function has a natural interpretation: it is a multiplicative correction factor of the uncertainty scalar (i.e., $s(x,y)u(x)=\left|y-\hat{f}(x)\right|$).
As before, taking $\hat{q}$ to be the $\frac{\lceil (1-\alpha)(n+1) \rceil}{n}$ quantile of the calibration scores guarantees us that for a new example,
\begin{equation}
    \P\left[s(X_{\rm test},Y_{\rm test}) \leq \hat{q} \right] \geq 1-\alpha \implies \P\left[\left|Y_{\rm test}-\hat{f}(X_{\rm test})\right| \leq u(X_{\rm test})\hat{q} \right] \geq 1-\alpha.
\end{equation}
Naturally, we can then form prediction sets using the rule
\begin{equation}
    \C(x) = \left[ \hat{f}(x) - u(x)\hat{q}, \hat{f}(x) + u(x)\hat{q} \right].
    \label{eq:us-sets}
\end{equation}

\begin{figure}[t]
    \centering
    \begin{minted}[fontsize=\footnotesize]{python}
# model(X)[:,0]=E(Y|X), and model(X)[:,1]=stddev(Y|X)
scores = abs(model(calib_X)[:,0]-calib_Y)/model(calib_X)[:,1]
# Get the score quantile
qhat = torch.quantile(scores,np.ceil((n+1)*(1-alpha))/n)
# Deploy (represent sets as tuple of lower and upper endpoints)
muhat, stdhat = (model(test_X)[:,0], model(test_X)[:,1])
prediction_sets = (muhat-stdhat*qhat, muhat+stdhat*qhat)
    \end{minted}
    \caption{\textbf{Python code for conformalized uncertainty scalars.} \jupyter{https://github.com/aangelopoulos/conformal-prediction/blob/main/notebooks/meps-uncertainty-scalar.ipynb}}
    \label{fig:uncertainty-scalar-code}
\end{figure}

\begin{figure}[h]
    \centering
    \includegraphics[width=0.3\textwidth]{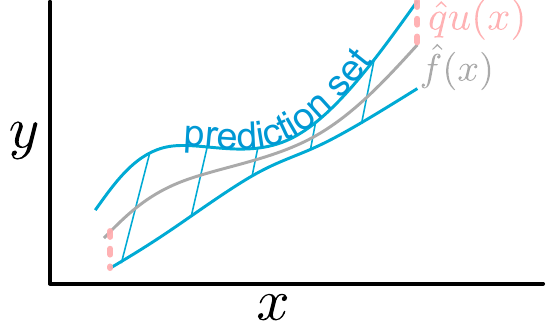}
    \caption{\textbf{A visualization of the uncertainty scalars algorithm} in Eq.~\eqref{eq:us-sets}.
    We produce the set by adding and subtracting $\hat{q}u(x)$.
    The constant $\hat{q}$ is picked during the calibration step.}
    \label{fig:us-diagram}
\end{figure}

Let's reflect a bit on the nature of these prediction sets.
The prediction sets are valid, as we desired.
Due to our construction, they are also symmetric about the prediction, $\hat{f}(x)$, although symmetry could be relaxed with minor modifications.
However, uncertainty scalars do not necessarily scale properly with $\alpha$.
In other words, there is no reason to believe that a quantity like $\hat{\sigma}$ would be directly related to quantiles of the label distribution.
We tend to prefer quantile regression when possible, since it directly estimates this quantity and thus should be a better heuristic (and in practice it usually is; see~\cite{angelopoulos2022image} for some evaluations).
Nonetheless, uncertainty scalars remain in use because they are easy to deploy and have been commoditized in popular machine learning libraries.
See Figure~\ref{fig:uncertainty-scalar-code} for a Python implementation of this method.

\subsection{Conformalizing Bayes}
Our final example of conformal prediction will use a Bayesian model.
Bayesian predictors, like Bayesian neural networks, are commonly studied in the field of uncertainty quantification, but rely on many unverifiable and/or incorrect assumptions to provide coverage.
Nonetheless, we should incorporate any prior information we have into our prediction sets.
We will now show how to create valid prediction sets that are also Bayes optimal among all prediction sets that achieve $1-\alpha$ coverage.
These prediction sets use the posterior predictive density as a conformal score.
The Bayes optimality of this procedure was first proven in~\cite{hoff2021bayes}, and was previously studied in~\cite{wasserman2011frasian,melluish2001comparing}.
Because our algorithm reduces to picking the labels with high posterior predictive density, the Python code will look exactly the same as in Figure~\ref{fig:teaser}.
The only difference is interpretation, since the softmax now represents an approximation of a continuous distribution rather than a categorical one.

Let us first describe what a Bayesian would do, given a Bayesian model $\hat{f}(y \mid x)$, which estimates the value of the posterior distribution of $Y_{\rm test}$ at label $y$ with input $X_{\rm test}=x$.
If one believed all the necessary assumptions---mainly, a correctly specified model and asymptotically large $n$---the following would be the optimal prediction set:
\begin{equation}
    S(x)=\left\{y : \hat{f}(y \mid x) > t \right\}\text{, where }t\text{ is chosen so }\int\limits_{y \in S(x)} \hat{f}(y \mid x) dy = 1-\alpha.
\end{equation}
However, because we cannot make assumptions on the model and data, we can only consider $\hat{f}(y \mid x)$ to be a heuristic notion of uncertainty.

Following our now-familiar checklist, we can define a conformal score,
\begin{equation}
    \label{eq:bayes-score}
    s(x,y) = -\hat{f}(y \mid x),
\end{equation}
which is high when the model is uncertain and otherwise low.
After computing $\hat{q}$ over the calibration data, we can then construct prediction sets:
\begin{equation}
    \label{eq:bayes-sets}
    \C(x) = \left\{ y : \hat{f}(y \mid x) > -\hat{q} \right\}.
\end{equation}

\begin{figure}[h]
    \centering
    \includegraphics[width=0.3\textwidth]{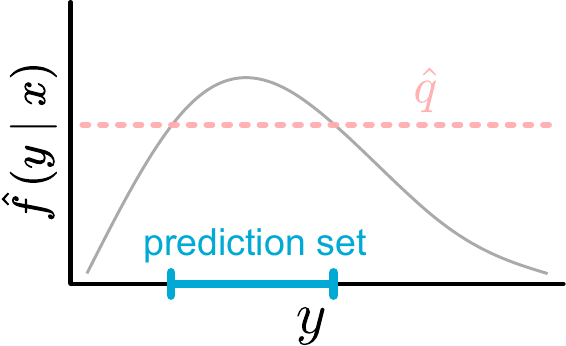}
    \caption{\textbf{A visualization of the conformalized Bayes algorithm} in Eq.~\eqref{eq:bayes-sets}.
    The prediction set is a superlevel set of the posterior predictive density.}
    \label{fig:bayes-diagram}
\end{figure}

This set is valid because we chose the threshold $\hat{q}$ via conformal prediction.
Furthermore, when certain technical assumptions are satisfied, it has the best Bayes risk among all prediction sets with $1-\alpha$ coverage.
To be more precise, under the assumptions in~\cite{hoff2021bayes}, $\C(X_{\rm test})$ has the smallest average size of any conformal procedure with $1-\alpha$ coverage, where the average is taken over the data \emph{and} the parameters.
This result should not be a surprise to those familiar with decision theory, as the argument we are making feels similar to that of the Neyman-Pearson lemma.
This concludes the final example.

\subsubsection*{Discussion}

As our examples have shown, conformal prediction is a simple and pragmatic technique with many use cases.
It is also easy to implement and computationally trivial.
Additionally, the above four examples serve as roadmaps to the user for designing score functions with various notions of optimality, including average size, adaptivity, and Bayes risk. 
Still more is yet to come---conformal prediction can be applied more broadly than it may first seem at this point.
We will outline extensions of conformal prediction to other prediction tasks such as outlier detection, image segmentation, serial time-series prediction, and so on in Section~\ref{sec:advanced}.
Before addressing these extensions, we will take a deep dive into diagnostics for conformal prediction in the standard setting, including the important topic of conditional coverage.

\section{Evaluating Conformal Prediction}
\label{sec:evaluating}

We have spent the last two sections learning how to form valid prediction sets satisfying rigorous statistical guarantees.
Now we will discuss how to evaluate them.
Our evaluations will fall into one of two categories.
\begin{enumerate}
    
    \item \textbf{Evaluating adaptivity.} It is extremely important to keep in mind that the conformal prediction procedure with the smallest average set size is not necessarily the best.
    A good conformal prediction procedure will give small sets on easy inputs and large sets on hard inputs in a way that faithfully reflects the model's uncertainty.  This \emph{adaptivity} is not implied by conformal prediction's coverage guarantee, but it is  non-negotiable in practical deployments of conformal prediction. We will formalize adaptivity, explore its consequences, and suggest practical algorithms for evaluating it.

    \item \textbf{Correctness checks.} Correctness checks help you test whether you've implemented conformal prediction correctly. We will empirically check that the coverage satisfies Theorem~\ref{thm:conformal_calibration}. Rigorously evaluating whether this property holds requires a careful accounting of the finite-sample variability present with real datasets. We develop explicit formulae for the size of the benign fluctuations---if one observes deviations from $1-\alpha$ in coverage that are larger than these formulae dictate, then there is a problem with the implementation.
 
\end{enumerate}

Many of the evaluations we suggest are computationally intensive, and require running the entire conformal procedure on different splits of data at least $100$ times.
Na\"ive implementations of these evaluations can be slow when the score takes a long time to compute.
With some simple computational tricks and strategic caching, we can speed this process up by orders of magnitude.
Therefore to aid the reader, we intersperse the mathematical descriptions with code to efficiently implement these computations.

\subsection{Evaluating Adaptivity}
Although any conformal procedure yields prediction intervals that satisfy~\eqref{eq:coverage}, there are many such procedures, and they differ in other important ways. In particular, a key design consideration for conformal prediction is \emph{adaptivity}: we want the procedure to return larger sets for harder inputs and smaller sets for easier inputs. While most reasonable conformal procedures will satisfy this to some extent, we now discuss precise metrics for adaptivity that allow the user to check a conformal procedure and to compare multiple alternative conformal procedures. 
\paragraph{Set size.} 
The first step is to plot histograms of set sizes.
This histogram helps us in two ways.
Firstly, a large average set size indicates the conformal procedure is not very precise, indicating a possible problem with the score or underlying model.
Secondly, the spread of the set sizes shows whether the prediction sets properly adapt to the difficulty of examples. A wider spread is generally desirable, since it means that the procedure is effectively distinguishing between easy and hard inputs.
\begin{figure}[H]
    \centering
    \includegraphics[width=0.6\linewidth]{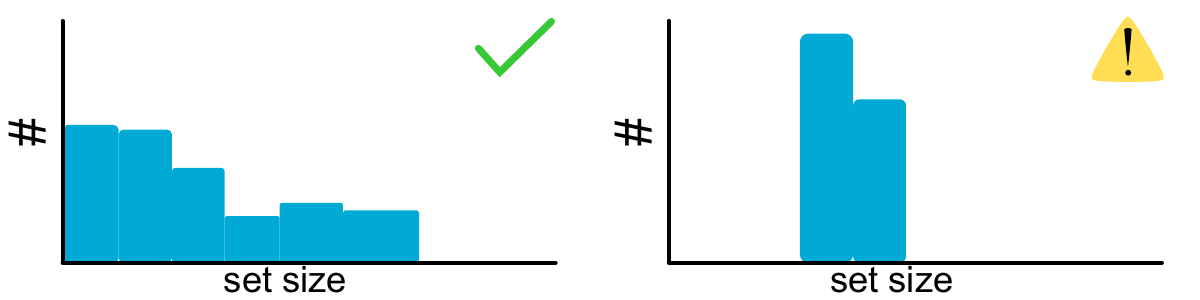}
\end{figure}

It can be tempting to stop evaluations after plotting the coverage and set size, but certain important questions remain unanswered.
A good spread of set sizes is generally better, but it does not necessarily indicate that the sets adapt properly to the difficulty of $X$. Above seeing that the set sizes have dynamic range, we will need to verify that large sets occur for hard examples. We next formalize this notion and give metrics for evaluating it.

\paragraph{Conditional coverage.} 
Adaptivity is typically formalized by asking for the \emph{conditional coverage} \cite{vovk2012conditional} property:
\begin{equation}
    \P\left[ Y_{\rm test} \in \C(X_{\rm test}) \mid X_{\rm test} \right] \geq 1-\alpha.
    \label{eq:conditional-coverage}
\end{equation}
That is, for every value of the input $X_{\rm test}$, we seek to return prediction sets with $1-\alpha$ coverage. 
This is a stronger property than the \emph{marginal coverage} property in $\eqref{eq:coverage}$ that conformal prediction is guaranteed to achieve---indeed, in the most general case, conditional coverage is impossible to achieve~\cite{vovk2012conditional}. In other words, conformal procedures are not guaranteed to satisfy~\eqref{eq:conditional-coverage}, so we must check how close our procedure comes to approximating it.

The difference between marginal and conditional coverage is subtle but of great practical importance, so we will spend some time think about the differences here. 
Imagine there are two groups of people, group A and group B, with frequencies 90\% and 10\%.
The prediction sets always cover $Y$ among people in group A and never cover $Y$ when the person comes from group B.
Then the prediction sets have 90\% coverage, but not conditional coverage.
Conditional coverage would imply that the prediction sets cover $Y$ at least 90\% of the time in both groups.
This is necessary, but not sufficient; conditional coverage is a very strong property that states the probability of the prediction set needs to be $\geq 90\%$ \emph{for a particular person}.
In other words, for any subset of the population, the coverage should be $\geq 90\%$. 
See Figure~\ref{fig:conditional-marginal} for a visualization of the difference between conditional and marginal coverage.
\begin{figure}[H]
    \centering
    \includegraphics[width=0.9\linewidth]{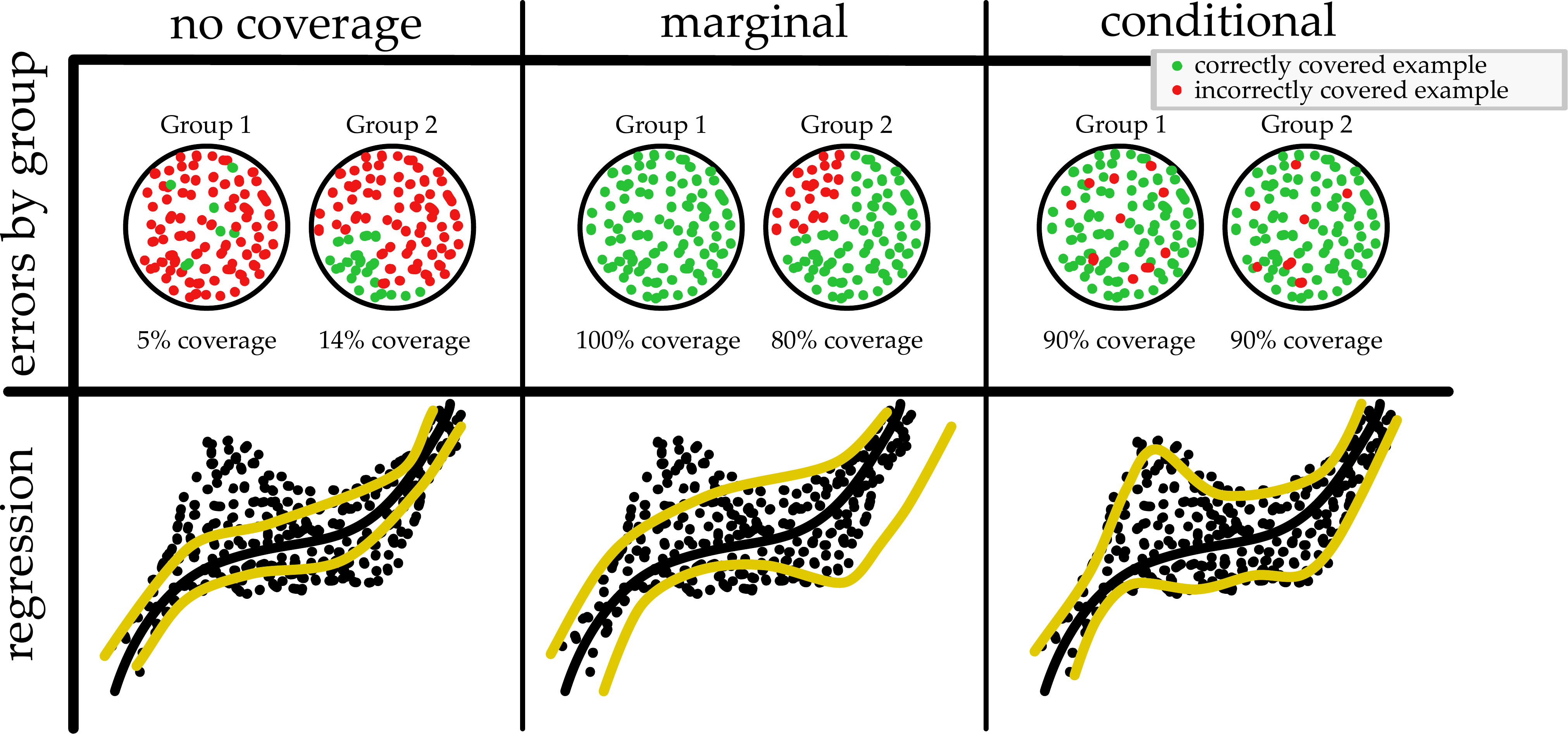}
    \caption{\textbf{Prediction sets with various notions of coverage:} no coverage, marginal  coverage, or conditional coverage (at a level of 90\%).  In the marginal case, all the errors happen in the same groups and regions in $X$-space.  Conditional coverage disallows this behavior, and errors are evenly distributed.
    }
    \label{fig:conditional-marginal}
\end{figure}

\paragraph{Feature-stratified coverage metric.} 
As a first metric for conditional coverage, we will formalize the example we gave earlier, where coverage is unequal over some groups.
The reader can think of these groups as discrete categories, like race, or as a discretization of continuous features, like age ranges.
Formally, suppose we have features $X_{i,1}^{\rm (val)}$ that take values in $\{1,\dots,G\}$ for some $G$.
(Here, $i = 1,\dots,n_{\rm val}$ indexes the example in the validation set, and the first coordinate of each feature is the group.)
Let $\Ical_g \subset \{1,\dots,n_{\rm val}\}$ be the set of observations such that $X_{i,1}^{\rm (val)} = g$ for $g = 1,\dots,G$. 
Since conditional coverage implies that the procedure has the same coverage for all values of $X_{\rm test}$, we use the following measure:
\begin{equation*}
    \textbf{FSC metric}: \qquad \min_{g \in \{1,\dots,G\}} \  \frac{1}{|\Ical_g|} \ \sum_{i\in\Ical_g} \ind{Y_i^{\rm (val)} \in \C\Big(X_i^{\rm (val)}\Big)}
\end{equation*}
In words, this is the observed coverage among all instances where the discrete feature takes the value $g$.
If conditional coverage were achieved, this would be $1-\alpha$, and values farther below $1-\alpha$ indicate a greater violation of conditional coverage.
Note that this metric can also be used with a continuous feature by binning the features into a finite number of categories.

\paragraph{Size-stratified coverage metric.} We next consider a more general-purpose metric for how close a conformal procedure comes to satisfying~\eqref{eq:conditional-coverage}, introduced in \citep{angelopoulos2020sets}. First, we discretize the possible cardinalities of $\C(x)$, into $G$ bins, $B_1,\dots,B_G$. For example, in classification we might divide the observations into three groups, depending on whether $\C(x)$ has one element, two elements, or more than two elements. Let $\Ical_g \subset \{1,\dots,n_{\rm val}\}$ be the set of observations falling in bin $g$ for $g = 1,\dots,G$. Then we consider the following 
\begin{equation*}
    \textbf{SSC metric}: \qquad \min_{g \in \{1,\dots,G\}} \  \frac{1}{|\Ical_g|} \ \sum_{i\in\Ical_g} \ind{Y_i^{\rm (val)} \in \C\Big(X_i^{\rm (val)}\Big)}
\end{equation*}
In words, this is the observed coverage for all units for which the set size $|\C(x)|$ falls into bin $g$. As before, if conditional coverage were achieved, this would be $1-\alpha$, and values farther below $1-\alpha$ indicate a greater violation of conditional coverage. Note that this is the same expression as for the FSC metric, except that the definition of $\Ical_g$ has changed.
Unlike the FSC metric, the user does not have to define an important set of discrete features a-priori---it is a general metric that can apply to any example.

See \cite{cauchois2020knowing} and \cite{feldman2021improving} for additional metrics of conditional coverage.

\subsection{The Effect of the Size of the Calibration Set}
\label{subsec:cal_size}
We first pause to discuss how the size of the calibration set affects conformal prediction. 
We consider this question for two reasons. 
First, the user must choose this for a practical deployment. Roughly speaking, our conclusion will that be choosing a calibration set of size $n=1000$ is sufficient for most purposes.
Second, the size of the calibration set is one source of finite-sample variability that we will need to analyze to correctly check the coverage. 
We will build on the results here in the next section, where we give a complete description of how to check coverage in practice. 

How does the size of the calibration set, $n$, affect conformal prediction? 
The coverage guarantee in~\eqref{eq:coverage} holds for any $n$, so we can see that our prediction sets have coverage at least $1-\alpha$ even with a very small calibration set. 
Intuitively, however, it may seem that larger $n$ is better, and leads to more stable procedures. This intuition is correct, and it explains why using a larger calibration set is beneficial in practice. The details are subtle, so we carefully work through them here.

The key idea is that \emph{the coverage of conformal prediction conditionally on the calibration set is a random quantity}. 
That is, if we run the conformal prediction algorithm twice, each time sampling a new calibration dataset, then check the coverage on an infinite number of validation points, those two numbers will not be equal.
The coverage property in~\eqref{eq:coverage} says that coverage will be at least $1-\alpha$ on average over the randomness in the calibration set, but with any one fixed calibration set, the coverage on an infinite validation set will be some number that is not exactly $1-\alpha$. Nonetheless, we can choose $n$ large enough to control these fluctuations in coverage by analyzing its distribution.

In particular, the distribution of coverage has an analytic form, first introduced by Vladimir Vovk in~\cite{vovk2012conditional}, namely, 
\begin{equation}
    \label{eq:beta}
     \P\left(Y_{\rm test} \in \C\left(X_{\rm test}\right) \big| \: \{(X_i,Y_i)\}_{i=1}^n\right) \sim \textrm{Beta}\left(n+1-l,l \right),
\end{equation}
where
\begin{equation*}
    l=\lfloor(n+1)\alpha\rfloor.
\end{equation*}
Notice that the conditional expectation above is the coverage with an infinite validation data set, holding the calibration data fixed. 
A simple proof of this fact is available in~\cite{vovk2012conditional}.
We plot the distribution of coverage for several values of $n$ in Figure~\ref{fig:beta}.

\begin{figure}[t]
    \centering
    \includegraphics[width=0.6\linewidth]{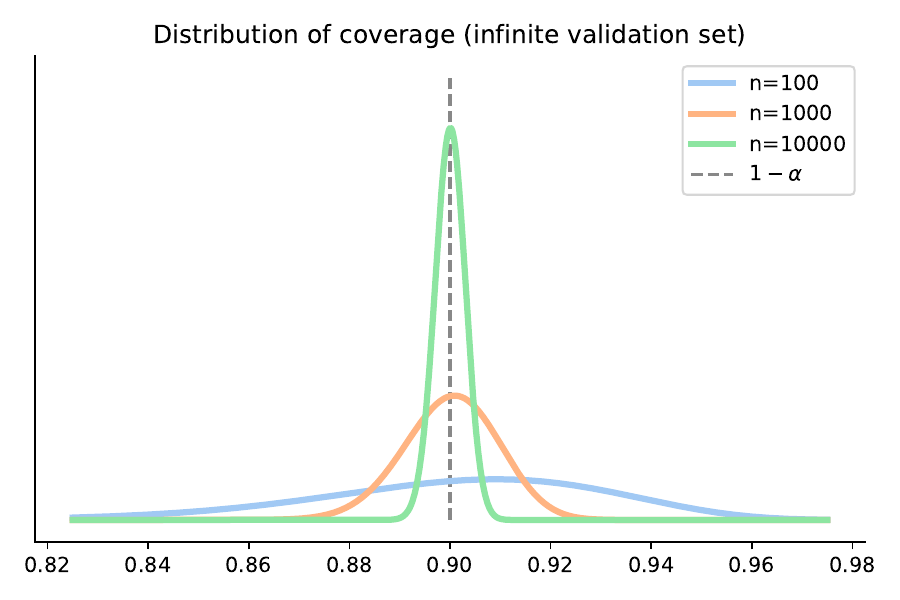}
    \caption{\textbf{The distribution of coverage} with an infinite validation set is plotted for different values of $n$ with $\alpha=0.1$. The distribution converges to $1-\alpha$ with rate $\mathcal{O}\left(n^{-1/2}\right)$.}
    \label{fig:beta}
\end{figure}

Inspecting Figure~\ref{fig:beta}, we see that choosing $n=1000$ calibration points leads to coverage that is typically between $.88$ and $.92$, hence our rough guideline of choosing about $1000$ calibration points. More formally, we can compute exactly the number of calibration points $n$ needed to achieve a coverage of $1-\alpha \pm \epsilon$ with probability $1-\delta$. 
Again, the average coverage is always at least $1-\alpha$; the parameter $\delta$ controls the tail probabilities of the coverage conditionally on the calibration data.
For any $\delta$, the required calibration set size $n$ can be explicitly computed from a simple expression, and we report on several values in Table~\ref{tab:calib-set-size} for the reader's reference. 
Code allowing the user to produce results for any choice of $n$ and $\alpha$ accompanies the table.

\begin{table}[t]
    \centering
    \begin{tabular}{c|ccccc}
         $\mathbf{\epsilon}$ & 0.1 & 0.05 & 0.01 & 0.005 & 0.001 \\
         $n(\epsilon)$ & 22 & 102 & 2491 & 9812 & 244390
    \end{tabular}
    \caption{\textbf{Calibration set size $n(\epsilon)$ required} for coverage slack $\epsilon$ with $\delta=0.1$ and $\alpha=0.1$. \jupyter{https://github.com/aangelopoulos/conformal-prediction/blob/main/notebooks/correctness_checks.ipynb}}
    \label{tab:calib-set-size}
\end{table}

\subsection{Checking for Correct Coverage}

As an obvious diagnostic, the user will want to assess whether the conformal procedure has the correct coverage.
This can be accomplished by running the procedure over $R$ trials with new calibration and validation sets, and then calculating the empirical coverage for each,
\begin{equation}
    \label{eq:empirical-average-coverage}
    C_j = \frac{1}{n_{\textnormal{val}}}\sum\limits_{i=1}^{n_{\textnormal{val}}} \ind{Y^{(\text{val})}_{i,j} \in \C_j\left(X^{(\text{val})}_{i,j}\right)}\text{, for }j=1,...,R,
\end{equation}
where $n_{\textnormal{val}}$ is the size of the validation set, $(X^{(\text{val})}_{i,j},Y^{(\text{val})}_{i,j})$ is the $i$th validation example in trial $j$, and $\C_j$ is calibrated using the calibration data from the $j$th trial.
A histogram of the $C_j$ should be centered at roughly $1-\alpha$, as in Figure~\ref{fig:beta}.
Likewise, the mean value,
\begin{equation}
    \label{eq:average-empirical-coverage}
    \overline{C} = \frac{1}{R}\sum\limits_{j=1}^R C_j,
\end{equation}
should be approximately $1-\alpha$.

With real datasets, we only have $n+n_{\textnormal{val}}$ data points total to evaluate our conformal algorithm and therefore cannot draw new data for each of the $R$ rounds.
So, we compute the coverage values by randomly splitting the $n+n_{\textnormal{val}}$ data points $R$ times into calibration and validation datasets, then running conformal.
Notice that rather than splitting the data points themselves many times, we can instead first cache all conformal scores and then compute the coverage values over many random splits, as in the code sample in Figure~\ref{fig:cache-scores-code}. 

\begin{figure}[t]
    \centering
    \begin{minted}[fontsize=\footnotesize]{python}
try: # try loading the scores first
  scores = np.load('scores.npy')
except:
  # X and Y have n + n_val rows each
  scores = get_scores(X,Y)
  np.save(scores, 'scores.npy')
# calculate the coverage R times and store in list
coverages = np.zeros((R,))
for r in range(R):
  np.random.shuffle(scores) # shuffle
  calib_scores, val_scores = (scores[:n],scores[n:]) # split
  qhat = np.quantile(calib_scores, np.ceil((n+1)*(1-alpha)/n), method='higher') # calibrate
  coverages[r] = (val_scores <= qhat).astype(float).mean() # see caption
average_coverage = coverages.mean() # should be close to 1-alpha
plt.hist(coverages) # should be roughly centered at 1-alpha
    \end{minted}
    \caption{\textbf{Python code for computing coverage} with efficient score caching. Notice that from the expression for conformal sets in~\eqref{eq:conformal_set}, a validation point is covered if and only if $s(X,Y) \le \hat{q}$, which is how the third to last line is succinctly computing the coverage. \jupyter{https://github.com/aangelopoulos/conformal-prediction/blob/main/notebooks/correctness_checks.ipynb}}
    \label{fig:cache-scores-code}
    \vspace{-0.3cm}
\end{figure}

If properly implemented, conformal prediction is guaranteed to satisfy the inequality in~\eqref{eq:coverage}. 
However, if the reader sees minor fluctuations in the observed coverage, they may not need to worry: the finiteness of $n$, $n_{\textnormal{val}}$, and $R$ can lead to benign fluctuations in coverage which add some width to the Beta distribution in Figure~\ref{fig:beta}.
Appendix~\ref{app:empirical-coverage} gives exact theory for analyzing the mean and standard deviation of $\overline{C}$.
From this, we will be able to tell if any deviation from $1-\alpha$ indicates a problem with the implementation, or if it is benign. 
Code for checking the coverage at all different values of $n$, $n_{\textnormal{val}}$, and $R$ is available in the accompanying Jupyter notebook of Figure~\ref{fig:cache-scores-code}.

\section{Extensions of Conformal Prediction}
\label{sec:advanced}

At this point, we have seen the core of the matter: how to construct prediction sets with coverage in any standard supervised prediction problem.
We now broaden our horizons towards prediction tasks with different structure, such as side information, covariate shift, and so on.
These more exotic problems arise quite frequently in the real world, so we present practical conformal algorithms to address them.

\subsection{Group-Balanced Conformal Prediction}
\label{sec:group-balanced}

In certain settings, we might want prediction intervals that have equal error rates across certain subsets of the data. 
For example, we may require our medical classifier to have coverage that is correct for all racial and ethnic groups. 
To formalize this, we suppose that the first feature of our inputs, $X_{i,1}$, $i = 1, ..., n$ takes values in some discrete set $\{1,...,G\}$ corresponding to categorical groups.
We then ask for \emph{group-balanced} coverage:
\begin{equation}
    \label{eq:group-balanced-coverage}
    \P\left( Y_{\rm test} \in \C(X_{\rm test}) \; \rvert \; X_{\rm test,1} = g \right) \ge 1-\alpha,
\end{equation}
for all groups $g \in \{1,\dots,G\}$.
In words, this means we have a $1-\alpha$ coverage rate for all groups.
Notice that the group output could be a post-processing of the original features in the data.
For example, we might bin the values of $X_{\rm test}$ into a discrete set.

Recall that a standard application of conformal prediction will not necessarily yield coverage within each group simultaneously---that is,~\eqref{eq:group-balanced-coverage} may not be satisfied.
We saw an example in Figure~\ref{fig:conditional-marginal}; the marginal guarantee from normal conformal prediction can still be satisfied even if all errors happen in one group.

In order to achieve group-balanced coverage, we will
simply run conformal prediction seperately for each
group, as visualized below.
\begin{figure}[H]
    \centering
    \includegraphics[width=0.6\linewidth]{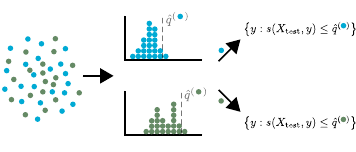}
\end{figure}
Making this formal, given a conformal score function $s$, we stratify the scores on the calibration set by group,
\begin{equation}
    s_i^{(g)} = s(X_j,Y_j) \text{, where } X_{j,1} \text{ is the } i \text{th occurrence of group } g.
\end{equation}
Then, within each group, we calculate the conformal quantile
\begin{equation}
    \hat{q}^{(g)} = \quantile\left(s_1,...,s_{n^{(g)}}; \frac{\big\lceil (n^{(g)}+1)(1-\alpha) \big\rceil}{n^{(g)}} \right)\text{, where } n^{(g)}\text{ is the number of examples of group } g.
\end{equation}
Finally, we form prediction sets by first picking the relevant quantile,
\begin{equation}
    \C(x) = \left\{ y : s(x,y) \leq \hat{q}^{(x_1)} \right\}.
\end{equation}
That is, for a point $x$ that we see falls in group $x_1$, we use the threshold $\hat{q}^{(x_1)}$ to form the prediction set, and so on.
This choice of $\C$ satisfies~\eqref{eq:group-balanced-coverage}, as was first documented by Vovk in~\cite{vovk2012conditional}.
\begin{prop}[Error control guarantee for group-balanced conformal prediction]
Suppose $(X_1,Y_1),\dots,\\(X_n,Y_n), (X_{\text{test}},Y_{test})$ are an i.i.d. sample from some distribution.
Then the set $\C$ defined above satisfies the error control property in~\eqref{eq:group-balanced-coverage}.
\end{prop}

% The group-balanced setting is useful when, at the time of prediction, we know the group---either because we have some sort of metadata, or because we can construct the group directly from $X$.

\subsection{Class-Conditional Conformal Prediction}
In classification problems, we might similarly ask for coverage on \emph{every} ground truth class. 
For example, if we had a medical classifier assigning inputs to class normal or class cancer, we might ask that the prediction sets are 95\% accurate both when the ground truth is class cancer and also when the ground truth is class normal.
Formally, we return to the classification setting, where $\Y = \{1,...,K\}$. 
We seek to achieve \emph{class-balanced} coverage,
\begin{equation}
    \label{eq:class-balanced-coverage}
    \P\left( Y_{\rm test} \in \C(X_{\rm test}) \; \rvert \; Y_{\rm test} = y \right) \geq 1-\alpha,
\end{equation}
for all classes $y \in \{1,\dots,K\}$.

To achieve class-balanced coverage, we will calibrate within each class separately.
The algorithm will be similar to the group-balanced coverage of Section~\ref{sec:group-balanced}, but we must modify it because we do not know the correct class at test time. 
(In contrast, in Section~\ref{sec:group-balanced}, we observed the group information $X_{\mathrm{test},1}$ as an input feature.) 
See the visualization below.
\begin{figure}[H]
    \centering
    \includegraphics[width=0.6\linewidth]{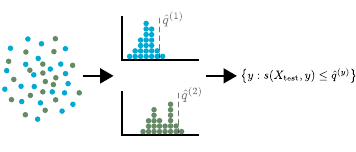}
\end{figure}
Turning to the algorithm, given a conformal score function $s$, stratify the scores on the calibration set by class,
\begin{equation}
    s_i^{(k)} = s(X_j,Y_j) \text{, where } Y_j \text{ is the } i \text{th occurrence of class } k.
\end{equation}
Then, within each class, we calculate the conformal quantile,
\begin{equation}
    \hat{q}^{(k)} = \quantile\left(s_1,...,s_{n^{(k)}}; \frac{\big\lceil (n^{(k)}+1)(1-\alpha) \big\rceil}{n^{(k)}} \right)\text{, where } n^{(k)}\text{ is the number of examples of class } k.
\end{equation}
Finally, we iterate through our classes and include them in the prediction set based on their quantiles:
\begin{equation}
    \C(x) = \left\{ y : s(x,y) \leq \hat{q}^{(y)} \right\}.
\end{equation}
Notice that in the preceding display, we take a provisional value of the response, $y$,
and then use the conformal threshold $\hat{q}^{(y)}$ to
determine if it is included in the prediction set.
This choice of $\C$ satisfies~\eqref{eq:class-balanced-coverage}, as proven by Vovk in~\cite{vovk2012conditional}; another version can be found in~\cite{Sadinle2016LeastAS}.
\begin{prop}[Error control guarantee for class-balanced conformal prediction]
Suppose $(X_1,Y_1),\dots,\\(X_n,Y_n),(X_{\text{test}},Y_{test})$ are an i.i.d. sample from some distribution.
Then the set $\C$ defined above satisfies the error control property in~\eqref{eq:class-balanced-coverage}.
\end{prop}

\subsection{Conformal Risk Control}
\label{subsec:conformal-risk}
So far, we have used conformal prediction to construct prediction sets that bound the \emph{miscoverage},\looseness=-1
\begin{equation}
    \label{eq:miscoverage}
    \P\Big( Y_{\rm test} \notin \C(X_{\rm test}) \Big) \leq \alpha.
\end{equation}
However, for many machine learning problems, the natural notion of error is not miscoverage.
Here we show that conformal prediction can also provide guarantees of the form
\begin{equation}
    \label{eq:risk-upper-bound}
    \E\Big[ \ell\big(\C(X_{\rm test}), Y_{\rm test}\big) \Big] \leq \alpha,
\end{equation}
for any bounded \emph{loss function} $\ell$ that shrinks as $\C$ grows.
This is called a \emph{conformal risk control} guarantee.
Note that~\eqref{eq:risk-upper-bound} recovers~\eqref{eq:miscoverage} when using the miscoverage loss, $\ell\big(C(X_{\rm test}),Y_{\rm test}\big) = \ind{Y_{\rm test} \notin C(X_{\rm test})}$.
However, this algorithm also  extends conformal prediction to situations where other loss functions, such as the false negative rate (FNR), are more appropriate.

As an example, consider multilabel classification. Here, the response $Y_i \subseteq \{1,...,K\}$ a subset of $K$ classes.
Given a trained model $f : \X \to [0,1]^K$, we wish to output sets that include a large fraction of the true classes in $Y_i$. 
To that end, we post-process the model's raw outputs into the set of classes with sufficiently high scores, $\Clam(x) = \{ k : f(X)_k \geq 1- \lambda \}$.
Note that as the threshold $\lambda$ grows, we include more classes in $\Clam(x)$---it becomes more conservative in that we are less likely to omit true classes.
Conformal risk control can be used to find a threshold value $\lhat$ that controls the fraction of missed classes. That is, $\lhat$ can be chosen so that the expected value of $\ell\big( \Clamhat(X_{\rm test}), Y_{\rm test} \big) = 1 - |Y_{\rm test} \cap \Clam(X_{\rm test})|/ |Y_{\rm test}|$ is guaranteed to fall below a user-specified error rate $\alpha$.
For example, setting $\alpha=0.1$ ensures that $\Clamhat(X_{\rm test})$ contains $90\%$ of the true classes in $Y_{\rm test}$ on average. We will work through a multilabel classification example in detail in Section~\ref{subsec:multilabel-fnr}.

Formally, we will consider post-processing the predictions of the model $f$ to create a prediction set $\Clam(\cdot)$. 
The prediction set has a parameter $\lambda$ that
encodes its level of conservativeness: larger $\lambda$ values yield more conservative outputs (e.g., larger prediction sets). 
To measure the quality of the output of $\Clam$, we consider a loss function $\ell(\Clam(x), y) \in (-\infty, B]$ for some $B<\infty$.
We require the loss function to be non-increasing as a function of $\lambda$.
The following algorithm picks $\lhat$ so that risk control as in~\eqref{eq:risk-upper-bound} holds:
\begin{equation}
\label{eq:lhat}
\lhat = \inf\left\{ \lambda : \Rhat(\lambda) \leq \alpha - \frac{B-\alpha}{n} \right\},
\end{equation}
where $\Rhat(\lambda) = \big(\ell\big(\Clam(X_1),Y_1\big) + \ldots + \ell\big(\Clam(X_n),Y_n\big)\big)/n $ is the empirical risk on the calibration data.
Note that this algorithm simply corresponds to tuning based on the empirical risk at a slightly more conservative level than $\alpha$.
For example, if $B=1$, $\alpha=0.1$, and we have $n=1000$ calibration points, then we select $\lhat$ to be the value where empirical risk hits level $\lhat=0.0991$ instead of $0.1$.
\begin{figure}[H]
    \centering
    \hspace{1.5cm}
    \includegraphics[width=0.34\linewidth]{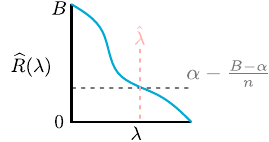}
\end{figure}
Then the prediction set $\Clamhat(X_{\rm test})$ satisfies~\eqref{eq:risk-upper-bound}.
\begin{theorem}[Conformal Risk Control~\cite{angelopoulos2022conformal}]
Suppose $(X_1,Y_1),\dots,(X_n,Y_n),(X_{\text{test}},Y_{test})$ are an i.i.d. sample from some distribution.
	Further, suppose $\ell$ is a monotone function of $\lambda$, i.e., one satisfying
	\begin{equation}
    \label{eq:risk-control-monotonicity}
		\ell\big( \C_{\lambda_1}(x), y \big) \geq \ell\big( \C_{\lambda_2}(x), y \big)
	\end{equation}
  for all $(x,y)$ and $\lambda_1 \leq \lambda_2$. Then
  \begin{equation}
		\E\left[ \ell\big(\Clamhat(X_{\rm test}), Y_{\rm test}\big) \right] \leq \alpha,
  \end{equation}
  where $\lhat$ is picked as in~\eqref{eq:lhat}.
\end{theorem}
Theory and worked examples of conformal risk control are presented in~\cite{angelopoulos2022conformal}.
In Sections~\ref{subsec:multilabel-fnr} and~\ref{subsec:tumor-fnr} we show a worked example of conformal risk control applied to tumor segmentation.
Furthermore, Appendix~\ref{app:ltt} describes a more powerful technique called Learn then Test~\cite{angelopoulos2021learn} capable of controlling general risks that do not satisfy~\eqref{eq:risk-control-monotonicity}.

\subsection{Outlier Detection}
\label{subsec:outlier}
Conformal prediction can also be adapted to handle unsupervised outlier detection. 
Here, we have access to a clean dataset $X_1,\dots,X_n$ and wish to detect when test points do not come from the same distribution. 
As before, we begin with a heuristic model that tries to identify outliers;
a larger score means that the model judges the point more likely to be an outlier.
We will then use a variant of conformal prediction to calibrate it
to have statistical guarantees. In particular, we will
guarantee that it does not return too many false positives. 

Formally, we will construct a function that labels test points as outliers or inliers, $\C : \X \to \{\text{outlier}, \text{inlier}\}$, such that
\begin{equation}
\label{eq:outlier_error_control}
    \P\left(\C(X_\text{test}) = \text{outlier}\right) \le \alpha,
\end{equation}
where the probability is over $X_{\rm test}$, a fresh sample from the clean-data distribution.
The algorithm for achieving~\eqref{eq:outlier_error_control} is similar to the 
usual conformal algorithm. We start with a conformal score $s : \mathcal{X} \to \R$ (note that 
since we are in the unsupervised setting, the score only depends on the features).
Next, we compute the conformal score on the clean data: $s_i = s(X_i)$ for $i=1,\dots,n$.
Then, we compute the conformal threshold in the usual way:
\begin{equation*}
    \hat{q} = \text{quantile}\left(s_1,\ldots ,s_n; \frac{\big\lceil (n+1)(1-\alpha) \big\rceil}{n}\right).
\end{equation*}
Lastly, when we encounter a test point, we declare it to be an outlier if the 
score exceeds $\hat{q}$:
\begin{equation*}
    \C(x) = \begin{cases}
    \text{inlier} & \text{ if } s(x) \le \hat{q} \\
    \text{outlier} & \text{ if } s(x) > \hat{q}
    \end{cases}.
\end{equation*}

This construction guarantees error control, as we record next.
\begin{prop}[Error control guarantee for outlier detection]
\label{prop:outlier}
Suppose $X_1,\dots,X_n,X_{\text{test}}$ are an i.i.d. sample from some distribution.
Then the set $\C$ defined above satisfies the error control property in~\eqref{eq:outlier_error_control}.
\end{prop}

As with standard conformal prediction, the score function is very important for
the method to perform well---that is, to be effective at flagging outliers. 
Here, we wish to 
choose the score function to effectively distinguish the type of outliers that we expect
to see in the test data from the clean data. 
The general problem of training models
to distinguish outliers is sometimes called \emph{anomaly detection}, \emph{novelty detection}, or \emph{one-class classification}, and there are good out-of-the box methods for doing this; see~\cite{pimental2014review}
for an overview of outlier detection.
Conformal outlier detection can also be seen as a hypothesis testing problem; points that are rejected as outliers have a p-value less than $alpha$ for the null hypothesis of exchangeability with the calibration data.
This interpretation is closely related to the classical permutation test~\cite{fisher1936design,pitman1937significance}.
See~\cite{vovk2003testing, guan2019prediction, bates2021multiple}
for more on this interpretation and other statistical properties of conformal outlier detection.

\subsection{Conformal Prediction Under Covariate Shift}
All previous conformal methods rely on Theorem~\ref{thm:conformal_calibration}, which assumes that the incoming test points come from the same distribution as the calibration points.
However, past data is not necessarily representative of future data in practice.

One type of distribution shift that conformal prediction can handle is \emph{covariate shift}.
Covariate shift refers to the situation where the distribution of $X_{\rm test}$ changes from $\mathcal{P}$ to $\mathcal{P}_{\rm test}$, but the relationship between $X_{\rm test}$ and $Y_{\rm test}$, i.e. the distribution of $Y_{\rm test}|X_{\rm test}$, stays fixed.
%Note that the marginal of $Y$ also changes, but only through $X$.
\begin{figure}[H]
    \centering
    \hspace{1.5cm}
    \includegraphics[width=0.34\linewidth]{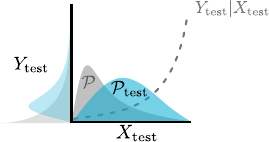}
\end{figure}
Imagine our calibration features  $\{X_i\}_{i=1}^n$ are drawn independently from $\mathcal{P}$ but our test feature $X_{\rm test}$ is drawn from $\mathcal{P}_{\rm test}$.
Then, there has been a covariate shift, and the data are no longer i.i.d.
This problem is common in the real world.
For example,
\begin{itemize}
    \item You are trying to predict diseases from MRI scans. 
    You conformalized on a balanced dataset of 50\% infants and 50\% adults, but in reality, the frequency is 5\% infants and 95\% adults.
    Deploying the model in the real world would invalidate coverage; the infants are over-represented in our sample, so diseases present during infancy will be over-predicted.
    This was a covariate shift in age.
    
    \item You are trying to do instance segmentation, i.e., to segment each object in an image from the background.
    You collected your calibration images in the morning but seek to deploy your system in the afternoon.
    The amount of sunlight has changed, and more people are eating lunch.
    This was a covariate shift in the time of day.
\end{itemize}
To address the covariate shift from $\mathcal{P}$ to $\mathcal{P}_{\rm test}$, one can form valid prediction sets with \emph{weighted conformal prediction}, first developed in~\cite{tibshirani2019conformal}.

In weighted conformal prediction, we account for covariate shift by upweighting conformal scores from calibration points that would be more likely under the new distribution.
We will be using the \emph{likelihood ratio}
\begin{equation}
    w(x) = \frac{\mathrm{d}\mathcal{P}_{\rm test}(x)}{\mathrm{d}\mathcal{P}(x)};
\end{equation}
usually this is just the ratio of the new PDF to the old PDF at the point $x$.
Now we define our weights,
\begin{equation}
    p_i^w(x) = \frac{w(X_i)}{\sum\limits_{j=1}^nw(X_j) + w(x)}\;\;\text{ and }\;\;p_{\rm test}^w(x) = \frac{w(x)}{\sum\limits_{j=1}^nw(X_j) + w(x)}.
\end{equation}
Intuitively, the weight $p_i^w(x)$ is large when $X_i$ is likely under the new distribution, and $p_{\rm test}^w(x)$ is large when the input $x$ is likely under the new distribution.
We can then express our conformal quantile as the $1-\alpha$ quantile of a reweighted distribution,
\begin{equation}
    \hat{q}(x) = \inf\left\{ s_j :  \sum\limits_{i=1}^jp_i^w(x) \ind{s_i \leq s_j} \geq 1-\alpha \right\},
\end{equation}
where above for notational convenience we assume that the scores are ordered from smallest to largest a-priori.
The choice of quantile is the key step in this algorithm, so we pause to parse it.
First of all, notice that the quantile is now a function of an input $x$, although the dependence is only minor.
Choosing $p_i^w(x)=p_{\rm test}^w(x)=\frac{1}{n+1}$ gives the familiar case of conformal prediction---all points are equally weighted, so we end up choosing the $\big\lceil (n+1)(1-\alpha) \big\rceil$th-smallest score as our quantile.
When there is covariate shift, we instead re-weight the calibration
points with non-equal weights to match the test distribution.
If the covariate shift makes easier values of $x$ more likely, it makes our quantile smaller.
This happens because the covariate shift puts more weight on small scores---see the diagram below.
Of course, the opposite holds the covariate shift upweights difficult values of $x$: so the covariate-shift-adjusted quantile grows.
\begin{figure}[H]
    \centering
    \includegraphics[width=0.4\linewidth]{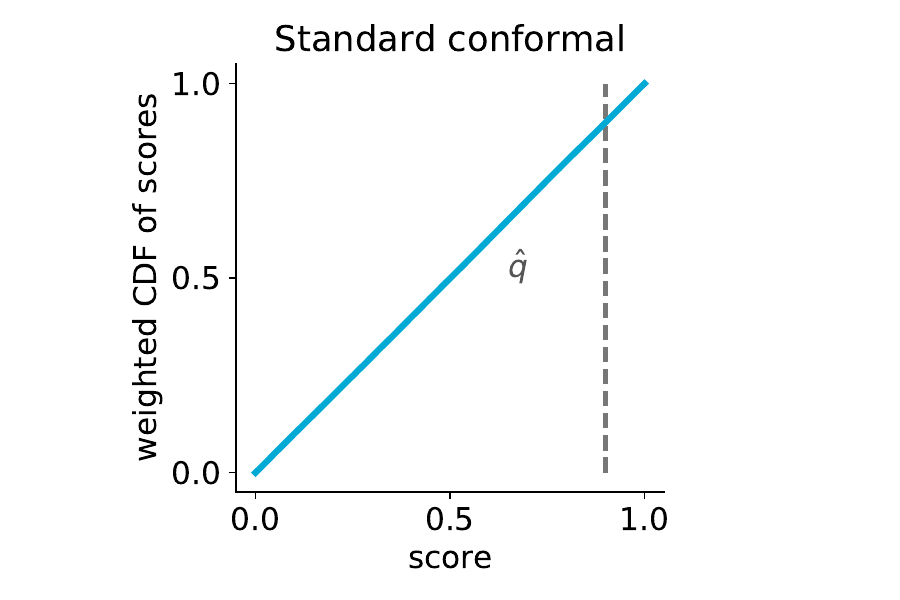}
    \includegraphics[width=0.4\linewidth]{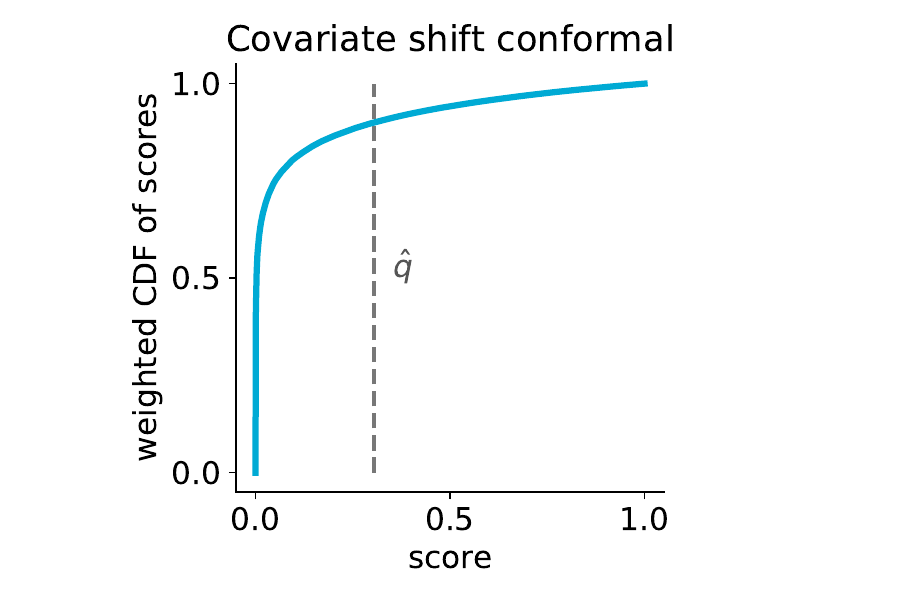}
\end{figure}

With this quantile function in hand, we form our prediction set in the standard way,
\begin{equation}
    \C(x) = \left\{ y : s(x,y) \leq \hat{q}(x)\right\}.
\end{equation}
By accounting for the covariate shift in our choice of $\hat{q}$, we were able to make our calibration data look exchangeable with the test point, achieving the following guarantee.
\begin{theorem}[Conformal prediction under covariate shift~\cite{tibshirani2019conformal}]
    Suppose $(X_1,Y_1),...,(X_n,Y_n)$ are drawn i.i.d. from $\mathcal{P} \times \mathcal{P}_{Y|X}$ and that $(X_{\rm test}, Y_{\rm test})$ is drawn independently from $\mathcal{P}_{\rm test} \times \mathcal{P}_{Y|X}$.
    Then the choice of $\C$ above satisfies
    \begin{equation}
        \P\left( Y_{\rm test} \in \C(X_{\rm test}) \right) \geq 1-\alpha.
    \end{equation}
\end{theorem}

Conformal prediction under various distribution shifts is an active and important area of research with many open challenges.
This algorithm addresses a somewhat restricted case---that of a known covariate shift---but is nonetheless quite practical.

\subsection{Conformal Prediction Under Distribution Drift}
\label{subsec:drift}
Another common form of distribution shift is \emph{distribution drift}: slowly varying changes in the data distribution.
For example, when collecting time-series data, the data distribution may change---furthermore, it may change in a way that is unknown or difficult to estimate.
Here, one can imagine using weights that give more weight to recent conformal scores.
The following theory provides some justification for such \emph{weighted conformal} procedures; in particular, they always satisfy marginal coverage, and are exact when the magnitude of the distribution shift is known. 

More formally, suppose the calibration data $\{(X_i,Y_i)\}_{i=1}^n$ are drawn independently from different distributions $\{\mathcal{P}_i\}_{i=1}^n$ and the test point $(X_{\rm test}, Y_{\rm test})$ is drawn from $\mathcal{P}_{\rm test}$.
Given some weight schedule $w_1,...,w_n$, $w_i \in [0, 1]$, we will consider the calculation of weighted quantiles using the calibration data:
\begin{equation}
  \hat{q} = \inf\left\{ q :  \sum\limits_{i=1}^n \tilde{w}_i \ind{s_i \leq q} \geq 1-\alpha\right\},
\end{equation}
where the $\tilde{w}_i$ are normalized weights,
\begin{equation}
  \tilde{w}_i = \frac{w_i}{w_1 + \ldots + w_n + 1}.
\end{equation}
Then we can construct prediction sets in the usual way,
\begin{equation}
    \C(x) = \left\{ y : s(x,y) \leq \hat{q}\right\}.
\end{equation}

We now state a theorem showing that when the distribution is shifting, it is a good idea to apply a discount factor to old samples.
In particular, let $\epsilon_i = \mathrm{d}_{\rm TV}\big( (X_i, Y_i), (X_{\rm test}, Y_{\rm test}) \big)$ be the TV distance between the $i$th data point and the test data point.
The TV distance is a measure of how much the distribution has shifted---a large $\epsilon_i$ (close to $1$) means the $i$th data point is not representative of the new test point.
The result states that if $w$ discounts those points with large shifts, the coverage remains close to $1-\alpha$. 
\begin{theorem}[Conformal prediction under distribution drift~\cite{barber2022conformal}]
\label{thm:tv-bound}
    Suppose $\epsilon_i = \mathrm{d}_{\rm TV}\big( (X_i, Y_i), (X_{\rm test}, Y_{\rm test}) \big)$.
    Then the choice of $\C$ above satisfies
    \begin{equation}
      \P\left( Y_{\rm test} \in \C(X_{\rm test}) \right) \geq 1-\alpha-2\sum\limits_{i=1}^n\tilde{w}_i\epsilon_i.
    \end{equation}
\end{theorem}
When either factor in the product $\tilde{w}_i\epsilon_i$ is small, that means that the $i$th data point doesn't result in loss of coverage. 
In other words, if there isn't much distribution shift, we can place a high weight on that data point without much penalty, and vice versa. 
Setting $\epsilon_i = 0$ above, we can also see that when there is no distribution shift, there is no loss in coverage regardless of what choice of weights is used---this fact had been observed previously in~\cite{guan2020conformal,tibshirani2019conformal}.

% When the $\epsilon_i$ are known, it is possible to construct an optimal weight schedule (one minimizing set size). 
The $\epsilon_i$ are never known exactly in advance---we only have some heuristic sense of their size.
In practice, for time-series problems, it often suffices to pick either a rolling window of size $K$ or a smooth decay using some domain knowledge about the speed of the drift:
\begin{equation}
  w_i^{\rm fixed} = \ind{i \geq n - K} \qquad \text{ or } \qquad w_i^{\rm decay} = 0.99^{n - i + 1}.
\end{equation}
We give a worked example of this procedure for a distribution shifting over time in Section~\ref{subsec:time-series-conformal}.

As a final point on this algorithm, we note that there is some cost to using this or any other weighted conformal procedure.
In particular, the weights determine the \emph{effective sample size} of the distribution:
\begin{equation}
  n^{\rm eff}(w_1, \ldots,w_n) = \frac{w_1 + \ldots + w_n}{w_1^2 + \ldots + w_n^2}.
\end{equation}
This is quite important in practice, since the variance of the weighted conformal procedure can explode when $n^{\rm eff}$ is small; as in Section~\ref{sec:evaluating}, the variance of coverage scales as $1/\sqrt{n^{\rm eff}}$, which can be large if too many of the $w_i$ are small.
To see more of the theory of weighted conformal prediction under distribution drift, see~\cite{barber2022conformal}.

\section{Worked Examples}

We now show several worked examples of the techniques described in Section~\ref{sec:advanced}.
For each example, we provide Jupyter notebooks that allow the results to be conveniently replicated and extended.

\subsection{Multilabel Classification}
\label{subsec:multilabel-fnr}
\begin{figure}[h]
    \centering
    \includegraphics[width=\linewidth]{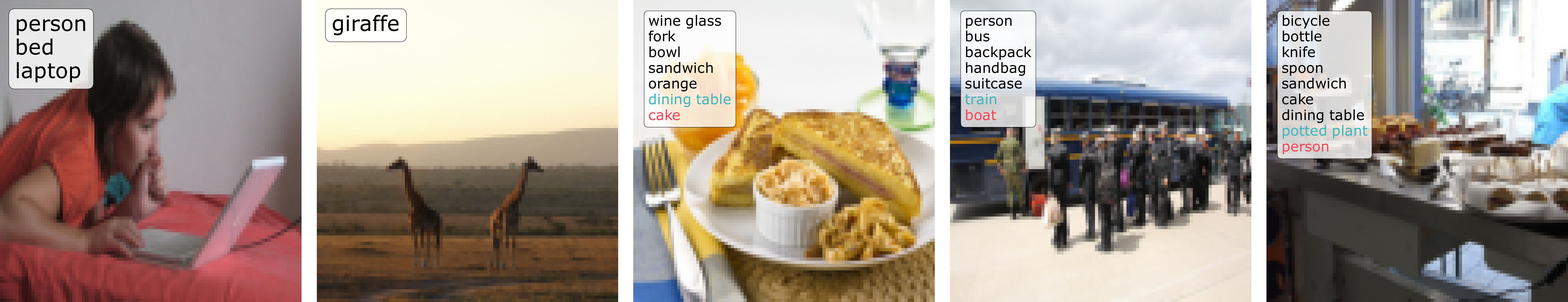}
    \caption{\textbf{Examples of false negative rate control in multilabel classification} on the MS COCO dataset with $\alpha=0.1$.
    False negatives are red, false positives are blue, and true positives are black. \jupyter{https://github.com/aangelopoulos/conformal-prediction/blob/main/notebooks/multilabel-classification-mscoco.ipynb}}
    \label{fig:coco-fnr-examples}
\end{figure}

In the multilabel classification setting, we receive an image and predict which of $K$ objects are in an image.
We have a pretrained model $\hat{f}$ that outputs estimated probabilities for each of the $K$ classes.
We wish to report on the possible classes contained in the image, returning most of the true labels. To this end, we will threshold the model's outputs to get the subset of $K$ classes that the model thinks is most likely, $\Clam(x) = \{y : \hat{f}(x) \geq \lambda\}$, which we call the prediction.
We will use conformal risk control (Section~\ref{subsec:conformal-risk}) to pick the threshold value $\lambda$ certifying a low \emph{false negative rate} (FNR), i.e., to guarantee the average fraction of ground truth classes that the model missed is less than $\alpha$.

More formally, our calibration set $\{(X_i, Y_i)\}_{i=1}^n$ contains exchangeable images $X_i$ and sets of classes $Y_i \subseteq \{1,...,K\}$.
With the notation of Section~\ref{subsec:conformal-risk}, we set our loss function to be $\ell_{\rm FNR}(\Clam(x),y)=1-|\Clam(x) \cap y|/|y|$.
Then, picking $\lhat$ as in~\ref{eq:lhat} yields a bound on the false negative rate,
\begin{equation}
    \label{eq:fnr-bound}
    \E\left[\ell_{\rm FNR}\big(\Clamhat(X_{\rm test}), Y_{\rm test}\big)\right] \leq \alpha. 
\end{equation}
Figure~\ref{fig:coco-fnr-examples} gives results and code for FNR control on the Microsoft Common Objects in Context dataset~\cite{lin2014microsoft}.

\subsection{Tumor Segmentation}
\label{subsec:tumor-fnr}
\begin{figure}[h]
    \centering
    \includegraphics[width=\linewidth]{figures/worked-examples-figures/multipolyp_grid_fig.pdf}
    \caption{\textbf{Examples of false negative rate control in tumor segmentation} with $\alpha=0.1$.
    False negatives are red, false positives are blue, and true positives are black. \jupyter{https://github.com/aangelopoulos/conformal-prediction/blob/main/notebooks/tumor-segmentation.ipynb}}
    \label{fig:tumor-fnr-examples}
\end{figure}

In the tumor segmentation setting, we receive an $M \times N \times 3$ image of a tumor and predict an $M \times N$ binary mask, where `1' indicates a tumor pixel.
We start with a pretrained segmentation model $\hat{f}$ that outputs an $M \times N$ grid of the estimated probabilities that each pixel is a tumor pixel.
We will threshold the model's outputs to get our predicted binary mask, $\Clam(x) = \{(i,j) : \hat{f}(x)_{(i,j)} \geq \lambda\}$, which we call the prediction.
We will use conformal risk control (Section~\ref{subsec:conformal-risk}) to pick the threshold value $\lambda$ certifying a low FNR, i.e., guaranteeing the average fraction of tumor pixels missed is less than $\alpha$.

More formally, our calibration set $\{(X_i, Y_i)\}_{i=1}^n$ contains exchangeable images $X_i$ and sets of tumor pixels $Y_i \subseteq \{1, \ldots, M\} \times \{1, \ldots, N\}$.
As in the previous example, we let the loss be the false negative proportion, $\ell_{\rm FNR}$. 
Then, picking $\lhat$ as in~\ref{eq:lhat} yields the bound on the FNR in~\ref{eq:fnr-bound}.
Figure~\ref{fig:tumor-fnr-examples} gives results and code on a dataset of gut polyps.

\subsection{Weather Prediction with Time-Series Distribution Shift}
\label{subsec:time-series-conformal}
\begin{figure}[h]
    \centering
\includegraphics[width=\linewidth]{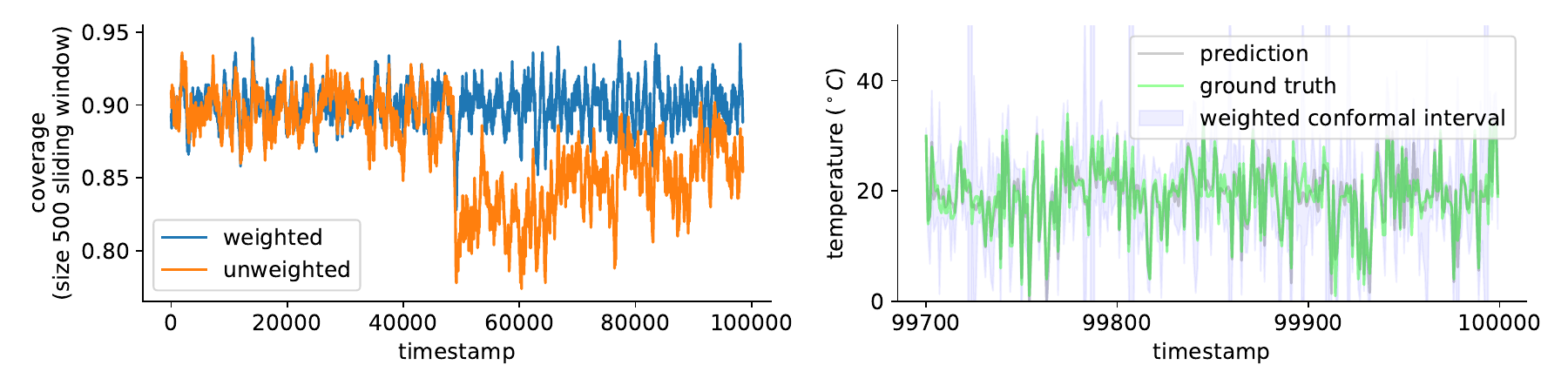}
    \caption{\textbf{Conformal prediction for time-series temperature estimation} with $\alpha=0.1$.
    On the left is a plot of coverage over time; `weighted' denotes the procedure in Section~\ref{subsec:time-series-conformal} while `unweighted' denotes the procedure that simply computes the conformal quantile on all conformal scores seen so far.
    Note that we compute coverage using a sliding window of 500 points, which explains some of the variability in the coverage.
    Running the notebook with a trailing average of 5000 points reveals that the unweighted version systematically undercovers before the change-point as well.
    On the right is a plot showing the intervals resulting from the weighted procedure.
    \jupyter{https://github.com/aangelopoulos/conformal-prediction/blob/main/notebooks/weather-time-series-distribution-shift.ipynb}}
    \label{fig:time-series-example}
\end{figure}
In this example we seek to predict the temperature of different locations on Earth given covariates such as the latitude, longitude, altitude, atmospheric pressure, and so on.
We will make these predictions serially in time.
Dependencies between adjacent data points induced by local and global weather changes violate the standard exchangeability assumption, so we will need to apply the method from Section~\ref{subsec:drift}.

In this setting, we have a time series $\big\{ (X_t,Y_t) \big\}_{t=1}^T$, where the $X_t$ are tabular covariates and the $Y_t \in \R$ are temperatures in degrees Celsius.
Note that these data points are not exchangeable or i.i.d.; adjacent data points will be correlated.
We start with a pretrained model $\hat{f}$ taking features and predicting temperature and an uncertainty model $\hat{u}$ takes features and outputs a scalar notion of uncertainty.
Following Section~\ref{subsec:uncertainty-scalar}, we compute the conformal scores
\begin{equation}
    s_t = \frac{\big| Y_t - \hat{f}(X_t) \big|}{\hat{u}(X_t)}.
\end{equation}

Since we observe the data points sequentially, we also observe the scores sequentially, and we will need to pick a different conformal quantile for each incoming data point.
More formally, consider the task of predicting the temperature at time $t \leq T$. 
We use the weighted conformal technique in Section~\ref{subsec:time-series-conformal} with the fixed $K$-sized window $w_{t'} = \ind{t' \geq t - K}$ for all $t' < t$.
This yields the quantiles
\begin{equation}
  \hat{q}_{t} = \inf\left\{ q :  \frac{1}{\min(K,t'-1)+1}\sum\limits_{t'=1}^{t-1} s_{t'} \ind{t' \geq t - K} \geq 1-\alpha\right\}.
\end{equation}
With these adjusted quantiles in hand, we form prediction sets at each time step in the usual way,
\begin{equation}
    \C(X_{t}) = \Big[ \hat{f}(X_{t}) - \hat{q}_{t}\hat{u}(X_{t}) \; , \; \hat{f}(X_{t}) + \hat{q}_{t}\hat{u}(X_{t}) \Big].
\end{equation}

We run this procedure on the Yandex Weather Prediction dataset.
This dataset is part of the Shifts Project~\cite{malinin2021shifts}, which also provides an ensemble of 10 pretrained CatBoost~\cite{dorogush2018catboost} models for making the temperature predictions.
We take the average prediction of these models as our base model $\hat{f}$.
Each of the models has its own internal variance; we take the average of these variances as our uncertainty scalar $\hat{u}$.
The dataset includes an in-distribution split of fresh data from the same time frame that the base model was trained and an out-of-distribution split consisting of time windows the model has never seen.
We concatenate these datasets in time, leading to a large change point in the score distribution.
Results in Figure~\ref{fig:time-series-example} show that the weighted method works better than a naive unweighted conformal baseline, achieving the desired coverage in steady-state and recovering quickly from the change point.
There is no hope of measuring the TV distance between adjacent data points in order to apply Theorem~\ref{thm:tv-bound}, so we cannot get a formal coverage bound.
Nonetheless, the procedure is useful with this simple fixed window of weights, which we chose with only a heuristic understanding of the distribution drift speed.
It is worth noting that conformal prediction for time-series applications is a particularly active area of research currently, and the method we have presented is not clearly the best. See~\cite{gibbs2021adaptive,zaffran2022adaptive,gibbs2022conformal} and~\cite{xu2021conformal} for two differing perspectives.

\subsection{Toxic Online Comment Identification via Outlier Detection}
\label{subsec:toxic-text}
\begin{figure}[ht]
    \centering
    \includegraphics[width=\textwidth]{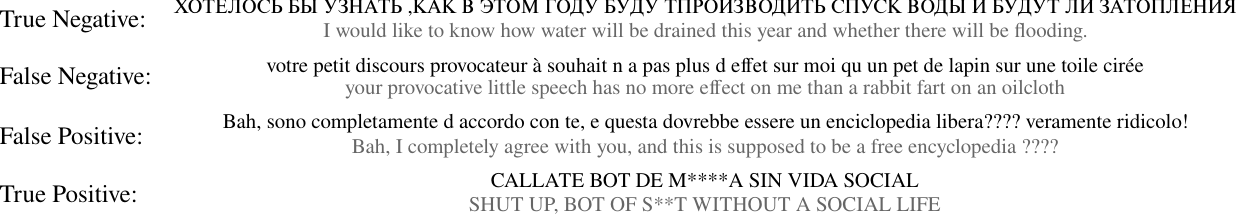}
    \caption{\textbf{Examples of toxic online comment identification with type-1 error control} at level $\alpha=0.1$ on the Jigsaw Multilingual Toxic Comment Classification dataset. \jupyter{https://github.com/aangelopoulos/conformal-prediction/blob/main/notebooks/toxic-text-outlier-detection.ipynb}}
    \label{fig:toxic-text}
\end{figure}

We provide a type-1 error guarantee on a model that flags toxic online comments, such as threats, obscenity, insults, and identity-based hate.
Suppose we are given $n$ non-toxic text samples $X_1, ..., X_n$ and asked whether a new text sample $X_{\rm test}$ is toxic.
We also have a pre-trained toxicity prediction model $\hat{f}(x) \in [0,1]$, where values closer to 1 indicate a higher level of toxicity.
The goal is to flag as many toxic comments as possible while not flagging more than $\alpha$ proportion of non-toxic comments.

The outlier detection procedure in Section~\ref{subsec:outlier} applies immediately.
First, we run the model on each calibration point, yielding conformal scores $s_i = \hat{f}(X_i)$.
Taking the toxicity threshold $\hat{q}$ to be the $\lceil (n+1)(1-\alpha) \rceil$-smallest of the $s_i$, we construct the function
\begin{equation}
    \C(x) = \begin{cases}
                \mathrm{inlier} & \hat{f}(x) \leq \hat{q} \\
                \mathrm{outlier} & \hat{f}(x) > \hat{q}.
            \end{cases}
\end{equation}  
This gives the guarantee in Proposition~\ref{prop:outlier}---no more than $\alpha$ fraction of future nontoxic text will be classified as toxic.

Figure~\ref{fig:toxic-text} shows results of this procedure using the Unitary Detoxify BERT-based model~\cite{hanu2020detoxify,devlin2018bert} on the Jigsaw Multilingual Toxic Comment Classification dataset from the WILDS benchmark~\cite{koh2021wilds}.
It is composed of comments from the talk channels of Wikipedia pages.
With a type-1 error of $\alpha=10\%$, the system correctly flags $70\%$ of all toxic comments.

\subsection{Selective Classification}
\begin{figure}[ht]
    \centering
    \includegraphics[width=\textwidth]{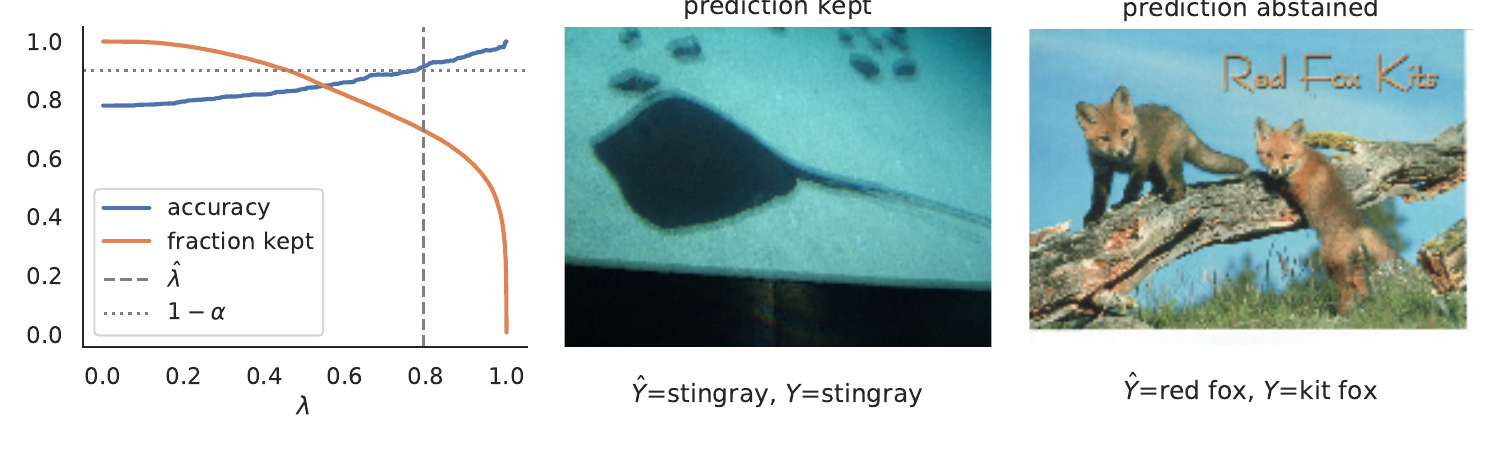}
    \caption{\textbf{Results using selective classification on Imagenet} with $\alpha=0.1$. \jupyter{https://github.com/aangelopoulos/conformal-prediction/blob/main/notebooks/imagenet-selective-classification.ipynb}}
    \label{fig:selective-classification}
\end{figure}
In many situations, we only want to show a model's predictions when it is confident.
For example, we may only want to make medical diagnoses when the model will be 95\% accurate, and otherwise to say ``I don't know."
We next demonstrate a system that strategically abstains in order to achieve a higher accuracy than the base model in the problem of image classification.

More formally, given image-class pairs $\{(X_i,Y_i)\}_{i=1}^n$ and an image classifier $\hat{f}$, we seek to ensure
\begin{equation}
    \label{eq:selective-accuracy}
    \P\left( Y_{\rm test} = \widehat{Y}(X_{\rm test}) \: \big| \widehat{P}(X_{\rm test}) \geq  \lhat \right) \geq 1-\alpha,
\end{equation}
where $\widehat{Y}(x)=\arg \max_y\,\hat{f}(x)_y$, $\widehat{P}(X_{\rm test}) = \max_y\,\hat{f}(x)_y$, and $\lhat$ is a threshold chosen using the calibration data.
This is called a \emph{selective accuracy} guarantee, because the accuracy is only computed over a subset of high-confidence predictions. This quantity cannot be controlled with techniques we've seen so far, since we are not guaranteed that model accuracy is monotone in the cutoff $\lambda$.
Nonetheless, it can be handled with Learn then Test---a framework for controlling arbitrary risks (see Appendix~\ref{app:ltt}).
We show only the special case of controlling selective classification accuracy here.

We pick the threshold using based on the empirical estimate of selective accuracy on the calibration set,
\begin{equation}
    \label{eq:rhat-selective-classification}
    \Rhat(\lambda) = \frac{1}{n(\lambda)}\sum\limits_{i=1}^n \ind{Y_{i} \neq \widehat{Y}(X_{i}) \text{ and } \widehat{P}(X_i) \geq \lambda}, \text{ where } n(\lambda) = \sum\limits_{i=1}^n \ind{ \widehat{P}(X_i) \geq \lambda }.
\end{equation}
Since this function is not monotone in $\lambda$, we will choose $\lhat$ differently than in Section~\ref{subsec:conformal-risk}. In particular, we will scan across values of $\lambda$ looking at a conservative upper bound for the true risk (i.e., the top end of a confidence interval for the selective misclassification rate).
Realizing that $\Rhat(\lambda)$ is a Binomial random variable with $n(\lambda)$ trials, we upper-bound the misclassification error as
\begin{equation}
    \Rhat^+(\lambda) = \sup\left\{ r \: : \: \text{BinomCDF}( \Rhat(\lambda) ; \: n(\lambda),r) \geq \delta \right\}
\end{equation}
for some user-specified failure rate $\delta \in [0,1]$.
Then, scan the upper bound until the last time the bound exceeds $\alpha$,
\begin{equation}
    \label{eq:lhat-selective-classification}
    \lhat = \inf\left\{\lambda : \Rhat^+(\lambda') \leq \alpha \text{ for all } \lambda' \geq \lambda \right\}.
\end{equation}
Deploying the threshold $\lhat$ will satisfy~\eqref{eq:selective-accuracy} with high probability.
\begin{prop}
Assume the $\{(X_i,Y_i)\}_{i=1}^n$ and $(X_{\rm test},Y_{\rm test})$ are i.i.d. and $\lhat$ is chosen as above. Then~\eqref{eq:selective-accuracy} is satisfied with probability $1-\delta$.
\end{prop}

See results on Imagenet at level $\alpha=0.1$ in Figure~\ref{fig:selective-classification}.
For a deeper dive into this procedure and techniques for controlling other non-monotone risks, see Appendix~\ref{app:ltt}.

\section{Full conformal prediction}
\label{sec:full_conformal}
Up to this point, we have only considered \emph{split conformal prediction}, otherwise known as inductive conformal prediction.
This version of conformal prediction is computationally attractive, since it only requires fitting the model one time, but it sacrifices statistical efficiency because it requires splitting the data into training and calibration datasets. 
Next, we consider \emph{full conformal prediction}, or transductive conformal prediction, which avoids data splitting at the cost of many more model fits. 
Historically, full conformal prediction was developed first, and then split conformal prediction was later recognized as an important special case. 
Next, we describe full conformal prediction. This discussion is motivated from three points of view. First, full conformal prediction is an elegant, historically important idea in our field. 
Second, the exposition will reveal a complimentary interpretation of conformal prediction as a hypothesis test. 
Lastly, full conformal prediction is a useful algorithm when statistical efficiency is of paramount importance.

\subsection{Full Conformal Prediction}
This topic requires expanded notation. 
Let $(X_1, Y_1), \dots, (X_{n+1}, Y_{n+1})$ be $n+1$ exchangeable data points.
As before, the user sees $(X_1, Y_1), \dots, (X_{n}, Y_{n})$ and $X_{n+1}$, and wishes to make a prediction set that contains $Y_{n+1}$. 
But unlike split conformal prediction, we allow the model to train on all the data points, so there is no separate calibration dataset.

The core idea of full conformal prediction is as follows.
We know that the true label, $Y_{n+1}$, lives somewhere in $\Y$ --- so if we loop over all possible $y \in \Y$, then we will eventually hit the data point $(X_{n+1},Y_{n+1})$, which is exchangeable with the first $n$ data points.
Full conformal prediction is so-named because it directly computes this loop.
For each $y \in \mathcal{Y}$, we fit a new model $\hat{f}^y$ to the augmented dataset $(X_1,Y_1), \ldots, (X_{n+1},y)$.
Importantly, the model fitting for $\hat{f}$ must be invariant to permutations of the data.
Then, we compute a score function $s_i^y = s(X_i, Y_i, \hat{f}^y)$ for i = 1,\dots,n and $s_{n+1}^y = s(X_{n+1}, y, \hat{f}^y)$.  
This score function is exactly the same as those from Section~\ref{sec:examples-conformal}, except that the model $\hat{f}^y$ is now given as an argument because it is no longer fixed.
Then, we calculate the conformal quantile,
\begin{equation}
    \hat{q}^y = \quantile\left( s_1^y, \ldots, s_n^y; \frac{\lceil (n+1)(1-\alpha) \rceil}{n} \right).
\end{equation}
Then, we collect all values of $y$ that are sufficiently consistent with the previous data $(X_1, Y_1), \dots, (X_{n}, Y_{n})$ are collected into a confidence set for the unknown value of $Y_{n+1}$:
\begin{equation}
    \label{eq:full-conformal-set}
    \C(X_{\rm test}) = \{ y : s^y_{n+1} \leq \hat{q}^y \}.
\end{equation}
This prediction set has the same validity guarantee as before:
\begin{theorem}[Full conformal coverage guarantee~\cite{vovk2005algorithmic}]
    \label{thm:full-conformal}
    Suppose $(X_1,Y_1),...,(X_{n+1},Y_{n+1})$ are drawn i.i.d. from $\mathcal{P}$, and that $\hat{f}$ is a symmetric algorithm.
    Then the choice of $\C$ above satisfies
    \begin{equation}
        \P\left( Y_{n + 1} \in \C(X_{n + 1}) \right) \geq 1-\alpha.
    \end{equation}
\end{theorem}
More generally, the above holds for exchangeable random variables $(X_1,Y_1),...,(X_{n+1},Y_{n+1})$; the proof of Theorem~\ref{thm:full-conformal} critically relies on the fact that the score $s_{n+1}^{Y_{n+1}}$ is exchangeable with $s_1^{Y_{n+1}}, \ldots, s_n^{Y_{n+1}}$. We defer the proof to~\cite{vovk2005algorithmic}, and note that upper bound in~\eqref{eq:coverage} also holds when the score function is continuous.

What about computation?
In principle, to compute~\eqref{eq:full-conformal-set}, we must iterate over all $y \in \mathcal{Y}$, which leads to a substantial computational burden. 
(When $\mathcal{Y}$ is continuous, we would typically first discretize the space and then check each element in a finite set.) 
For example, if $|Y|=K$, then computing~\eqref{eq:full-conformal-set} requires $(n+1) \cdot K$ model fits. 
For some specific score functions, the set in~\eqref{eq:full-conformal-set} can actually be computed exactly even for continuous $Y$, and we refer the reader to~\cite{vovk2005algorithmic} and~\cite{shafer2008tutorial} for a summary of such cases and~\cite{ndiaye2019, Ndiaye2022root-finding} for recent developments. 
Still, full conformal prediction is generally computationally costly.

Lastly, we give a statistical interpretation for the prediction set in~\eqref{eq:full-conformal-set}. 
The condition 
\begin{equation}
    s^y_{n+1} \leq \hat{q}^y
\end{equation}
is equivalent to the acceptance condition of a certain permutation test. 
To see this, consider a level $\alpha$ permutation test for the exchangeability of $s_1^y,\dots,s_n^y$ and the test score $s_{n+1}^y$, rejecting when the score function is large. 
The values of $y$ such that the test does not reject are exactly those in~\eqref{eq:full-conformal-set}.
In words, the confidence set is all values of $y$ such that the hypothetical data point is consistent with the other data, as judged by this permutation test.
We again refer the reader to~\cite{vovk2005algorithmic} for more on this viewpoint on conformal prediction.

\subsection{Cross-Conformal Prediction, CV+, and Jackknife+}

Split conformal prediction requires only one model fitting step, but sacrifices statistical efficiency. On the other hand, full conformal prediction requires a very large number of model fitting steps, but has high statistical efficiency. These are not the only two achievable points on the spectrum---there are techniques that fall in between, trading off statistical efficiency and computational efficiency differently. In particular, cross-conformal prediction~\cite{vovk2015cross} and CV+/Jackknife+~\cite{barber2021predictive} both use a small number of model fits, but still use all data for both model fitting and calibration. We refer the reader to those works for a precise description of the algorithms and corresponding statistical guarantees.

\section{Historical Notes on Conformal Prediction}
\label{sec:history}

We hope the reader has enjoyed reading the technical content in our gentle introduction.
As a d\'enouement, we now pay homage to the history of conformal prediction. Specifically, we will trace the history of techniques related to conformal prediction that are distribution-free, i.e., (1) agnostic to the model, (2) agnostic to the data distribution, and (3) valid in finite samples.
There are other lines of work in statistics with equal claim to the term ``distribution-free'' especially when it is interpreted asymptotically, such as permutation tests~\cite{chung2013exact}, quantile regression~\cite{koenker1978regression}, rank tests~\cite{mann1947test,lehmann1953power,sidak1999theory}, and even the bootstrap~\cite{efron1994introduction,chatterjee2009distribution}---the following is not a history of those topics.
Rather, we focus on the progenitors and progeny of conformal prediction.

\subsection*{Origins}
The story of conformal prediction begins sixty-three kilometers north of the seventh-largest city in Ukraine, in the mining town of Chervonohrad in the Oblast of Lviv, where Vladimir Vovk spent his childhood.
Vladimir's parents were both medical professionals, of Ukrainian descent, although the Lviv region changed hands many times over the years.
During his early education, Vovk recalls having very few exams, with grades mostly based on oral answers.
He did well in school and eventually took first place in the Mathematics Olympiad in Ukraine; he also got a Gold Medal, meaning he was one of the top graduating secondary school students.
Perhaps because he was precocious, his math teacher would occupy him in class by giving him copies of a magazine formerly edited by Isaak Kikoin and Andrey Kolmogorov, \href{https://archive.org/details/kvant-journal}{Kvant}, where he learned about physics, mathematics, and engineering---see Figure~\ref{fig:quant}.
Vladimir originally attended the Moscow Second Medical Institute (now called the Russian National Research Medical University) studying Biological Cybernetics, but eventually became disillusioned with the program, which had too much of a medical emphasis and imposed requirements to take classes like anatomy and physiology (there were ``too many bones with strange Latin names'').
Therefore, he sat the entrance exams a second time and restarted school at the Mekh-Mat (faculty of mechanics and mathematics) in Moscow State University.
In his third year there, he became the student of Andrey Kolmogorov.
This was when the seeds of conformal prediction were first laid.
Today, Vladimir Vovk is widely recognized for being the co-inventor of conformal prediction, along with collaborators Alexander Gammerman, Vladimir Vapnik, and others, whose contributions we will soon discuss.
First, we will relay some of the historical roots of conformal prediction, along with some oral history related by Vovk that may be forgotten if never written.

\begin{figure}[H]
    \centering
    \begin{minipage}{.22\textwidth}
        \centering
        \includegraphics[width=\linewidth]{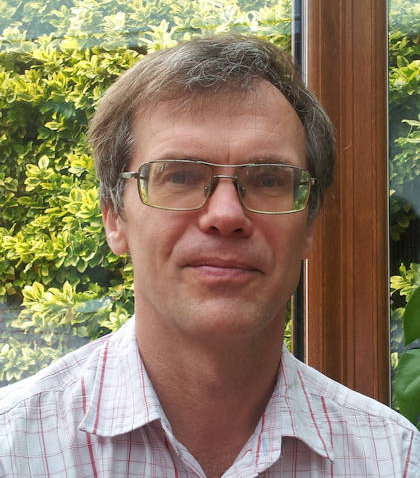}
        \caption*{\textbf{Vladimir Vovk}}
    \end{minipage}%
    \begin{minipage}{0.78\textwidth}
        \centering
        \includegraphics[width=\linewidth]{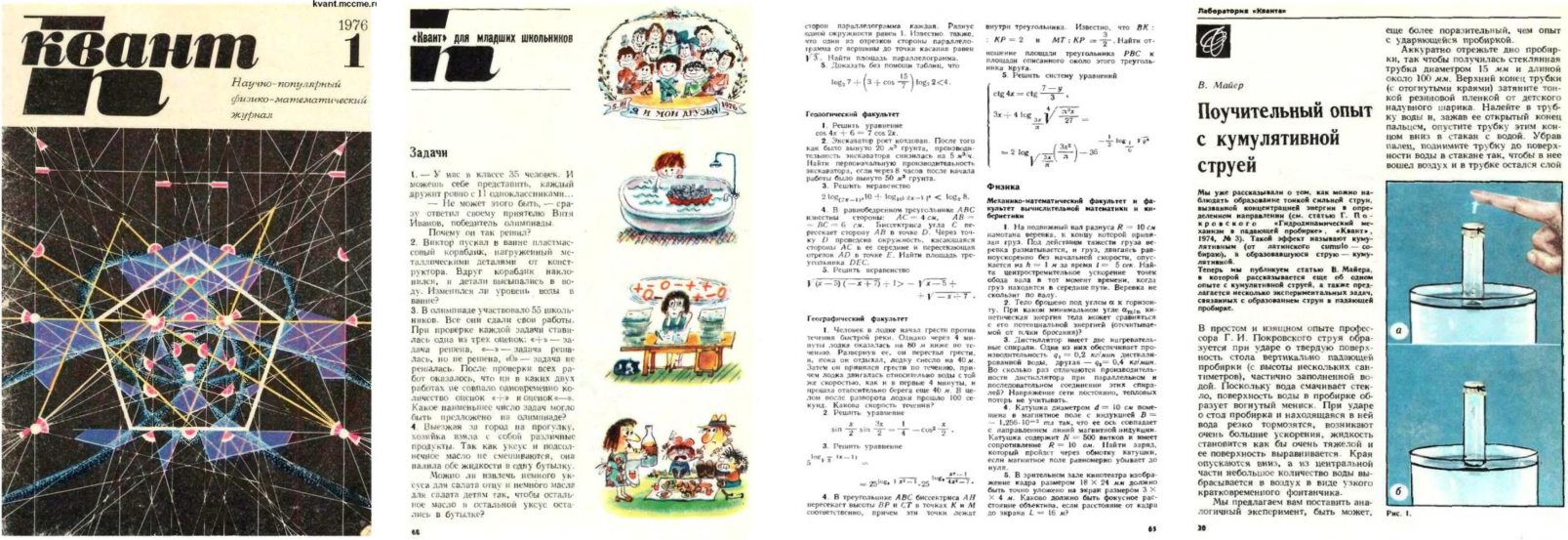}
        \caption{\textbf{Pages from the 1976 edition of Kvant magazine.}}
        \label{fig:quant}
    \end{minipage}
\end{figure}

Kolmogorov and Vovk met approximately once a week during his three remaining years as an undergraduate at MSU.
At that time, Kolmogorov took an interest in Vovk, and encouraged him to work on difficult mathematical problems.
Ultimately, Vovk settled on studying a topic of interest to Kolmogorov: algorithmically random sequences, then known as \emph{collectives}, and which were modified into \emph{Bernoulli sequences} by Kolmogorov.

Work on collectives began at the turn of the 20th century, with Gustav Fechner's \emph{Kollectivmasslehre}~\cite{fechner1897kollektivmasslehre}, and was developed significantly by von Mises~\cite{mises1919grundlagen}, Abraham Wald~\cite{wald1937widerspruchfreiheit}, Alonzo Church~\cite{church1940concept}, and so on.
A long debate ensued among these statisticians as to whether von Mises' axioms formed a valid foundation for probability, with Jean Ville being a notable opponent~\cite{ville1939etude}.
Although the theory of von Mises' collectives is somewhat defunct, the mathematical ideas generated during this time continue to have a broad impact on statistics, as we will see.
More careful historical reviews of the original debate on collectives exist elsewhere~\cite{shafer2006sources,church1940concept,vovk2001kolmogorov,porter2014kolmogorov}.
We focus on its connection to the development of conformal prediction.

Kolmogorov's interest in  \emph{Bernoulli sequences} continued into the 1970s and 1980s, when Vovk was his student.
Vovk recalls that, on the way to the train station, Kolmogorov told him (not in these exact words),
\begin{center}
    ``Look around you; you do not only see infinite sequences. There are finite sequences."
\end{center}
Feeling that the finite case was practically important, Kolmogorov extended the idea of collectives via Bernoulli sequences.
\begin{definition}[Bernoulli sequence, informal]
  A deterministic binary sequence of length n with k 1s is Bernoulli if it is a ``random" element of the set of all $\binom{n}{k}$ sequences of the same length and with the same number of 1s. ``Random" is defined as having a Kolmogorov complexity close to the maximum, $\log \binom{n}{k}$.
\end{definition}

As is typical in the study of random sequences, the underlying object itself is not a sequence of random variables. Rather, Kolmogorov quantified the ``typicality'' of a sequence via Kolmogorov complexity: he asked how long a program we would need to write in order to distinguish it from other sequences in the same space~\cite{kolmogorov1965three,kolmogorov1968logical,kolmogorov1983combinatorial}.
Vovk's first work on random sequences modified Kolmogorov's~\cite{vovk1986firstpaper} definition to better reflect the randomness in an event like a coin toss.
Vovk discusses the history of Bernoulli sequences, including the important work done by Martin-L\"of and Levin, in the Appendix of~\cite{vovk2021testing}.
Learning the theory of Bernoulli sequences brought Vovk closer to understanding finite-sample exchangeability and its role in prediction problems.

We will make a last note about the contributions of the early probabilists before moving to the modern day.
The concept of a nonconformity score came from the idea of (local) \emph{randomness deficiency}.
Consider the sequence
\begin{equation}
    00000000000000000000000000000000000000000000000000000000000000000001.
\end{equation}
With a computer, we could write a very short program to identify the `1' in the sequence, since it is atypical --- it has a \emph{large} randomness deficiency.  
But to identify any particular `0' in the sequence, we must specify its location, because it is so typical --- it has a \emph{small} randomness deficiency.
A heuristic understanding suffices here, and we defer the formal definition of randomness deficiency to~\cite{mota2013sophistication}, avoiding the notation of Turing machines and Kolmogorov complexity.
When randomness deficiency is large, a point is atypical, just like the scores we discussed in Section~\ref{sec:examples-conformal}.
These ideas, along with the existing statistical literature on tolerance intervals~\cite{wilks1941,wilks1942,wald1943,tukey1947} and works related to de Finetti's theorems on exchangeability~\cite{diaconis1980finite,aldous1985exchangeability,de1929funzione,freedman1965bernard,hewitt1955symmetric,kingman1978uses} formed the seedcorn for conformal prediction: the rough notion of collectives eventually became exchangeability, and the idea of randomness deficiency eventually became nonconformity.
Furthermore, the early literature on tolerance intervals was quite close mathematically to conformal prediction---indeed, the fact that order statistics of a uniform distribution are Beta distributed was known at the time, and this was used to form prediction regions in high probability, much like~\cite{vovk2012conditional}; more on this connection is available in Edgar Dobriban's lecture notes~\cite{dobriban2022topics}. 

\subsection*{Enter Conformal Prediction}
The framework we now call conformal prediction was hatched by Vladimir Vovk, Alexander Gammerman, Craig Saunders, and Vladimir Vapnik in the years 1996-1999, first using e-values~\cite{gammerman1998learning} and then with p-values~\cite{saunders1999transduction, vovk1999machine}.
For decades, Vovk and collaborators developed the theory and applications of conformal prediction.
Key moments include:
\begin{itemize}
    \item the 2002 proof that in online conformal prediction, the probability of error is independent across time-steps~\cite{vovk2002line};
    \item the 2002 development, along with Harris Papadopoulos and Kostas Proedrou, of split-conformal predictors~\cite{papadopoulos2002inductive};
    \item Glenn Shafer coins the term ``conformal predictor'' on December 1, 2003 while writing \emph{Algorithmic Learning in a Random World} with Vovk~\cite{vovk2005algorithmic}.
    \item the 2003 development of Venn Predictors~\cite{vovk2003self} (Vovk says this idea came to him on a bus in Germany during the Dagstuhl seminar ``Kolmogorov Complexity \& Applications'');
    \item the 2012 founding of the Symposium on Conformal and Probabilistic Prediction and its Applications (COPA), hosted in Greece by Harris Papadopoulos and colleagues;
    \item the 2012 creation of cross-conformal predictors~\cite{vovk2015cross} and Venn-Abers predictors~\cite{vovk2012venn};
    \item The 2017 invention of conformal predictive distributions~\cite{vovk2017nonparametric}.
\end{itemize}
\href{http://alrw.net/}{\emph{Algorithmic Learning in a Random World}}~\cite{vovk2005algorithmic}, by Vovk, Gammerman, and Glenn Shafer, contains further perspective on the history described above in the bibliography of Chapter 2 and the main text of Chapter 10.
Also, the book's website links to several dozen technical reports on conformal prediction and related topics.
We now help the reader understand some of these key developments.

Conformal prediction was recently popularized in the United States by the pioneering work of Jing Lei, Larry Wasserman, and colleagues~\cite{lei2011efficient,lei2014distribution,lei2013distribution,poczos2013distribution,lei2014distribution, lei2018distribution}.
Vovk himself remembers Wasserman's involvement as a landmark moment in the history of the field.
In particular, their general framework for distribution-free predictive inference in regression~\cite{lei2018distribution} has been a seminal work.
They have also, in the special cases of kernel density estimation and kernel regression, created efficient approximations to full conformal prediction~\cite{lei2013conformal,lei2014distribution}.
Jing Lei also created a fast and exact conformalization of the Lasso and elastic net procedures~\cite{lei2019fast}.
Another equally important contribution of theirs was to introduce conformal prediction to thousands of researchers, including the authors of this paper, and also Rina Barber, Emmanuel Cand\`es, Aaditya Ramdas, Ryan Tibshirani who themselves have made recent fundamental contributions.
Some of these we have already touched upon in Section~\ref{sec:examples-conformal}, such as adaptive prediction sets, conformalized quantile regression, covariate-shift conformal, and the idea of conformal prediction as indexing nested sets~\cite{gupta2020nested}.

This group also did fundamental work circumscribing the conditions under which distribution-free conditional guarantees can exist~\cite{foygel2021limits}, building on previous works by Vovk, Lei, and Wasserman that showed for an arbitrary continuous distribution, conditional coverage is impossible~\cite{vovk2012conditional,lei2014distribution,lei2018distribution}.
More fine-grained analysis of this fact has also recently been done in~\cite{lee2021distribution}, showing that vanishing-width intervals are achievable if and only if the effective support size of the distribution of $X_{\rm test}$ is smaller than the square of the sample size.

\subsection*{Current Trends}
We now discuss recent work in conformal prediction and distribution-free uncertainty quantification more generally, providing pointers to topics we did not discuss in earlier sections.
Many of the papers we cite here would be great starting points for novel research on distribution-free methods.

Many recent papers have focused on designing conformal procedures to have good practical performance according to specific desiderata like small set sizes \citep{Sadinle2016LeastAS}, coverage that is approximately balanced across regions of feature space \citep{foygel2021limits, izbicki2019flexible, romano2020classification, cauchois2020knowing, guan2020conformal, angelopoulos2020sets}, and errors balanced across classes \citep{lei2014classification, Sadinle2016LeastAS, hechtlinger2018cautious, guan2019prediction}. 
This usually involves adjusting the conformal score; we gave many examples of such adjustments in Section~\ref{sec:examples-conformal}.
Good conformal scores can also be trained with data to optimize more complicated desiderata~\cite{stutz2021learning}.
 
Many statistical extensions to conformal prediction have also emerged.
Such extensions include the ideas of risk control~\cite{angelopoulos2020sets,angelopoulos2021learn} and covariate shift~\cite{tibshirani2019conformal} that we previously discussed.
One important and continual area of work is distribution shift, where our test point has a different distribution from our calibration data.
For example, \cite{cauchois2020robust} builds a conformal procedure robust to shifts of known $f$-divergence in the score function, and adaptive conformal prediction~\cite{gibbs2021adaptive} forms prediction sets in a data stream where the distribution varies over time in an unknown fashion by constantly re-estimating the conformal quantile.
A weighted version of conformal prediction pioneered by~\cite{barber2022conformal} provides tools for addressing non-exchangeable data, most notably slowly changing time-series.
This same work develops techniques for applying full conformal prediction to asymmetric algorithms.
Beyond distribution shift, recent statistical extensions also address topics such as creating reliable conformal prediction intervals for counterfactuals and individual treatment effects~\citep{lei2020conformal,yin2021conformal,chernozhukov2021exact}, covariate-dependent lower bounds on survival times~\citep{candes2021conformalized}, prediction sets that preserve the privacy of the calibration data~\citep{angelopoulos2021private}, handling dependent data~\citep{chernozhukov2018exact, dunn2018distribution,oliveira2022split}, and achieving `multivalid' coverage that is conditionally valid with respect to several possibly overlapping groups~\citep{bastani2022practical, jung2022batch}.

Furthermore, prediction sets are not the only important form of distribution-free uncertainty quantification.
One alternative form is a \emph{conformal predictive distribution}, which outputs a probability distribution over the response space $\Y$ in a regression problem~\cite{vovk2017nonparametric}.
Recent work also addresses the issue of calibrating a scalar notion of uncertainty to have probabilistic meaning via histogram binning~\cite{gupta2021distribution, park2021pac}---this is like a rigorous version of Platt scaling or isotonic regression.
The tools from conformal prediction can also be used to identify times when the distribution of data has changed by examining the score function's behavior on new data points.
For example, \cite{bates2021multiple} performs outlier detection using conformal prediction, \cite{vovk2021testing,volkhonskiy2017inductive} detect change points in time-series data, \cite{hu2020distributionfree} tests for covariate shift between two datasets, and \cite{podkopaev2021tracking} tracks the risk of a predictor on a data-stream to identify when harmful changes in its distribution (one that increases the risk) occur.

Developing better estimators of uncertainty improves the practical effectiveness of conformal prediction.
The literature on this topic is too wide to even begin discussing; instead, we point to quantile regression as an example of a fruitful line of work that mingled especially nicely with conformal prediction in Section~\ref{subsec:cqr}.
Quantile regression was first proposed in~\cite{koenker1978regression} and extended to the locally polynomial case in~\cite{chaudhuri1991global}.
Under sufficient regularity, quantile regression converges uniformly to the true quantile function~\cite{chaudhuri1991global,steinwart2011estimating,takeuchi2006nonparametric, zhou1996direct,zhou1998statistical}.
Practical and accessible references for quantile regression have been written by Koenker and collaborators~\cite{koenker2005quantile,koenker2018handbook}.
Active work continues today to analyze the statistical properties of quantile regression and its variants under different conditions, for example in additive models~\cite{koenker2011additive} or to improve conditional coverage when the size of the intervals may correlate with miscoverage events~\cite{feldman2021improving}.
The Handbook of Quantile Regression~\cite{koenker2018handbook} includes more detail on such topics, and a memoir of quantile regression for the interested reader.
Since quantile regression provides intervals with near-conditional coverage asymptotically, the conformalized version inherits this good behavior as well.

Along with such statistical advances has come a recent wave of practical applications of conformal prediction.
Conformal prediction in large-scale deep learning was studied in~\cite{angelopoulos2020sets}, focusing on image classification.
One compelling use-case of conformal prediction is speeding up and decreasing the computational cost of the test-time evaluation of complex models~\citep{fisch2020efficient,schuster2021consistent}.
The same researchers pooled information across multiple tasks in a meta-learning setup to form tight prediction sets for few-shot prediction~\citep{fisch2021few}.
There is also an earlier line of work, appearing slightly after that of Lei and Wasserman, applying conformal prediction to decision trees~\cite{johansson2014regression, linusson2017calibration, bostrom2017accelerating}.
Closer to end-users, we are aware of several real applications of conformal prediction.
The Washington Post estimated the number of outstanding Democratic and Republican votes in the 2020 United States presidential election using conformal prediction~\cite{cherian2020washington}.
Early clinical experiments in hospitals underscore the utility of conformal prediction in that setting as well, although real deployments are still to come~\cite{lu2021distribution,lu2021fair}.
Fairness and reliability of algorithmic risk forecasts in the criminal justice system improves (on controlled datasets) when applying conformal prediction~\cite{Romano2020With, kuchibhotla2021nested,lu2021fair}.
Conformal prediction was recently applied to create safe robotic planning algorithms that avoid bumping into objects~\cite{lindemann2022safe,dixit2022adaptive}.
Recently a \texttt{scikit-learn} compatible open-source library, \href{https://github.com/scikit-learn-contrib/MAPIE}{\texttt{MAPIE}}, has been developed for constructing conformal prediction intervals.
There remains a mountain of future work in these applications of conformal prediction and many others.

Today, the field of distribution-free uncertainty quantification remains small, but grows rapidly year-on-year.
The promulgation of machine learning deployments has caused a reckoning that point predictions are not enough and shown that we still need rigorous statistical inference for reliable decision-making.
Many researchers around the world have keyed into this fact and have created new algorithms and software using distribution-free ideas like conformal prediction.
These developments are numerous and high-quality, so most reviews are out-of-date. 
To keep track of what gets released, the reader may want to see the \href{https://github.com/valeman/awesome-conformal-prediction}{\texttt{Awesome Conformal Prediction}} repository~\cite{acp}, which provides a frequently-updated list of resources in this area.

We will end our Gentle Introduction with a personal note to the reader---you can be part of this story too.
The infant field of distribution-free uncertainty quantification has ample room for significant technical contributions.
Furthermore, the concepts are practical and approachable; they can easily be understood and implemented in code.
Thus, we encourage the reader to try their hand at distribution-free uncertainty quantification; there is a lot more to be done!

\printbibliography

\clearpage
\appendix

\section{Distribution-Free Control of General Risks}
\label{app:ltt}

\begin{figure}[t]
    \vspace{-0.3cm}
    \centering
    \includegraphics[width=\textwidth]{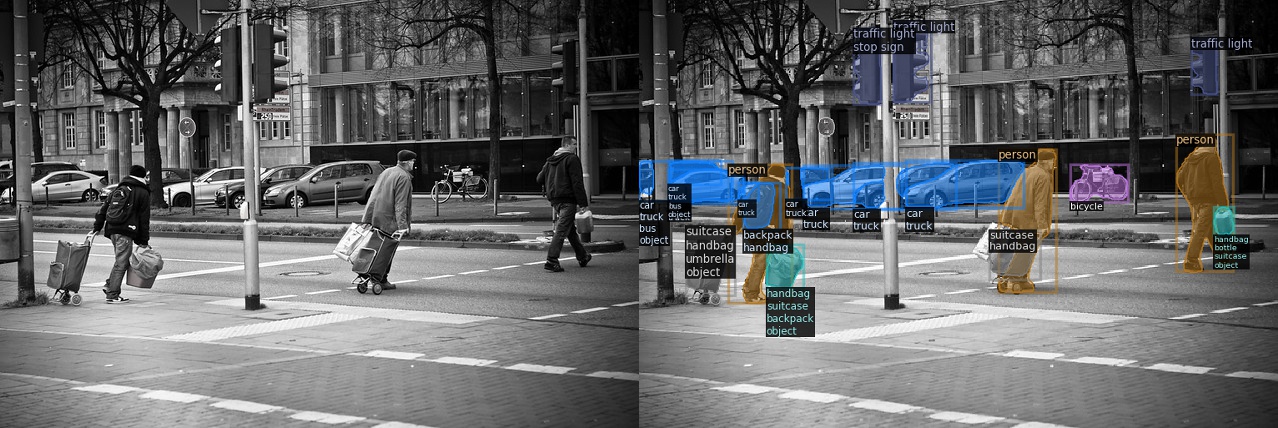}
    \caption{\textbf{Object detection with simultaneous distribution-free guarantees} on the expected intersection-over-union, recall, and coverage rate.}
    \label{fig:ltt-teaser}
\end{figure}

For many prediction tasks, the relevant notion of reliability is not coverage.
Indeed, many applications have problem-specific performance metrics---from false-discovery rate to fairness---that directly encode the soundness of a prediction.
In Section~\ref{subsec:conformal-risk}, we saw how to control the expectation of monotone loss functions using conformal risk control.
Here, we generalize further to control \emph{any} risk and multiple risks in a distribution-free way without retraining the model.
As an example, in instance segmentation, we are given an image and asked to identify all objects in the image, segment them, and classify them.
All three of these sub-tasks have their own risks: recall, \emph{intersection-over-union} (IOU), and coverage respectively.
These risks can be  automatically controlled using distribution-free statistics, as we preview in Figure~\ref{fig:ltt-teaser}.

We first re-introduce the theory of risk control below, then give a list of illustrative examples.
As in conformal risk control, we start with a pretrained model $\hat{f}$.
The model also has a \emph{parameter} $\lambda$, which we are free to choose.
We use $\hat{f}(x)$ and $\lambda$ to form our prediction, $\T_{\lambda}(x)$, which may be a set or some other object.
For example, when performing regression, $\lambda$ could threshold the estimated probability density, as below.
\begin{figure}[H]
    \centering
    \includegraphics[width=0.34\textwidth]{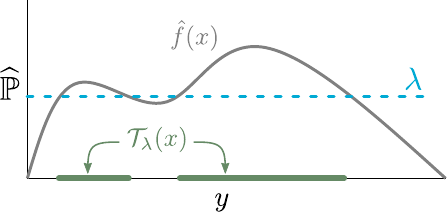}
\end{figure}

We then define a notion of risk $R(\lambda)$.
The risk function measures the quality of $\T_{\lambda}$ according to the user.
The goal of risk control is to use our calibration set to pick a parameter $\lhat$ so that the risk is small with high probability.
In formal terms, for a user-defined \emph{risk tolerance} $\alpha$ and \emph{error rate} $\delta$, we seek to ensure
\begin{equation}
    \label{eq:risk-control}
    \P\left( R\big(\hat{\lambda}\big) < \alpha \right) \geq 1-\delta,
\end{equation}
where the probability is taken over the calibration data used to pick $\lhat$.
Note that this guarantee is high-probability, unlike that in Section~\ref{subsec:conformal-risk}, which is in expectation.
We will soon introduce a distribution-free technique called \emph{Learn then Test} (LTT) for finding $\lhat$ that satisfy~\eqref{eq:risk-control}.
Below we include two example applications of risk control which would be impossible with conformal prediction and conformal risk control.
\begin{itemize}
    \item {\em Multi-label Classification with FDR Control:}
    In this setting, $X_{\rm test}$ is an image and $Y_{\rm test}$ is a subset of $K$ classes contained in the image.
    Our model $\hat{f}$ gives us the probability each of the $K$ classes is contained in the image.
    We will include a class in our estimate of $y$ if $\hat{f}_k > \lambda$ --- i.e., the parameter $\lambda$ thresholds the estimated probabilities.
    We seek to find the $\lhat$s that guarantees our predicted set of labels is sufficiently reliable as measured by the \emph{false-discovery rate} (FDR) risk $R(\lhat)$.
    
    \item {\em Simultaneous Guarantees on OOD Detection and Coverage:} For each input $X_{\rm test}$ with true class $Y_{\rm test}$, we want to decide if it is out-of-distribution.
    If so, we will flag it as such.
    Otherwise, we want to output a prediction set that contains the true class with 90\% probability.
    In this case, we have two models: $\mathrm{OOD}(x)$, which tells us how OOD the input is, and $\hat{f}(x)$, which gives the estimated probability that the input comes from each of $K$ classes.
    In this case, $\lambda$ has two coordinates, and we also have two risks.
    The first coordinate $\lambda_1$ tells us where to threshold $\mathrm{OOD}(x)$ such that the fraction of false alarms $R_1$ is controlled.
    The second coordinate $\lambda_2$ tells us how many classes to include in the prediction set to control the miscoverage $R_2$ among points identified as in-distribution.
    We will find $\lhat$s that control both $R_1(\lhat)$ and $R_2(\lhat)$ jointly.
\end{itemize}

We will describe each of these examples in detail in Section~\ref{app:ltt-examples}.
Many more worked examples, including the object detection example in Figure~\ref{fig:ltt-teaser}, are available in the cited literature on risk control~\cite{bates2021distribution,angelopoulos2021learn}.
First, however, we will introduce the general method of risk control via Learn then Test.

\subsection{Instructions for Learn then Test}
\label{app:ltt-methods}
First, we will describe the formal setting of risk control. 
We introduce notation and the risk-control property in Definition~\ref{def:rcp}. Then, we describe the calibration algorithm.

\subsection*{Formal notation for error control}

Let $(X_i, Y_i)_{i = 1,\dots,n}$ be an independent and identically distributed (i.i.d.) set of variables, where the features $X_i$ take values in $\X$ and the responses $Y_i$ take values in $\Y$.  
The researcher starts with a pre-trained predictive model $\hat{f}$.
We show how to subsequently create predictors from $\hat{f}$ that control a risk, regardless of the quality of the initial model fit or the distribution of the data. 

Next, let $\T_\lambda : \X \to \Y'$ be a function with parameter $\lambda$ that maps a feature to a prediction ($\Y'$ can be any space, including the space of responses $\Y$ or prediction sets $2^{\Y}$). 
This function $\T_{\lambda}$ would typically be constructed from the predictive model, $\hat{f}$, as in our earlier regression example.
We further assume $\lambda$ takes values in a (possibly multidimensional) discrete set $\Lambda$.
If $\Lambda$ is not naturally discrete, we usually discretize it finely.
For example, $\Lambda$ could be the set $\{0, 0.001, 0.002, ..., 0.999, 1\}$.

We then allow the user to choose a {\em risk} for the predictor $\T_{\lambda}$.
This risk can be any function of $\T_{\lambda}$, but often we take the risk function to be the expected value of a \emph{loss function},
\begin{equation}
    \label{eq:risk-expectation-loss}
    R(\T_{\lambda}) =  \E\left[\underbrace{L\big(\T_{\lambda}(X_{\rm test}),Y_{\rm test}\big)}_{\text{Loss function}}\right].
\end{equation}
The loss function is a deterministic function that is high when $\T_{\lambda}(X_{\rm test})$ does badly at predicting $Y_{\rm test}$.
The risk then averages this loss over the distribution of $(X_{\rm test},Y_{\rm test})$.
For example, taking 
\begin{equation*}
R_{\text{miscoverage}}\big(\T_{\lambda}) = \E\big[\ind{ Y_{\rm test} \notin \T_{\lambda}(X_{\rm test})}\big] = \P\left(Y_{\rm test} \notin \T_{\lambda}(X_{\rm test})\right)
\end{equation*}
gives us the familiar case of controlling miscoverage.

To aid the reader, we point out some facts about~\eqref{eq:risk-expectation-loss} that may not be obvious.
The input $\T_{\lambda}$ to the risk is a function; this makes the risk a \emph{functional} (a function of a function).
When we plug $\T_{\lambda}$ into the risk, we take an expectation of the loss over the randomness in a single test point.
At the end of the process, for a deterministic $\lambda$, we get a deterministic scalar $R(\T_{\lambda})$.
Henceforth, for ease of notation, we abbreviate this number as $R(\lambda) := R(\T_{\lambda})$.

Our goal is control the risk in the following sense:
\begin{definition}[Risk control]
Let $\hat{\lambda}$ be a random variable taking values in $\Lambda$ (i.e., the output of an algorithm run on the calibration data).
We say that $\T_{\lhat}$ is a \emph{$(\alpha,\delta)$-risk-controlling prediction} (RCP) if, with probability at least $1-\delta$, we have $R\big(\lhat\big) \le \alpha$.
\label{def:rcp}
\end{definition}
In Definition~\ref{def:rcp}, we plug in a \emph{random parameter $\lhat$} which is chosen based on our calibration data; therefore, $R(\lhat)$ is random even though the risk is a deterministic function. 
The high-probability portion of Definition~\ref{def:rcp} therefore says that $\lhat$ can only violate risk control if we choose a bad calibration set; this happens with probability at most $\delta$.
The distribution of the risk over many resamplings of the calibration data should therefore look as below.
\begin{figure}[H]
    \centering
    \includegraphics[width=0.4\linewidth]{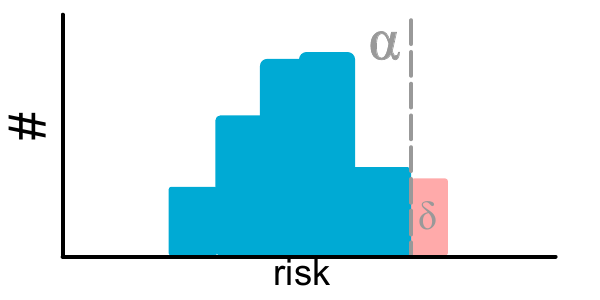}
\end{figure}
\newpage
 
\subsection*{The Learn then Test procedure}
Recalling Definition~\ref{def:rcp}, our goal is to find a set function whose risk is less than some user-specified threshold $\alpha$. 
To do this, we search across the collection of functions $\{\T_\lambda\}_{\lambda \in \Lambda}$ and estimate their risk on the calibration data $(X_1,Y_1),\dots,(X_n,Y_n)$.
The output of the procedure will be a set of $\lambda$ values which are all guaranteed to control the risk, $\Lhat \subseteq \Lambda$.
The Learn then Test procedure is outlined below.

\begin{enumerate}

    \item For each $\lambda \in \Lambda$, associate the null hypothesis $\Hlam : R(\lambda) > \alpha$. 
    Notice that \emph{rejecting} the $\Hlam$ means you selected $\lambda$ as a point where the risk is controlled. Here we denote each null with a blue dot; the yellow dot is highlighted, so we can keep track of it as we explain the procedure.
    
    \begin{figure}[H]
        \vspace{-0.4cm}
        \centering
        \includegraphics[width=0.65\linewidth]{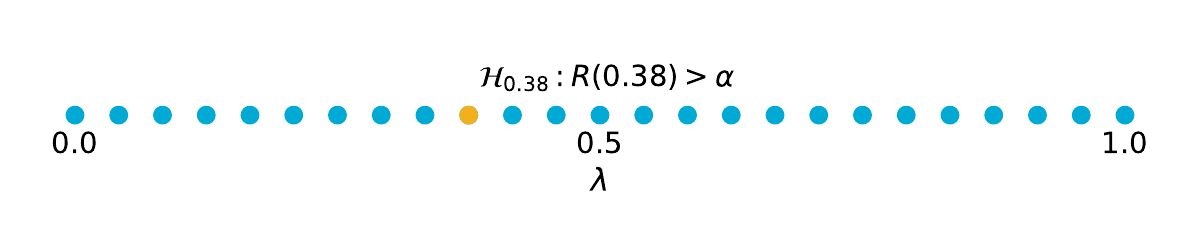}
        \vspace{-0.5cm}
    \end{figure}
    
    \item For each null hypothesis, compute a p-value using a concentration inequality. For example, Hoeffding's inequality yields $p_{\lambda}=e^{-2n(\alpha-\Rhat(\lambda))_+^2}$, where $\Rhat(\lambda)=\frac{1}{n}\sum\limits_{i=1}^nL(\T_{\lambda}(X_i),Y_i)$. 
    We remind the reader what a p-value is, why it is relevant to risk control, and point to references with stronger p-values in~\ref{sec:p-values}.
    
    \begin{figure}[H]
        \centering
        \includegraphics[width=0.48\linewidth]{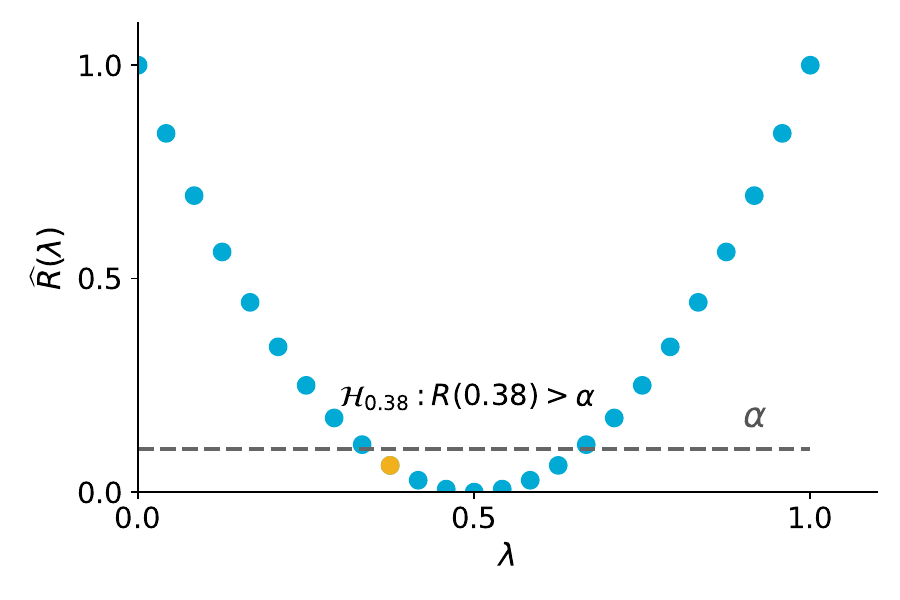}
        \includegraphics[width=0.48\linewidth]{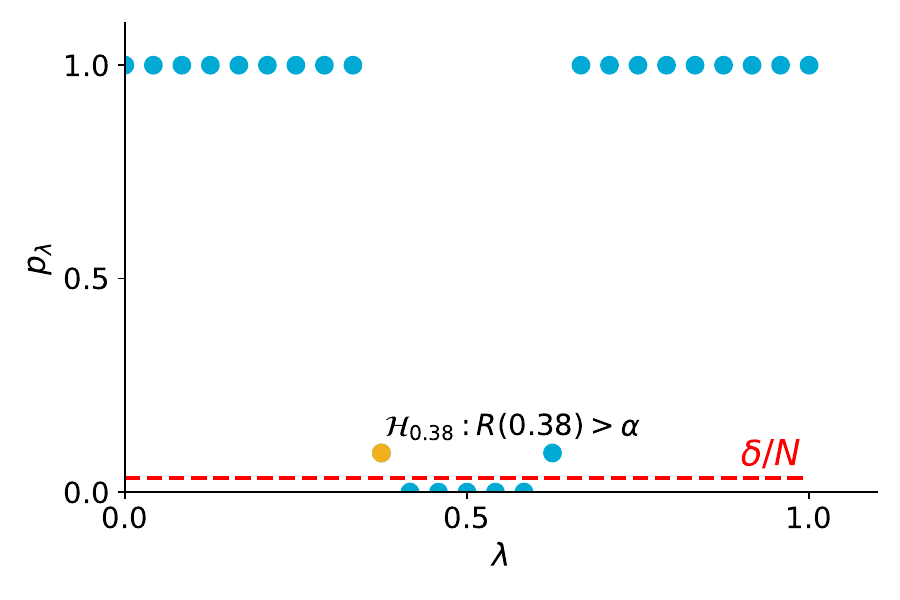}
    \end{figure}
    
    \item Return $\Lhat = \mathcal{A}\big(\{p_{\lambda}\}_{\lambda \in \Lambda}\big)$, where $\mathcal{A}$ is an algorithm that controls the familywise-error rate (FWER). 
    For example, the Bonferroni correction yields $\Lhat=\big\{\lambda : p_{\lambda} < \frac{\delta}{|\Lambda|}\big\}$. 
    We define the FWER and preview ways to design good FWER-controlling procedures in Section~\ref{sec:fwer-control}.
    The nulls with red crosses through them below have been rejected by the procedure; i.e., they all control the risk with high probability.
    
    \begin{figure}[H]
        \vspace{-0.4cm}
        \centering
        \includegraphics[width=0.65\linewidth]{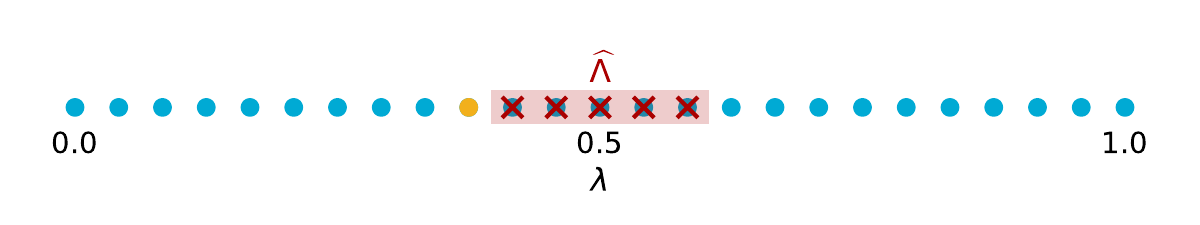}
        \vspace{-0.5cm}
    \end{figure}
    
\end{enumerate}

By following the above procedure, we get the statistical guarantee in Theorem~\ref{thm:ltt}.
\begin{theorem}
    \label{thm:ltt}
    The $\Lhat$ returned by the Learn then Test procedure satisfies
    \begin{equation}
    \label{eq:fwer_control}
        \P \left(\sup_{\lhat\in \Lhat}\{R(\lhat)\} \le \alpha \right) \geq 1-\delta.
    \end{equation}
    Thus, selecting any $\lhat \in \Lhat$, $\T_{\hat{\lambda}}$ is an $(\alpha,\delta)$-RCP.
\end{theorem}

The LTT procedure decomposes risk control into two subproblems: computing p-values and combining them with multiple testing.
We will now take a closer look at each of these subproblems.

\begin{figure}
    \centering
    \begin{minted}[fontsize=\footnotesize]{python}
# Implementation of LTT. Assume access to X, Y where n=X.shape[0]=Y.shape[0]
lambdas = torch.linspace(0,1,N) # Commonly choose N=1000
losses = torch.zeros((n,N)) # Compute the loss function next
for (i,j) in [(i,j) for i in range(n) for j in range(N)]:
  prediction_set = T(X[i],lambdas[j]) # T ( ) is problem depemdent
  losses[i,j] = get_loss(prediction_set,Y[i]) # Loss is problem dependent
risk = losses.mean(dim=0)
pvals = torch.exp(-2*n*(torch.relu(alpha-risk)**2)) # Or any p-value
lambda_hat = lambdas[pvals<delta/lambdas.shape[0]] # Or any FWER-controlling algorithm 
    \end{minted}
    \caption{\textbf{PyTorch code for running Learn then Test.}}
    \label{fig:ltt-code}
\end{figure}
\subsubsection{Crash Course on Generating p-values}
\label{sec:p-values}
\textbf{What is a p-value, and why is it related to risk control?} In Step 1 of the LTT procedure, we associated a null hypothesis $\Hlam$ to every $\lambda \in \Lambda$.
When the null hypothesis at $\lambda$ holds, the risk is \emph{not} controlled for that value of the parameter.
In this reframing, our task is to automatically identify points $\lambda$ where the null hypothesis does not hold---i.e., to \emph{reject the null hypotheses} for some subset of $\lambda$ such that $R(\lambda) \leq \alpha$.
The process of accepting or rejecting a null hypothesis is called \emph{hypothesis testing}.

\begin{align}
    \text{Rejecting the null hypothesis $\Hlam$ } &\to \text{ the risk \emph{is} controlled at $\lambda$.} \\
    \text{Accepting the null hypothesis $\Hlam$ } &\to \text{ the risk \emph{is not} controlled at $\lambda$.}
\end{align}

In order to reject a null hypothesis, we need to have empirical evidence that at $\lambda$, the risk is controlled.
We use our calibration data to summarize this information in the form of a \emph{p-value} $p_{\lambda}$.
A p-value must satisfy the following condition, which we sometimes refer to as \emph{validity} or \emph{super-uniformity},
\begin{equation}
    \label{eq:superuniformity}
    \forall t \in [0,1], \; \; \P_{\Hlam}\left( p_{\lambda} \leq t\right) \leq t,
\end{equation}
where $\P_{\Hlam}$ refers to the probability under the null hypothesis.
Parsing the super-uniformity condition carefully tells us that when $p_{\lambda}$ is low, there is evidence against the null hypothesis $\Hlam$.
In other words, for a particular $\lambda$, we can reject $\Hlam$ if $p_{\lambda}<5\%$ and expect to be wrong no more than $5\%$ of the time.
This process is called \emph{testing the hypothesis at level $\delta$}, where in the previous sentence, $\delta=5\%$.
\begin{figure}[H]
    \centering
    \includegraphics[width=0.48\linewidth]{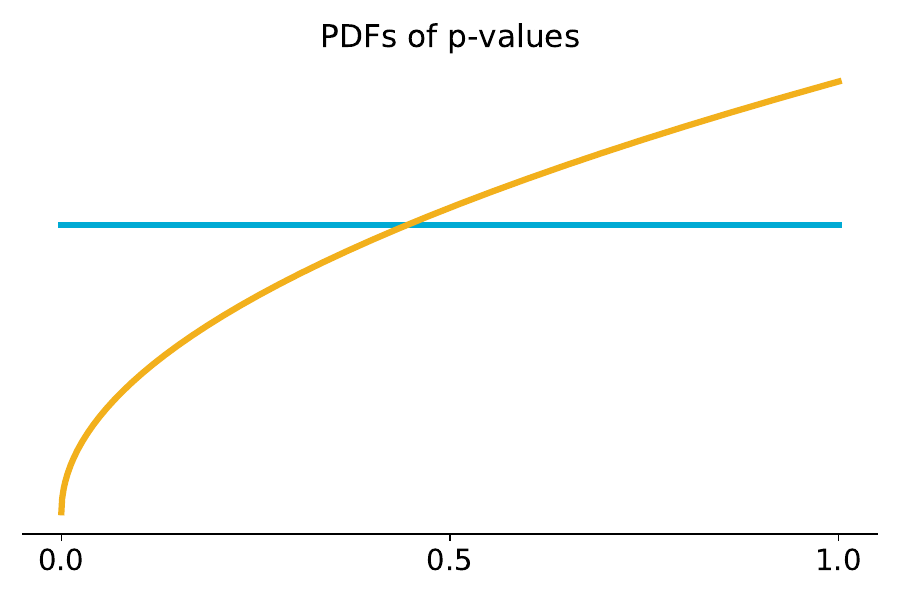}
    \includegraphics[width=0.48\linewidth]{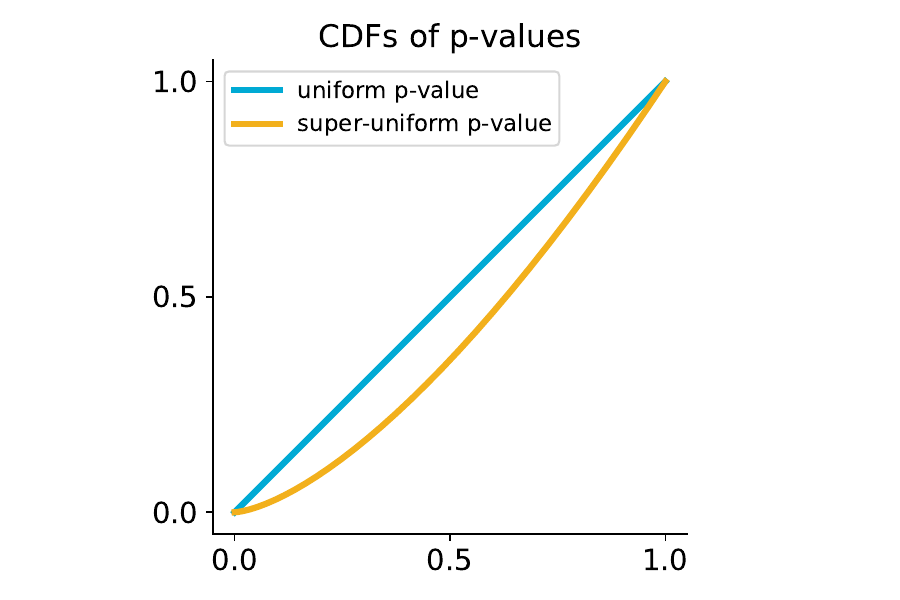}
\end{figure}

One of the key ingredients in Learn then Test is a p-value with distribution-free validity: it is valid under without assumptions on the data distribution.
For example, when working with risk functions that take values in $[0,1]$---like coverage, IOU, FDR, and so on---the easiest choice of p-value is based on Hoeffding's inequality:
\begin{equation}
    \label{eq:hoeffding-p-value}
    p_{\lambda}^{\rm Hoeffding}=e^{-2n\big(\alpha-\Rhat(\lambda)\big)_+^2}.
\end{equation}
More powerful p-values based on tighter concentration bounds are included in~\cite{angelopoulos2021learn}.
In particular, many of the practical examples in that reference use a stronger p-value called the \textit{Hoeffding-Bentkus} (HB) p-value,
\begin{align}
    \label{eq:HB-p-value}
    &p_{\lambda}^{\rm HB} = \min\left( \exp\{-nh_1(\Rhat(\lambda) \wedge \alpha , \alpha)\}, e\P\big(\mathrm{Bin}(n,\alpha) \leq \left\lceil n\Rhat(\lambda) \right\rceil \big) \right),\\
    &\text{where } h_1(a,b) = a\log\left(\frac{a}{b}\right)+(1-a)\log\left(\frac{1-a}{1-b}\right).
\end{align}
Note that any valid p-value will work---it is fine for the reader to keep $p_{\lambda}^{\rm Hoeffding}$ in mind for the rest of this manuscript, with the understanding that more powerful choices are available.

\subsubsection{Crash Course on Familywise-Error Rate Algorithms}
\label{sec:fwer-control}

If we only had one hypothesis $H_{\lambda}$, we could simply test it at level $\delta$.
However, we have one hypothesis for each $\lambda \in \Lambda$, where $|\Lambda|$ is often very large (in the millions or more).
This causes a problem: the more hypotheses we test, the higher chance we incorrectly reject at least one hypothesis.
We can formally reason about this with the \emph{familywise-error rate} (FWER).
\begin{definition}[familywise-error rate]
  The familywise-error rate of a procedure returning $\hat{\Lambda}$ is the probability of making at least one false rejection, i.e., 
  \begin{equation}
      \mathrm{FWER}\left(\Lhat\right) = \P\left( \exists \lhat \in \Lhat : R(\lhat) > \alpha \right).
  \end{equation}
\end{definition}
As a simple example to show how naively thresholding the p-values at level $\delta$ fails to control FWER, consider the case where all the hypotheses are null, and we have uniform p-values independently tested at level $\delta$. The FWER then approaches $1$; see below.
\begin{equation}
    \text{If we take } \Lhat = \{\lambda : p_{\lambda} < \delta\} \text{, then } \mathrm{FWER}(\Lhat) = 1-(1-\delta)^{|\Lambda|}.
\end{equation}
This simple toy analysis exposes a deeper problem: without an intelligent strategy for combining the information from many p-values together, we can end up making false rejections with high probability.
Our challenge is to intelligently combine the p-values to avoid this issue of multiplicity (without assuming the p-values are independent).

This fundamental statistical challenge has led to a decades-long and continually rich area of research called \emph{multiple hypothesis testing}.
In particular, a genre of algorithms called \emph{FWER-controlling algorithms} seek to select the largest set of $\Lhat$ that guarantees $\mathrm{FWER}(\Lhat) \leq \delta$.
The simplest FWER-controlling algorithm is the \emph{Bonferroni correction},
\begin{equation}
    \Lhat_{\rm Bonferroni} = \left\{ \lambda \in \Lambda : p_{\lambda} \leq \frac{\delta}{|\Lambda|}\right\}.
\end{equation}

Under the hood, the Bonferroni correction simply tests each hypothesis at level $\delta/|\Lambda|$, so the probability there exists a failed test is no more than $\delta$ by a union bound.
It should not be surprising that there exist improvements on Bonferroni correction.

First, we will discuss one important improvement in the case of a monotone loss function: \emph{fixed-sequence testing}.
As the name suggests, in fixed-sequence testing, we construct a sequence of hypotheses $\{\mathcal{H}_{\lambda_j}\}_{j=1}^{N}$ where N = $|\Lambda|$, before looking at our calibration data.
Usually, we just sort our hypotheses from most- to least-promising based on information we knew a-priori.
For example, if large values of $\lambda$ are more likely to control the risk, $\{\lambda_j\}_{j=1}^N$ just sorts $\Lambda$ from greatest to least.
Then, we test the hypotheses sequentially in some fixed order at level $\delta$, including them in $\Lhat$ as we go, and stopping when we make our first acceptance:

\begin{equation}
    \label{eq:fixed-sequence-lhat}
    \Lhat_{\rm FST} = \{\lambda_j, j\leq T\}\text{, where } T = \max\left\{ t \in \{1,...,N\} : p_{\lambda_{t'}} \leq \delta \text{, for all } t' \leq t \right\}.
\end{equation}

\begin{figure}[H]
    \centering
    \includegraphics[width=.4\linewidth]{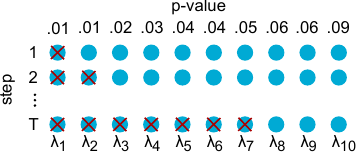}
    \caption{\textbf{An example of fixed-sequence testing} with $\delta=0.05$. Each blue circle represents a null, and each row a step of the procedure. The nulls with a red cross have been rejected at that step.}
    \vspace{-0.5cm}
\end{figure}
This sequential procedure, despite testing all hypotheses it encounters at level $\delta$, still controls the FWER.
For monotone and near-monotone risks, such as the false-discovery rate, it works quite well.

It is also possible to extend the basic idea of fixed-sequence testing to non-monotone functions, creating powerful and flexible FWER-controlling procedures using an idea called sequential graphical testing~\cite{bretz2009graphical}.
Good graphical FWER-controlling procedures can be designed to have high power for particular problems, or alternatively, automatically discovered using data. 
This topic is given a detailed treatment  in~\cite{angelopoulos2021learn}, and we omit it here for simplicity. 

We have described a general-purpose pipeline for distribution-free risk control.
It is described in PyTorch code in Figure~\ref{fig:ltt-code}.
Once the user sets up the problem (i.e., picks $\Lambda$, $\T_{\lambda}$, and $R$), the LTT pipeline we described above automatically produces $\Lhat$.
We now go through three worked examples which teach the reader how to choose $\Lambda$, $\T$ and $R$ in practical circumstances.

\section{Examples of Distribution-Free Risk Control}
\label{app:ltt-examples}

In this section, we will walk through several examples of distribution-free risk control applied to practical machine learning problems.
The goal is again to arm the reader with an arsenal of pragmatic prototypes of distribution-free risk control that work on real problems.

\subsection{Multi-label Classification with FDR Control}
\label{app:ltt-multilabel}

\begin{figure}
    \centering
    \includegraphics[width=\linewidth]{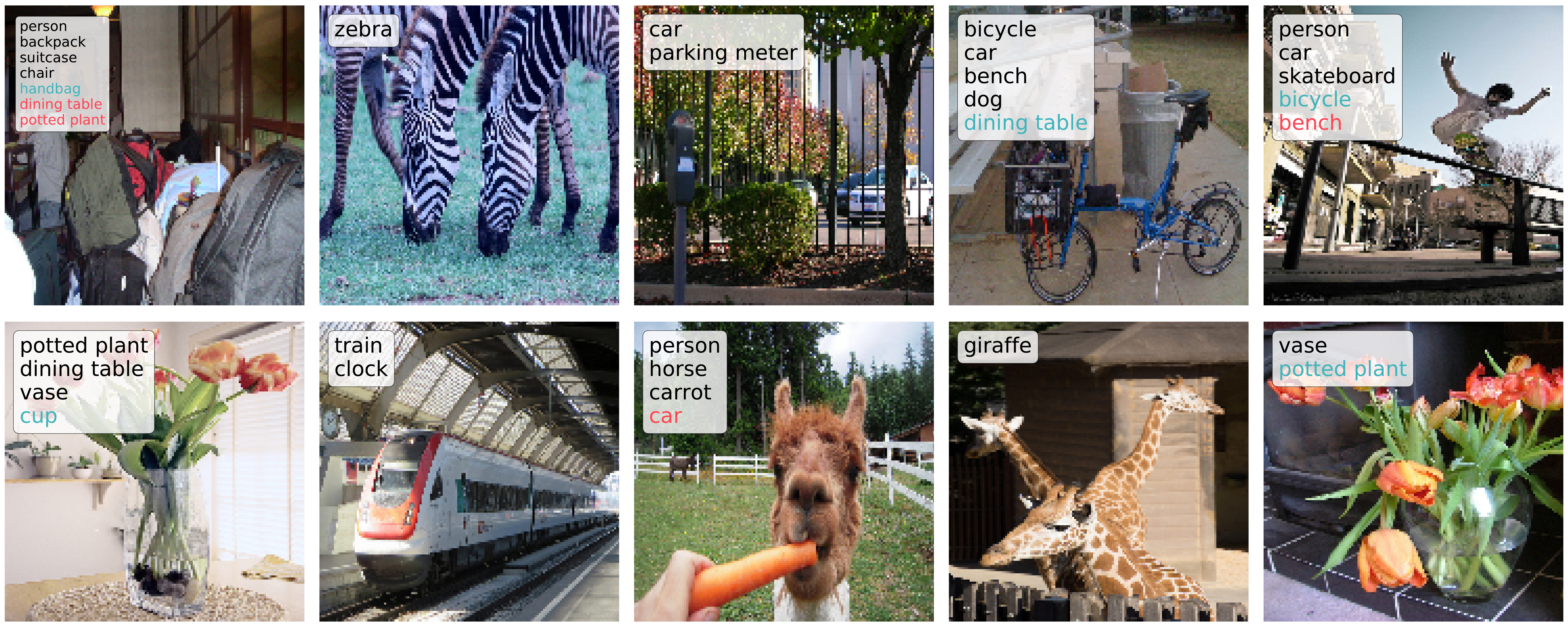}
    \caption{\textbf{Examples of multi-label classification with FDR control} on the MS-COCO dataset. Black classes are true positives, blue classes are spurious, and red classes are missed. The FDR is controlled at level $\alpha=0.1$, $\delta=0.1$.}
    \label{fig:coco-fdr-examples}
\end{figure}

We begin our sequence of examples with a familiar and fundamental setup: multi-label classification.
Here, the features $X_{\rm test}$ can be anything (e.g. an image), and the label $Y_{\rm test} \subseteq \{1,...,K\}$ must be a set of classes (e.g. those contained in the image $X_{\rm test}$).
We have a pre-trained machine learning model $\hat{f}(x)$, which gives us an estimated probability $\hat{f}(x)_k$ that class $k$ is in the corresponding set-valued label.
We will use these probabilities to include the estimated most likely classes in our prediction set,
\begin{equation}
    \T_{\lambda}(x) = \big\{ k : \hat{f}(x)_k > \lambda \big\},\;\; \lambda \in \Lambda
\end{equation}
where $\Lambda=\{0,0.001,...,1\}$ (a discretization of $[0,1]$). 
However, one question remains: \textit{how do we choose $\lambda$}?

LTT will allow us to identify values of $\lambda$ that satisfy a precise probabilistic guarantee---in this case, a bound on the \textit{false-discovery rate} (FDR),
\begin{equation}
    R_{\rm FDR}(\lambda) = \E\left[\underbrace{1 - \frac{\left| Y_{\rm test} \cap \T_{\lambda}(X_{\rm test}) \right|}{\left| \T_{\lambda}(X_{\rm test}) \right|}}_{L_{\rm FDP}(\T_{\lambda}(X_{\rm test}),Y_{\rm test})}\right].
\end{equation}
As annotated in the underbrace, the FDR is the expectation of a loss function, the \emph{false-discovery proportion} (FDP). 
The FDP is low when our prediction set $\T_{\lambda}(X_{\rm test})$ contains mostly elements from $Y_{\rm test}$.
In this sense, the FDR measures the quality of our prediction set: if we have a low FDR, it means most of the elements in our prediction set are good.
By setting $\alpha=0.1$ and $\delta=0.1$, we desire that
\begin{equation}
    \P\left[ R_{\rm FDR}(\lhat) > 0.1 \right] < 0.1,
\end{equation}
where the probability is over the randomness in the calibration set used to pick $\lhat$.

\begin{figure}[h]
    \centering
    \begin{minted}[fontsize=\footnotesize]{python}
# model is a multi-class neural network, X.shape[0]=Y.shape[0]=n
lambdas = torch.linspace(0,1,N) # N can be taken to infinity without penalty
losses = torch.zeros((n,N)) # loss for example i with parameter lambdas[j]
for i in range(n): # In reality we parallelize these loops massively
  sigmoids = model(X[i].unsqueeze(0)).sigmoid().squeeze() # Care with dims
  for j in range(N):
    T = sigmoids > lambdas[j] # This is the prediction set
    set_size = T.float().sum()
    if set_size != 0:
      losses[i,j] = 1 - (T[Y] == True).float().sum()/set_size
risk = losses.mean(dim=0)
pvals = torch.exp(-2*n*(torch.relu(alpha-risk)**2)) # Or the HB p-value
# Fixed-sequence test starting at lambdas[-1] and ending at lambdas[0]
below_delta = (pvals <= delta).float()
valid = torch.tensor([(below_delta[j:].mean() == 1) for j in range(N)])
lambda_hat = lambdas[valid]
    \end{minted}
    \caption{\textbf{PyTorch code for performing FDR control with LTT.}}
    \label{fig:ltt-fdr-code}
\end{figure}

Now that we have set up our problem, we can just run the LTT procedure via the code in Figure~\ref{fig:ltt-fdr-code}.
We use fixed-sequence testing because the FDR is a nearly monotone risk.
In practice, we also wish to use the HB p-value, which is stronger than the simple Hoeffding p-value in Figure~\ref{fig:ltt-fdr-code}.
The result of this procedure on the MS-COCO image dataset is in Figure~\ref{fig:coco-fdr-examples}.

\subsection{Simultaneous Guarantees on OOD Detection and Coverage}
\label{app:ltt-ood}

In our next example, we perform classification with two goals:
\begin{enumerate}
    \item Flag \emph{out-of-distribution} (OOD) inputs without too many false flags.
    \item If an input is deemed \emph{in-distribution} (In-D), output a prediction set that contains the true class with high probability.
\end{enumerate}
Part of the purpose of this example is to teach the reader how to deal with multiple risk functions (one of which is a conditional risk) and a multi-dimensional parameter $\lambda$.

Our setup requires two different models.
The first, ${\rm OOD}(x)$, outputs a scalar that should be larger when the input is OOD.
The second, $\hat{f}(x)_y$, estimates the probability that input $x$ is of class $y$; for example, $\hat{f}(x)$ could represent the softmax outputs of a neural net.
Similarly, the construction of $\Tlam(x)$ has two substeps, each of which uses a different model.
In our first substep, when ${\rm OOD}(x)$ becomes sufficiently large, exceeding $\lambda_1$, we flag the example as OOD by outputting $\emptyset$.
Otherwise, we essentially use the APS method from Section~\ref{subsec:aps} to form prediction sets.
We precisely describe this procedure below:
\begin{equation}
    \T_{\lambda}(x) = \begin{cases}
      \emptyset & \mathrm{OOD}(x) > \lambda_1 \\
      \{\pi_1(x),...,\pi_K(x)\} & \text{else},  \\
    \end{cases} 
\end{equation}
where $K=\inf\{ k : \sum\limits_{j=1}^k\hat{f}(x)_{\pi_j(x)} > \lambda_2\}$ and $\pi(x)$ sorts $\hat{f}(x)$ from greatest to least.
We usually take $\Lambda = \{0,1/N,2/N, ..., 1\}^2$, i.e., we discretize the box $[0,1]\times[0,1]$ into $N^2$ smaller boxes, with $N\approx 1000$.
The intuition of $\T_{\lambda}(x)$ is very simple.
If the example is sufficiently atypical, we give up. 
Otherwise, we create a prediction set using a procedure similar to (but not identical to) conformal prediction.

\begin{figure}[H]
    \centering
    \includegraphics[width=0.6\linewidth]{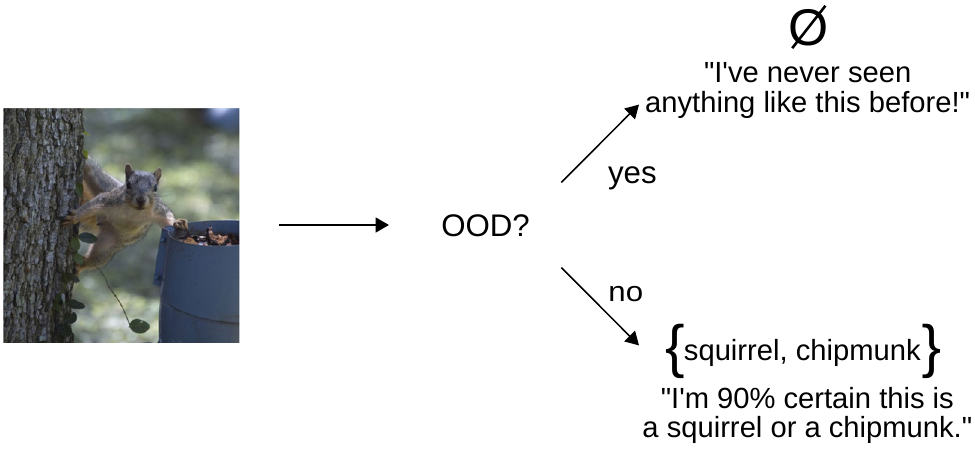}
\end{figure}

Along the same lines, we control two risk functions simultaneously,
\begin{equation}
    R_1(\lambda) = \P\left( \T_{\lambda}(X_{\rm test}) = \emptyset \right) \text{ and } R_2(\lambda) = \P\left( Y_{\rm test} \notin \T_{\lambda}(X_{\rm test}) \; \big\rvert \; \T_{\lambda}(X_{\rm test}) \neq \emptyset \right).
\end{equation}
The first risk function $R_1$ is the probability of a false flag, and the second risk function $R_2$ is the coverage conditionally on being deemed in-distribution.
The user must define risk-tolerances for each, so $\alpha$ is a two-vector, where $\alpha_1$ determines the desired fraction of false flags and $\alpha_2$ determines the desired miscoverage rate.
Setting $\alpha = (0.05,0.1)$ will guarantee that we falsely throw out no more than 5\% of in-distribution data points, and also that among the data points we claim are in-distribution, we will output a prediction set containing the correct class with 90\% probability.
In order to control both risks, we now need to associate a composite null hypothesis to each $\lambda \in \Lambda$.
Namely, we choose
\begin{equation}
    \Hlam : \Hlam^{(1)}\text{ or }\Hlam^{(2)},
\end{equation}
where $\Hlam$ is the union of two intermediate null hypotheses, 
\begin{equation}
    \Hlam^{(1)}: R_1(\lambda) > \alpha_1 \text{ and } \Hlam^{(2)}: R_2(\lambda) > \alpha_2.
\end{equation}
We summarize our setup in the below table.

\begin{table}[H]
    \centering
    \begin{tabular}{c|c|c}
        \textbf{Goal} & \textbf{Null hypothesis} & \textbf{Parameter} \\
        Do not incorrectly label too many images as OOD. & $H_{\lambda}^{(1)}: R_1(\lambda) > \alpha_1$ & $\lambda_1$ \\
        Return a set of labels guaranteed to contain the true one. & $H_{\lambda}^{(2)}: R_2(\lambda) > \alpha_2$ & $\lambda_2$
    \end{tabular}
\end{table}
\begin{figure}[t]
    \centering
    \begin{minted}[fontsize=\footnotesize]{python}
# ood is an OOD detector, model is classifier with softmax output
lambda1s = torch.linspace(0,1,N) # Usually N ~= 1000
lambda2s = torch.linspace(0,1,N)
losses = torch.zeros((2,n,N,N)) # 2 losses, n data points, N x N lambdas
# The following loop can be massively parallelized (and GPU accelerated)
for (i,j,k) in [(i,j,k) for i in range(n) for j in range(N) for k in range(N)]:
  softmaxes = model(X[i].unsqueeze(0)).softmax(1).squeeze() # Care with dims
  cumsum = softmaxes.sort(descending=True)[0].cumsum(0)[Y[i]]
  if odd(X) > lambda1s[j]:
    losses[0,i,j,k] = 1
    continue
  losses[1,i,j,k] = int(cumsum > lambda2s[k])
risks = losses.mean(dim=1) # 2 x N x N
risks[1] = risks[1] - alpha2*risks[0]
pval1s = torch.exp(-2*n*(torch.relu(alpha1-risks[0])**2)) # Or HB p-value
pval2s = torch.exp(-2*n*(torch.relu(alpha2-risks[1])**2)) # Ditto
pvals = torch.maximum(pval1s,pval2s)
# Bonferroni can be replaced by sequential graphical test as in LTT paper
valid = torch.where(pvals <= delta/(N*N))
lambda_hat = [lambda1s[valid[0]], lambda2s[valid[1]]]
    \end{minted}
    \caption{\textbf{PyTorch code for simultaneously controlling the type-1 error of OOD detection and prediction set coverage.}}
    \label{fig:ltt-ood-coverage-code}
\end{figure}
Having completed our setup, we can now apply LTT.
The presence of multiple risks creates some wrinkles, which we will now iron out with the reader.
The null hypothesis $\Hlam$ has a different structure than the ones we saw before, but we can use the same tools to test it.
To start, we produce p-values for the intermediate nulls,
\begin{equation}
    p_{\lambda}^{(1)} = e^{-2n\big(\alpha_1-\Rhat_1(\lambda)\big)_+^2}\text{ and }p_{\lambda}^{(2)} = e^{-2n\big(\alpha_2-\Rhat_2(\lambda)\big)_+^2},
\end{equation}
where
\begin{equation}
    \Rhat_1(\lambda) = \frac{1}{n}\sum\limits_{i=1}^n\ind{\T_\lambda(X_i) = \emptyset} \text{ and } \Rhat_2(\lambda) = \frac{1}{n}\sum\limits_{i=1}^n \ind{Y_i \notin \T_{\lambda}(X_i), \T_{\lambda}(X_i) \neq \emptyset}-\alpha_2\ind{\T_\lambda(X_i) = \emptyset}.\footnote{The second empirical risk, $\Rhat_2$, looks different from a standard empirical risk because of the conditioning. 
    In other words, not all of our calibration data points have nonempty prediction sets; see Section 4 of~\cite{angelopoulos2021learn} to learn more about this point.}
\end{equation}
Since the maximum of two p-values is also a p-value (you can check this manually by verifying its super-uniformity), we can form the p-value for our union null as
\begin{equation}
    p_{\lambda} = \max\Big(p_{\lambda}^{(1)},p_{\lambda}^{(2)}\Big).
\end{equation}
In practice, as before, we use the p-values from the HB inequality as opposed to those from Hoeffding.
Then, instead of Bonferroni correction, we combine them with a less conservative form of sequential graphical testing; see~\cite{angelopoulos2021learn} for these more mathematical details.
For the purposes of this development, it suffices to return the Bonferroni region,
\begin{equation}
    \Lhat = \left\{ \lambda : p_{\lambda} \leq \frac{\delta}{|\Lambda|} \right\}.
\end{equation}
Then, every element of $\Lhat$ controls both risks simultaneously.
See Figure~\ref{fig:ltt-ood-coverage-code} for a PyTorch implementation of this procedure.

\section{Concentration Properties of the Empirical Coverage}
\label{app:empirical-coverage}

We adopt the same notation as Section~\ref{sec:evaluating}.

The variation in $\overline{C}$ has three components. First, $n$ is finite. We analyzed how this leads to fluctuations in the coverage in Section~\ref{subsec:cal_size}.
The second source of fluctuations is the finiteness of $n_{\textnormal{val}}$, the size of the validation set.
A small number of validation points can result in a high-variance estimate of the coverage.
This makes the histogram of the $C_j$ wider than the beta distribution above. 
However, as we will now show, $C_j$ has an analytical distribution that allows us to exactly understand the histogram's expected properties.

We now examine the distribution of $C_j$.
Because $C_j$ is an average of indicator functions, it looks like it is a binomially distributed random variable.
This is true conditionally on the calibration data, but not marginally.
This is because the mean of the binomial is beta distributed; as we showed in the above analysis, $\E\left[C_j \big\rvert \{(X_{i,j},Y_{i,j})\}_{i=1}^n \right] \sim \mathrm{Beta}(n+1-l,l)$, where $(X_{i,j},Y_{i,j})$ is the $i$th calibration point in the $j$th trial.
Conveniently, binomial random variables with beta-distributed mean,
\begin{equation}
    C_j \sim \frac{1}{n_{\textnormal{val}}}\mathrm{Binom}(n_{\textnormal{val}},\mu) \text{ where } \mu \sim \mathrm{Beta}(n+1-l,l),
\end{equation}
are called \emph{beta-binomial} random variables.
We refer to this distribution as $\mathrm{BetaBinom}(n_{\textnormal{val}},n+1-l,l)$; its properties, such as moments and probability mass function, can be found in standard references.

Knowing the analytic form of the $C_j$ allows us to directly plot its distribution. 
After a sufficient number of trials $R$, the histogram of $C_j$ should converge almost exactly to its analytical PMF (which is only a function of $\alpha$, $n$, and $n_{\textnormal{val}}$).
The plot in Figure~\ref{fig:betabinom} shows how the histograms should look with different values of $n_{\textnormal{val}}$ and large $R$.
Code for producing these plots is also available in the aforementioned Jupyter notebook.

\begin{figure}[H]
    \centering
    \includegraphics[width=0.6\linewidth]{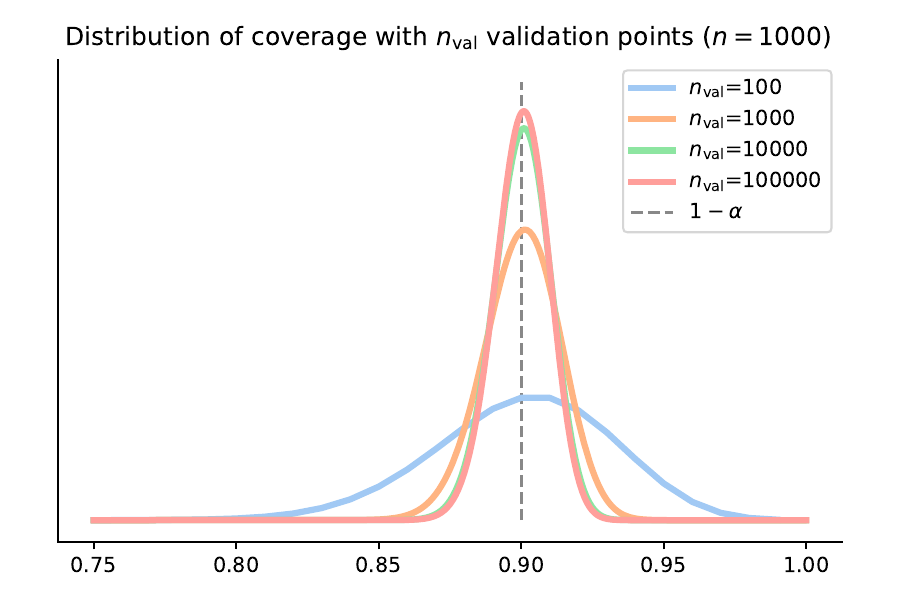}
    \caption{\textbf{The distribution of empirical coverage} converges to the Beta distribution in Figure~\ref{fig:beta} as $n_{\textnormal{val}}$ grows. However, for small values of $n_{\textnormal{val}}$, the histogram can have an inflated variance.}
    \label{fig:betabinom}
\end{figure}

The final source of fluctuations is due to the finite number of experiments, $R$.
% The average of the empirical coverages over the $R$ trials should be close to $1-\alpha$; however, because $n$, $R$, and $n_{\textnormal{val}}$ are finite, it will still not be perfect.
% Too large a deviation, however, indicates a problem with the conformal implementation.
% To determine whether a deviation from $1-\alpha$ is sufficiently large to indicate a
% bug or whether it is potentially due to benign fluctuations, we now we analyze the distribution of the average empirical coverage. This is the output that the user will check in practice.
We have now shown that the $C_j$ are independent beta-binomial random variables.
Unfortunately, the distribution of $\overline{C}$---the mean of $R$ independent beta-binomial random variables---does not have a closed form. 
% We do have an formula for its standard deviation, however.
However, we can simulate the distribution easily, and we visualize it for several realistic choices of $R$, $n_{\textnormal{val}}$, and $n$ in Figure~\ref{fig:average-empirical-coverage}.
\begin{figure}[H]
    \centering
    \includegraphics[width=0.49\linewidth]{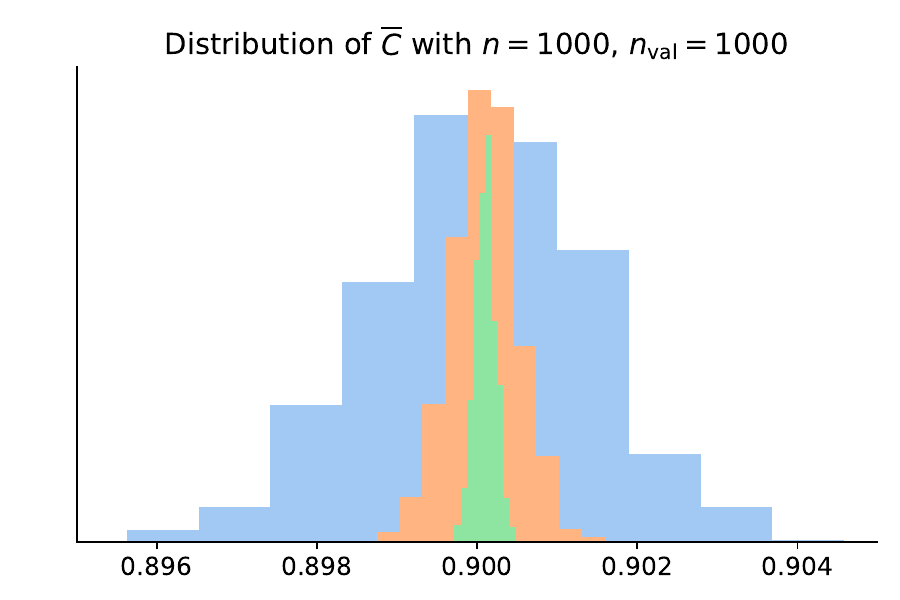}
    \includegraphics[width=0.49\linewidth]{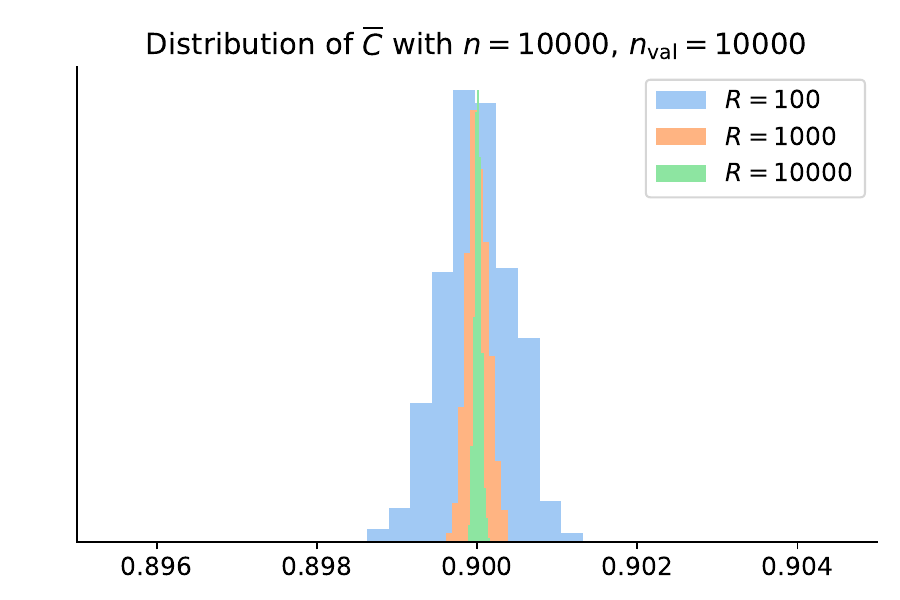}
    \caption{\textbf{The distribution of average empirical coverage} over $R$ trials with $n$ calibration points and $n_{\textnormal{val}}$ validation points. \jupyter{https://github.com/aangelopoulos/conformal-prediction/blob/main/notebooks/correctness_checks.ipynb}}
    \label{fig:average-empirical-coverage}
\end{figure}
Furthermore, we can analytically reason about the tail properties of $\overline{C}$.
Since $\overline{C}$ is the average of $R$ i.i.d. beta-binomial random variables, its mean and standard deviation are
\begin{equation}
    \E\Big(\overline{C}\Big) = 1-\frac{l}{n+1} \; \; \; \text{   and   } \; \; \;
    \sqrt{\mathrm{Var}\Big( \overline{C}} \Big) = \sqrt{\frac{l(n+1-l)(n+n_{\textnormal{val}}+1)}{n_{\textnormal{val}}R(n+1)^2(n+2)}} = \mathcal{O}\left( \frac{1}{\sqrt{R\min(n,n_{\textnormal{val}})}} \right).
\end{equation}

The best way for a practitioner to carefully debug their procedure is to compute $\overline{C}$ empirically, and then cross-reference with Figure~\ref{fig:average-empirical-coverage}. We give code to simulate histograms with any $n$, $R$, and $n_{\textnormal{val}}$ in the linked notebook of Figure~\ref{fig:average-empirical-coverage}.
If the simulated average empirical coverage does not align well with the coverage observed on the real data, there is likely a problem in the conformal implementation.

\section{Theorem and Proof: Coverage Property of Conformal Prediction}
\label{app:coverage-proof}
This is a standard proof of validity for split-conformal prediction first appearing in~\cite{papadopoulos2002inductive}, but we reproduce it here for completeness.
Let us begin with the lower bound.

\begin{theorem}[Conformal calibration coverage guarantee]
Suppose $(X_i, Y_i)_{i = 1,\dots,n}$ and $(X_{\rm test}, Y_{\rm test})$ are i.i.d.
Then define $\hat{q}$ as 
\begin{equation}
    \hat{q} = \inf \left\{q : \frac{|\{i : s(X_i, Y_i) \le q\}|}{n} \geq \frac{\lceil (n+1)(1-\alpha) \rceil}{n} \right\}.
\end{equation}
and the resulting prediction sets as
\begin{equation*}
    \C(X) = \left\{y : s(X,y) \le \hat{q}\right\}.
\end{equation*}
Then,
\begin{equation*}
P\Big(Y_{\rm test} \in \C(X_{\rm test})\Big) \ge 1 - \alpha.
\end{equation*}

\end{theorem}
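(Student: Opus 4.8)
The plan is to reduce the coverage statement to a purely combinatorial fact about the ranks of exchangeable random variables. First I would observe that, since $s(\cdot,\cdot)$ is a fixed function and the pairs $(X_i,Y_i)$ are i.i.d., the calibration-plus-test scores $s_i \triangleq s(X_i,Y_i)$, $i=1,\dots,n+1$, are exchangeable (i.i.d.\ is more than enough). Next, note that membership in the prediction set is exactly a statement about $s_{n+1}$: by the definition of $\T$ in step 4, $Y_{n+1}\in\T(X_{n+1})$ if and only if $s(X_{n+1},Y_{n+1})=s_{n+1}\le\hat q$. So it suffices to show $\P(s_{n+1}\le\hat q)\ge 1-\alpha$.

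Let $k=\lceil(n+1)(1-\alpha)\rceil$. If $k>n$, then the set defining $\hat q$ is empty, so $\hat q=+\infty$, the prediction set is everything, and coverage is trivially $1$; hence assume $k\le n$, in which case $\hat q$ equals the $k$-th smallest value $s_{(k)}$ of the calibration scores $s_1,\dots,s_n$. The key step is the elementary equivalence
\begin{equation*}
    s_{n+1}\le\hat q \quad\Longleftrightarrow\quad s_{n+1}\text{ is among the }k\text{ smallest of }s_1,\dots,s_{n+1},
\end{equation*}
which I would prove by counting: $s_{n+1}\le s_{(k)}$ holds iff at most $k-1$ of the calibration scores lie strictly below $s_{n+1}$, iff at most $k$ of all $n+1$ scores are $\le s_{n+1}$, i.e.\ iff the rank of $s_{n+1}$ among all $n+1$ scores is at most $k$.

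To finish, I would invoke exchangeability: the rank of $s_{n+1}$ within $\{s_1,\dots,s_{n+1}\}$ is uniformly distributed on $\{1,\dots,n+1\}$ (when there are no ties), so
\begin{equation*}
    \P(s_{n+1}\le\hat q)=\frac{k}{n+1}=\frac{\lceil(n+1)(1-\alpha)\rceil}{n+1}\ge 1-\alpha .
\end{equation*}
The main obstacle is the careful handling of ties among the scores. When ties occur, the event $\{s_{n+1}\le\hat q\}$ only gets larger — a tie with the threshold still counts as being in the set — so the rank of $s_{n+1}$ is stochastically no larger than uniform and the bound $1-\alpha$ is preserved; I would make this precise either by breaking ties adversarially or by noting that $\P(\mathrm{rank}\le k)$ can only increase under ties. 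The matching upper bound $1-\alpha+\tfrac{1}{n+1}$ in \eqref{eq:coverage} requires, conversely, that ties be broken by independent randomization so the rank is exactly uniform; that is the modification flagged in the paper's footnote and I would treat it separately.
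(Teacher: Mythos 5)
Your proof is correct and follows essentially the same route as the paper's: reduce coverage to $\P(s_{n+1}\le\hat q)$, identify $\hat q$ with the order statistic $s_{(\lceil(n+1)(1-\alpha)\rceil)}$, and invoke exchangeability to conclude that the rank of $s_{n+1}$ among all $n+1$ scores is uniform. You add a bit more care than the paper (the degenerate case $\lceil(n+1)(1-\alpha)\rceil>n$, an explicit iff chain for the rank equivalence, and a remark on why ties only help the lower bound), but the core argument is identical.
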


This is the same coverage property as \eqref{eq:coverage} in the introduction, but written more formally.
As a technical remark, the theorem also holds if the observations to satisfy the weaker condition of exchangeability; see \cite{vovk2005algorithmic}. 
Below, we prove the lower bound.
\begin{proof}[Proof of Theorem~\ref{thm:conformal_calibration}]
Let $s_i = s(X_i, Y_i)$ for $i = 1,\dots,n$ and $s_{\rm test} = s(X_{\rm test}, Y_{\rm test})$. To avoid handling ties, we consider the case where the $s_i$ are distinct with probability $1$. See~\cite{tibshirani2019conformal} for a proof in the general case.

Without loss of generality we assume the calibration scores are sorted so that $s_1 < \dots < s_n$. In this case, we have that $\hat{q} = s_{\lceil (n+1)(1-\alpha) \rceil}$ when $\alpha \geq \frac{1}{n+1}$ and $\hat{q}=\infty$ otherwise. 
Note that in the case $\hat{q}=\infty$, $\C(X_{\rm test}) = \Y$, so the coverage property is trivially satisfied; thus, we only have to handle the case when $\alpha \geq \frac{1}{n+1}$.
We proceed by noticing the equality of the two events
\begin{equation}
\{Y_{\rm test} \in \C(X_{\rm test})\} = \{s_{\rm test} \le \hat{q}\}.
\end{equation}
Combining this with the definition of $\hat{q}$ yields
\begin{equation}
\{Y_{\rm test} \in \C(X_{\rm test})\} = \{s_{\rm test} \leq s_{\lceil (n+1)(1-\alpha)\rceil}\}.
\end{equation}
Now comes the crucial insight. By exchangeability of the variables $(X_1,Y_1),\dots,(X_{\rm test},Y_{\rm test})$, we have
\begin{equation*}
    P(s_{\rm test} \le s_k) = \frac{k}{n+1}
\end{equation*}
for any integer $k$. In words, $s_{\rm test}$ is equally likely to fall in anywhere between the calibration points $s_1,\dots,s_n$. Note that above, the randomness is over all variables $s_1,\dots,s_{n},s_{\rm test}$

From here, we conclude
\begin{equation*}
    P( s_{\rm test} \le s_{\lceil (n+1)(1-\alpha))\rceil}) = \frac{\lceil (n+1)(1-\alpha)\rceil}{(n+1)} \ge 1-\alpha,
\end{equation*}
which implies the desired result.
\end{proof}

Now we will discuss the upper bound.
Technically, the upper bound only holds when the distribution of the conformal score is continuous, avoiding ties. 
In practice, however, this condition is not important, because the user can always add a vanishing amount of random noise to the score.
We will state the theorem now, and defer its proof.

\begin{theorem}[Conformal calibration upper bound]
\label{thm:upper-bound}
    Additionally, if the scores $s_1, ..., s_n$ have a continuous joint distribution, then
    \begin{equation*}
        P\Big(Y_{\rm test} \in \C(X_{\rm test}, U_{\rm test}, \hat{q})\Big) \leq 1 - \alpha + \frac{1}{n+1}.
    \end{equation*}
\end{theorem}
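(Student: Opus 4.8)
The plan is to recycle the \emph{exact} probability already computed inside the proof of Theorem~\ref{thm:conformal_calibration} and then control the ceiling from above. First I would set $s_i = s(X_i, Y_i)$ for $i = 1,\dots,n+1$ and use the continuity hypothesis to assume the $n+1$ scores are almost surely distinct. In that case the order statistics $s_{(1)} < \cdots < s_{(n)}$ of the calibration scores are unambiguous, $\hat q$ equals the $k$-th of them with $k = \lceil (n+1)(1-\alpha) \rceil$, the randomization variable $U_{n+1}$ never enters, and exactly as in the lower-bound argument $\{Y_{n+1} \in \T(X_{n+1}, U_{n+1}, \hat q)\} = \{s_{n+1} \le \hat q\} = \{s_{n+1} \le s_{(k)}\}$.

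The key step is the same exchangeability observation used for the lower bound, but now exploited as a two-sided equality. By symmetry of $(X_1,Y_1),\dots,(X_{n+1},Y_{n+1})$ — hence of $s_1,\dots,s_{n+1}$ — the rank of $s_{n+1}$ among all $n+1$ scores is uniformly distributed on $\{1,\dots,n+1\}$, and (since there are no ties) the event $\{s_{n+1} \le s_{(k)}\}$ is precisely the event that this rank is at most $k$. Therefore
\[
\P\big(Y_{n+1} \in \T(X_{n+1}, U_{n+1}, \hat q)\big) = \P\big(s_{n+1} \le s_{(k)}\big) = \frac{k}{n+1} = \frac{\lceil (n+1)(1-\alpha)\rceil}{n+1}.
\]

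To finish I would apply the elementary bound $\lceil t \rceil < t + 1$ at $t = (n+1)(1-\alpha)$, which gives $\lceil (n+1)(1-\alpha)\rceil < (n+1)(1-\alpha) + 1$ and hence
\[
\P\big(Y_{n+1} \in \T(X_{n+1}, U_{n+1}, \hat q)\big) = \frac{\lceil (n+1)(1-\alpha)\rceil}{n+1} < 1 - \alpha + \frac{1}{n+1},
\]
even with strict inequality. There is no serious obstacle beyond bookkeeping; the one point to handle with care is the role of the continuity assumption, which is exactly what guarantees (i) that $\hat q$ coincides with the order statistic $s_{(k)}$, (ii) that the rank of $s_{n+1}$ is genuinely \emph{uniform} — rather than only stochastically no larger than uniform — on $\{1,\dots,n+1\}$, and (iii) that the coverage event is $\{s_{n+1}\le s_{(k)}\}$ with no tie-breaking correction. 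I would add a remark that when ties have positive probability these become inequalities pointing the wrong way, and that a randomized tie-break (the purpose of $U_{n+1}$) is what restores the upper bound; details are in the references cited above.
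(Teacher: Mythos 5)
Your proof is correct. The paper itself does not give a self-contained argument for this upper bound---it simply points to Theorem~2.2 of Lei et al.\ (2018)---so your write-up actually fills a gap rather than retracing the paper's steps. Your route is the natural one and matches the cited reference: you reuse the exchangeability/rank-uniformity observation from the lower-bound proof of Theorem~\ref{thm:conformal_calibration} to get the exact equality $P\big(s_{n+1}\le s_{(k)}\big)=k/(n+1)$ with $k=\lceil(n+1)(1-\alpha)\rceil$ (valid once ties have probability zero), and then apply the elementary bound $\lceil t\rceil< t+1$. You correctly identify what the continuity hypothesis buys: it upgrades the lower-bound inequality to an exact identity, renders the tie-breaking variable $U_{n+1}$ irrelevant, and pins $\hat q$ to the order statistic $s_{(k)}$. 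Two small points worth flagging for completeness: (i) if $\alpha<1/(n+1)$ then $k>n$, in which case $\hat q$ is $+\infty$ (not an order statistic of the calibration scores), coverage is exactly $1$, and the bound still holds trivially since $1-\alpha+1/(n+1)>1$; (ii) you in fact obtain the strict inequality $<$, which is slightly stronger than the stated $\le$.
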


\begin{proof}
    See Theorem 2.2 of 
    \cite{lei2018distribution}.
\end{proof}

\end{document}